\newtheorem{corollary}{Corollary}
\newtheorem{theorem}{Theorem}
\newtheorem{proposition}{Proposition}
\newtheorem{conjecture}{Conjecture}
\newtheorem{lemma}{Lemma}
\newtheorem{assumption}{Assumption}
\newtheorem{definition}{Definition}
\newcommand{\bv}[1]{\bm{#1}} % bold vector 
\newcommand{\bs}[1]{\boldsymbol{#1}} % bold symbol
\def\1{\bm{1}}
\def\va{{\bm{a}}}
\def\vb{{\bm{b}}}
\def\vc{{\bm{c}}}
\def\vf{{\bm{f}}}
\def\vg{{\bm{g}}}
\def\vh{{\bm{h}}}
\def\vs{{\bm{s}}}
\def\vu{{\bm{u}}}
\def\vv{{\bm{v}}}
\def\vw{{\bm{w}}}
\def\vx{{\bm{x}}}
\def\vy{{\bm{y}}}
\def\vz{{\bm{z}}}
\def\mA{{\bm{A}}}
\def\mB{{\bm{B}}}
\def\mC{{\bm{C}}}
\def\mI{{\bm{I}}}
\def\mP{{\bm{P}}}
\def\mQ{{\bm{Q}}}
\def\mS{{\bm{S}}}
\def\mU{{\bm{U}}}
\def\mV{{\bm{V}}}
\DeclareMathAlphabet{\mathsfit}{\encodingdefault}{\sfdefault}{m}{sl}
\SetMathAlphabet{\mathsfit}{bold}{\encodingdefault}{\sfdefault}{bx}{n}
\def\gA{{\mathcal{A}}}
\def\gR{{\mathcal{R}}}
\def\gS{{\mathcal{S}}}
\newcommand{\R}{\mathbb{R}}
\DeclareMathOperator*{\argmin}{arg\,min}
\newcommand{\T}{\top}
\title{How do Minimum-Norm Shallow Denoisers Look in Function Space?}
\author{%
 Chen Zeno \\ %\thanks{Correspondence:  \texttt{chenzeno@campus.technion.ac.il}}\\
 Electrical and Computer Engineering \\ Technion
 \And
 Greg Ongie \\
 Department Mathematical and Statistical Sciences \\
 Marquette University 
 \And
 Yaniv Blumenfeld, Nir Weinberger, Daniel Soudry
 \\
 Electrical and Computer Engineering \\ Technion 
 \And 
 \texttt{\{chenzeno,yanivbl\}@campus.technion.ac.il, gregory.ongie@marquette.edu}\\ \texttt{nirwein@technion.ac.il, daniel.soudry@gmail.com}
}
\begin{document}

\maketitle

\begin{abstract}
Neural network (NN) denoisers are an essential building block in many common tasks, ranging from image reconstruction to image generation. However, the success of these models is not well understood from a theoretical perspective. In this paper, we aim to characterize the functions realized by shallow ReLU NN denoisers --- in the common theoretical setting of interpolation (i.e., zero training loss) with a minimal representation cost (i.e., minimal $\ell^2$ norm weights). First, for univariate data, we derive a closed form for the NN denoiser function, find it is contractive toward the clean data points, and prove it generalizes better than the empirical MMSE estimator at a low noise level. Next, for multivariate data, we find the NN denoiser functions in a closed form under various geometric assumptions on the training data: data contained in a low-dimensional subspace, data contained in a union of one-sided rays, or several types of simplexes. These functions decompose into a sum of simple rank-one piecewise linear interpolations aligned with edges and/or faces connecting training samples. 
We empirically verify this alignment phenomenon on synthetic data and real images.
\end{abstract} 

\section{Introduction}
The ability to reconstruct an image from a noisy observation has been studied extensively in the last decades, as it is useful for many practical applications (e.g., \citet{hasinoff2010noise}). In recent years, Neural Network (NN) denoisers commonly replace classical expert-based approaches as they achieve substantially better results than the classical approaches
(e.g., \citet{zhang2017beyond}). Beyond this natural usage, NN denoisers also serve as essential building blocks in a variety of common computer vision tasks, such as solving inverse problems \citep{zhang2021plug} and image generation \citep{song2019generative,ho2020denoising}. To better understand the role of NN denoisers in such complex applications, we first wish to theoretically understand the type of solutions they converge to. 

In practice, when training denoisers, we sample multiple noisy samples for each clean image and minimize the Mean Squared Error (MSE) loss for recovering the clean image. Since we sample numerous noisy samples per clean sample, the number of training samples is typically larger than the number of parameters in the network. Interestingly, even in such an under-parameterized regime, the loss has multiple global minima corresponding to distinct denoiser functions which achieve zero loss on the observed data. To characterize these functions, we study, similarly to previous works \citep{savarese2019infinite,Ongie2020A}, the shallow NN solutions that interpolate the training data with minimal representation cost, i.e., where the $\ell^2$-norm of the weights (without biases and skip connections) is as small as possible. This is because we converge to such \textit{min-cost} solutions when we minimize the loss with a vanishingly small $\ell^2$ regularization on these weights.   

We first examine the univariate input case: building on existing results \citep{hanin2021ridgeless}, we characterize the min-cost interpolating solution and its generalization to unseen data. Next, we aim to extend this analysis to the multivariate case. However, this is challenging, since, to the best of our knowledge, there are no results that explicitly characterize these min-cost solutions for general multivariate shallow NNs --- except in two basic cases. In the first case, the input data is co-linear \citep{ergen2021convex}. In the second case, the input samples are identical to their target outputs, so the trivial min-cost solution is the identity function. The NN denoisers' training regime is `near' the second case: there, the input samples are noisy versions of the clean target outputs. Interestingly, we find that this regime leads to non-trivial min-cost solutions far from identity --- even with an infinitesimally small input noise. We analytically investigate these solutions here. 

\paragraph*{Our Contributions.} 
We study the NN solutions in the setting of interpolation of noisy samples with min-cost, in a practically relevant ``low noise regime'' where the noisy samples are well clustered. In the univariate case, 
\begin{itemize}[leftmargin=5mm]\itemsep.8pt
    \item We find a closed-form solution for the minimum representation cost NN denoiser. Then, we prove this solution generalizes better than the empirical minimum MSE (eMMSE) denoiser.
    \item We prove this min-cost NN solution is contractive toward the clean data points, that is, applying the denoiser necessarily reduces the distance of a noisy sample to one of the clean samples.  
\end{itemize}

In the multivariate case, 
    \begin{itemize}[leftmargin=5mm]\itemsep.8pt
    \item We derive a closed-form solution for the min-cost NN denoiser in multivariate case under various assumptions on the geometric configuration of the clean training samples. To the best of our knowledge, this is the first set of results to explicitly characterize a min-cost interpolating NN in a non-basic multivariate setting. 
    \item We illustrate a general alignment phenomenon of min-cost NN denoisers in the multivariate setting: the optimal NN denoiser decomposes into a sum of simple rank-one piecewise linear interpolations aligned with edges and/or faces connecting clean training samples.
\end{itemize}

\section{Preliminaries and problem setting}
\paragraph{The denoising problem.}
Let $\bv y \in \mathbb{R}^{d}$ be a noisy observation of $\bv x \in \mathbb{R}^{d}$, such that  $ \bv y = \bv x + \bs \epsilon$
where $\bv x$ and $\bs \epsilon$ are statistically independent, and $\mathbb{E}[\bs \epsilon]=\bv 0$. Commonly, this noise is Gaussian with covariance matrix $\sigma^2 \mathrm{\bv I}$. The ultimate goal of a denoiser $\hat{\bv x}\left(\bv y\right)$ is to minimize the MSE loss over the joint probability distribution of the data and the noisy observation (``population distribution''), i.e., to minimize
\begin{align}
\mathcal{L}\left(\hat{\bv x}\right)=\mathbb{E}_{\bv x,\bv y}\left\Vert \hat{\bv x}\left(\bv y\right)-\bv x\right\Vert ^{2}\,. \label{eq:mse_loss}
\end{align}
The well-known optimal solution for \eqref{eq:mse_loss} is the minimum mean square error (MMSE) denoiser, i.e.,
\begin{align}
    \hat{\bv x}^*(\bv y) =\mathbb{E}_{\bv x|\bv y}[\bv x\mid\bv y] 
    \in
    {\arg \min}_{\hat{\bv x}\left(\bv y\right)} \mathbb{E}_{\bv x,\bv y} \left \Vert \bv x - \hat{\bv x}\left(\bv y\right) \right  \Vert^2 \,. \label{eq:MMSE}
\end{align}
Since we do not have access to the distribution of the data, and hence not to the posterior distribution, we rely on a finite amount of clean data $\{\bv x_n\}_{n=1}^N$ in order to learn a good approximation for the MMSE estimator. One approach is to assume an empirical data distribution and derive the optimal solution of \eqref{eq:mse_loss}, i.e., the empirical minimum mean square error (eMMSE) denoiser,
\begin{align}
\hat{\bv x}^{\mathrm{eMMSE}}\left(\bv y\right) \in \underset{\hat{\bv x}\left(\bv y\right)}{\arg \min}\,\frac{1}{N}\sum_{n=1}^{N}\mathbb{E}_{{\bv y|\bv x}_{n}}\left\Vert \hat{\bv x}\left(\bv y\right)-\bv x_{n}\right\Vert ^{2}\,.
\end{align}
If the noise is Gaussian with a covariance of $\sigma^2\mathrm{\bv I}$, an explicit solution to the eMMSE is given by 
\begin{align}
\hat{\bv x}^{\mathrm{eMMSE}}\left(\bv y\right)=\frac{\sum_{n=1}^{N}\bv x_{n}\exp\left(-\frac{\left\Vert \bv y-\bv x_{n}\right\Vert ^{2}}{2\sigma^{2}}\right)}{\sum_{n=1}^{N}\exp\left(-\frac{\left\Vert \bv y-\bv x_{n}\right\Vert ^{2}}{2\sigma^{2}}\right)}\,. \label{eq:eMMSE}
\end{align}
An alternative approach to computing the eMMSE directly is to draw $M$ noisy samples for each clean data point, as  ${\bv y}_{n,m}=\bv x_{n}+\boldsymbol{\epsilon}_{n,m}$, where $\bs \epsilon_{n,m} \sim\mathcal{N}\left(\bv 0,\sigma^{2} \bv I\right)$ are independent and identically distributed, and to minimize the following loss function
\begin{align}
\mathcal{L}_{\mathrm{offline},M}\left(\hat{\bv x}\right)=	\frac{1}{MN}\sum_{m=1}^{M}\sum_{n=1}^{N}\left\Vert \hat{\bv x}\left(\bv y_{n,m}\right)-\bv x_{n}\right\Vert ^{2}\label{eq:offline loss therory}\,.
\end{align} 
\paragraph{Denoiser model and algorithms.}
In practice, we approximate the optimal denoiser $\hat{\bv x}\left(\bv y\right)$ using a parametric model $\bv h_{\bs \theta}\left(\bv y\right)$, typically a NN. We focus on a shallow ReLU network model with a skip connection of the form
\begin{equation}\label{eq:htheta}
    \bv h_\theta(\bv y) = \sum_{k=1}^K \va_k [\vw_k^\T \bv y + b_k]_+ + \mV \bv y + \vc
\end{equation}
where $\theta = ((\theta_k)_{k=1}^K; \vc,\mV)$ with $\theta_k=(b_k,\va_k,\vw_k)\in \R \times \R^d \times \R^d$ and $\vc \in \R^d, \mV \in \R^{d\times d}$. 
\linebreak
We train the model on a finite set of clean data points $\left\{\bv x_n\right\}_{n=1}^{N}$. 
The common practical training method is based on an online approach. First, we sample a random batch (with replacement) from the data $\mathcal{B} \subseteq \{\bv x_n\}_{n=1}^{N}$. Then, for each clean data point $\bv x_n \in \mathcal{B}$, we draw a noisy sample $\bv y_n = \bv x_n + \bs \epsilon_n$, where $\bs \epsilon_n\sim \mathcal{N}\left(\bv 0,\sigma^2\mathrm{\bv I}\right)$ are independent of the clean data points and other noise samples. At each iteration $t$ out of $T$ iterations, we update the model parameters according to a stochastic gradient descent rule, with a vanishingly small regularization term $\lambda C\left(\bs \theta \right)$, that is,
\begin{align}
    \bs \theta_{t+1} =\bs \theta_{t} -\eta\nabla_{\bs \theta_t}\frac{1}{|\mathcal{B}|}\sum_{n\in \mathcal{B}}\left\Vert \bv h_{\boldsymbol{\theta}_t}\left(\bv y_{n}\right)-\bv x_{n}\right\Vert ^{2} - \eta\lambda\nabla_{\bs \theta_t}C\left(\bs \theta_t \right)\,. \label{eq:pratical_update_rule}
\end{align}
Another training method \citep{pmlr-v32-cheng14} is based on an offline approach. We sample $M$ noisy sample for each clean data point and minimize \eqref{eq:offline loss therory} plus a regularization term 
\begin{align}
\mathcal{L}_{\mathrm{offline},M}\left(\boldsymbol{\theta}\right)=	\frac{1}{MN}\sum_{m=1}^{M}\sum_{n=1}^{N}\left\Vert \bv h_{\theta}\left(\bv y_{n,m}\right)-\bv x_{n}\right\Vert ^{2} + \lambda C\left(\bs \theta \right)\label{eq:offline loss}\,.
\end{align} 
Similarly to previous works \citep{savarese2019infinite,Ongie2020A},
we assume an $\ell^2$ penalty on the weights, but not on the biases and skip connections, i.e.,
\begin{align}
    C(\theta) = \frac{1}{2}\sum_{k=1}^{K}\left(\|\va_k\|^2+\|\vw_k\|^2\right) \,.
\end{align}
\paragraph{Low noise regime.} In this paper, we study the solution of the NN denoiser when the clusters of noisy samples around each clean point are well-separated, a setting which we refer to as the  ``low noise regime''. This is a rather relevant regime since denoisers are practically used when the noise level is mild. Indeed, common image-denoising benchmarks test on low (but not negligible) noise levels. For instance, in the commonly used denoising benchmark BSD68 \citep{cite-key}, the noise level $\sigma = 0.1$ is in the low noise regime.\footnote{The minimum distance between two images in BSD68 is about $97$ while the image resolution is ${d=481 \!\times\! 321}$. Also, the norm of the noise concentrates around the value of ${\sqrt{d}\sigma\!\approx\! \sqrt{481\cdot 321}\cdot 0.1 \!\approx\! 40\!<\!97}$. Therefore, the clusters of noisy samples around each clean point are generally well-separated.} Moreover, this setting is important, for example, in diffusion-based image generation, since at the end of the reverse denoising process, new images are sampled by denoising smaller and smaller noise levels.\footnote{Interestingly, it was suggested that the ``useful'' part of the diffusion dynamics happens only below some critical noise level \citep{raya2023spontaneous}.}

\section{Basic properties of neural network denoisers}\label{sec:what is the neural network denoiser solution} 
\paragraph{Offline v.s. online NN solutions.}
\begin{figure}[t!]
%\vspace{-1em}  
    \centering
  \begin{minipage}{.95\textwidth}  
    \centering
  \begin{minipage}[t!]{0.5\linewidth}
    \vspace{.8em}  
    {\includegraphics[width=.95\linewidth]{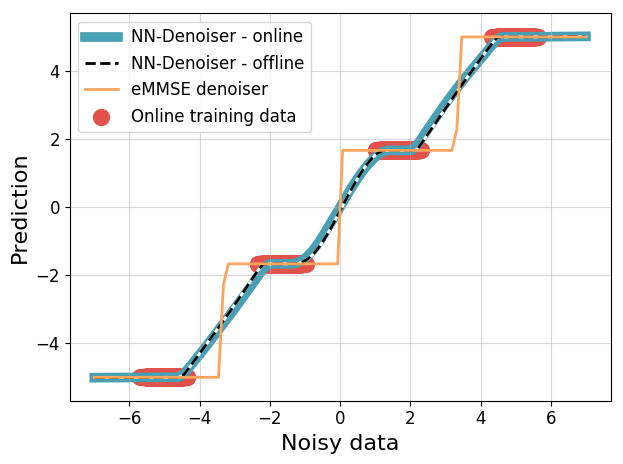}}
  \end{minipage}
  \hfill
  \begin{minipage}[t!]{0.48\linewidth}
    \caption{    \label{Figure:neural_network_denoiser_online_offline}
    \small{\textbf{NN denoiser vs eMMSE denoiser.} We trained a one-hidden-layer ReLU network with a skip connection on a denoising task. The clean dataset has four points equally spaced in the interval \([-5,5]\), and the noisy samples are generated by adding zero-mean Gaussian noise with \(\sigma = 1.5\). We use $\lambda=10^{-5}$ in both setting. The Figure shows the denoiser output as a function of its input for: (1) NN denoiser trained online using \eqref{eq:pratical_update_rule} for $100K$ iterations, (2) NN denoiser trained offline using \eqref{eq:offline loss} with $M=9000$ and $20K$ epochs, and (3) the eMMSE denoiser \eqref{eq:eMMSE}.}}
  \end{minipage}
  \end{minipage}
%\vspace{-.5em}  
\end{figure}

NN denoisers are traditionally trained in an online fashion  \eqref{eq:pratical_update_rule}, using  a finite amount of $T$ iterations. Consequently, only a finite number of noisy samples are used for each clean data point. We empirically observe that the solutions in the offline and online settings are similar. Specifically, in the univariate case, we show in Figure \ref{Figure:neural_network_denoiser_online_offline} that denoisers based on offline and online loss functions converge to indistinguishable solutions. 
For the multivariate case, we observe (Figure \ref{Figure:online setting vs offline setting} in Appendix \ref{appndix:additional simulations}) that the offline and online solutions achieve approximately the same test MSE when trained on a subset of the MNIST dataset. 
The comparison is made using the same number of iterations for both training methods, while using much less noisy samples in the offline setting. Evidently, this lower number of samples does not significantly affect the generalization error. Hence, in the rest of the paper, we focus on offline training (i.e., minimizing the offline loss $\mathcal{L}_{\mathrm{offline},M}$), as it defines an explicit loss function with solutions that can be theoretically analyzed, as in \citep{savarese2019infinite,Ongie2020A}.
\
\paragraph{The empirical MMSE denoiser.}
The law of large numbers implies that the denoiser minimizing the offline loss $\mathcal{L}_{\mathrm{offline},M}$ approaches the eMMSE estimator in the limit of infinitely many noisy samples, 
\begin{align}
    \hat{\bv x}^{\mathrm{eMMSE}}\left(\bv y\right) &\in {\arg \min}_{\hat{\bv x}} \lim_{M\to\infty}\mathcal{L}_{\mathrm{offline},M}\left(\hat{\bv x}\right)\,.\label{eq:offline loss large M}
\end{align}
Therefore, it may seem that for a reasonable number of noise samples $M$, a large enough model, and small enough regularization, the denoiser we get by minimizing the offline loss \eqref{eq:htheta} will also be similar to the eMMSE estimator. However, Figure \ref{Figure:neural_network_denoiser_online_offline} shows that the eMMSE solution and the NN solutions (both online and offline) are quite different. The eMMSE denoiser has a much sharper transition and maps almost all inputs to a value of a clean data point. This is because in the case of low noise the eMMSE denoiser \eqref{eq:eMMSE} approximates the one nearest-neighbor ($1$-NN) classifier, i.e.,
\begin{align}
\lim_{\sigma\rightarrow0^+}\hat{\bv x}^{\mathrm{eMMSE}}\left(\bv y\right)=	\underset{\bv x\in\left\{ \bv x_{i}\right\} _{i=1}^{N}}{\arg \min}\left\Vert \bv y-\bv x\right\Vert \,.
\end{align}
In contrast, the NN denoiser maps each noisy sample to its corresponding clean sample only in a limited ``noise ball'' around the clean point, and interpolates near-linearly between the ``noise balls''.  Hence, we may expect that the smoother NN denoiser typically generalizes better than the eMMSE denoiser. We prove that this is indeed true for the univariate case in Section \ref{sec:oned}.

Why the NN denoiser does not converge to the eMMSE denoiser? Note that the limit in \eqref{eq:offline loss large M} is not practically relevant for the low-level noise regime, since we need an exponentially large $M$ in order to converge in this limit. For example, in the case of univariate Gaussian noise, we have that $P\left(|\epsilon|>t\right) \le 2 \exp({-\frac{t^2}{2\sigma^2}}), \, \forall t>0$. Therefore, during training, we effectively observe only noisy samples that are in a bounded interval of size $2\sigma\sqrt{\log M}$ around each clean sample (see Figure \ref{Figure:neural_network_denoiser_online_offline}). In other words, in the low-noise regime and for non-exponential $M$, there is no way to distinguish if the noise is sampled from some distribution with limited support of from  a Gaussian distribution. The denoiser minimizing the loss with respect to a bounded-support distribution can be radically different from the eMMSE denoiser in the regions outside the ``noise balls'' surrounding the clean samples, where the denoiser function is not constrained by the loss. This leads to a large difference between the NN denoiser and the MMSE estimator.

 Alternatively, one may suggest that the NN denoiser does not converge to the eMMSE denoiser due to an approximation error (i.e., the shallow NN's model capacity is too small to approximate the MMSE denoiser). Nevertheless, we provide empirical evidence indicating it is not the case. Specifically, recall that in the low noise regime, the eMMSE denoiser tends to the nearest-neighbor classifier, and such a solution does not generalize well to test data. Thus, if the NN denoiser would have converged to the eMMSE solution, then its test error would have increased with the network size, in contrast to what we observe in Figure~\ref{Figure:test loss vs layer width}  (Appendix~\ref{appndix:additional simulations}). 

Therefore, in order to approximate the eMMSE with a NN, it seems we must have an exponentially large $M$. Alternatively, we may converge to the eMMSE if we use a loss function marginalized over the Gaussian noise. This idea was previously suggested by \citet{pmlr-v32-cheng14}, with the goal of effectively increasing the number of noisy samples and thus improving the training performance of denoising autoencoders. Therein, this improvement was obtained by approximating the marginalized loss function by a Taylor series expansion. However, for shallow denoisers, we may actually obtain an explicit expression for this marginalized loss, without any approximation. Specifically, if we assume for simplicity, that the network does not have a linear unit ($\mV=\bv 0$) and its bias terms are zero ($\vc=\bv 0, b_k=0$), then the marginalized loss for Gaussian noise, derived in Appendix \ref{appndix:marginalize loss}, is given by
\begin{align}\mathcal{L}\left(\boldsymbol{\theta},\sigma\right)&=	\mathbb{E}_{\bv x,\bv y}\left\Vert \bv h_{\theta}\left(\bv y\right)-\bv x\right\Vert ^{2} = \mathbb{E}_{\bv x,\bv y}\left\Vert \sum_{k=1}^{K}\bv a_{k}[\bv w_{k}^{\top}\bv y]_+-\bv x\right\Vert ^{2} = \mathbb{E}_{\bv x}\mathbb{E}_{{\bv y|\bv x}}\left\Vert \sum_{k=1}^{K}\bv a_{i}[\bv w_{k}^{\top}\bv y]_+-\bv x\right\Vert ^{2} \nonumber \\
    &= \mathbb{E}_{\bv x}\left\Vert \sum_{k=1}^{K}\bv a_{k} \tilde{\phi}\left(\bv {\hat{w}}_{k}^{\top}\bv x,\left\Vert \bv w_{k}\right\Vert ,\sigma\right)-\bv x\right\Vert ^{2}+\sum_{i=1}^{K}\sum_{j=1}^{K}\bv a_{i}^{\top}\bv a_{j}\mathbf{H}_{ij}\left(\bv w_{i},\bv w_{j},\sigma^{2}\right)\,,\label{eq:marginalized denoiser}
\end{align}
where $\bv w_k = \bv {\hat{w}}_k \left\Vert \bv w_{k}\right\Vert$ and $\mathbf{H}\succeq0$, $\tilde{\phi}\left(\cdot\right)$ are defined in Appendix \ref{appndix:marginalize loss}. NN denoisers trained over this loss function will thus tend to the eMMSE solution as the network size is increased. However, as we explained above, this is not necessarily desirable, so we only mention \eqref{eq:marginalized denoiser} to show exact marginalization is feasible.

\paragraph{Regularization biases toward specific neural network denoisers.}
To further explore the converged solution for offline training, we note that the offline loss function 
$\mathcal{L}_{\mathrm{offline},M}\left(\mathbf{\boldsymbol{\theta}}\right)$ allows the network to converge to a zero-loss solution. This is in contrast to online training for which each batch leads to new realizations of noisy samples, and thus the training error is never exactly zero. Specifically, consider the low noise regime (well-separated noisy clusters). Then, the network can perfectly fit all the noisy samples using a finite number of neurons (see Section \ref{sec:oned} for a more accurate description in the univariate case). Importantly, there are multiple ways to cluster the noisy data points with such neurons, and so there are multiple global training loss minima that the network can achieve with zero loss, each with a different generalization capability. In contrast to the standard case considered in the literature, this holds even in the under-parameterized case (where $NM$, the total number of noisy samples, is larger than the number of parameters).  

Since there are many minima that perfectly fit the training data, we converge to specific minima which also minimize the $\ell^2$ regularization we use (even though we assumed it is vanishing). Specifically, in the limit of vanishing regularization $C\left(\theta\right)$, the minimizers of $\mathcal{L}_{\mathrm{offline},M}\left(\mathbf{\boldsymbol{\theta}}\right)$ also minimize the representation cost.
\begin{definition} \label{def:representation cost}
Let $\bv h_\theta: \R^d \rightarrow \R^d$ denote a shallow ReLU network of the form \eqref{eq:htheta}.
For any function $\bv f:\R^d\rightarrow \R^d$ realizable as a shallow ReLU network, we define its \textbf{representation cost} as 
\begin{align}
R(\bv f)  = \inf_{\theta: \, \bv f = \bv h_\theta} C\left(\theta\right)  
 = \inf_{\theta} \sum_{k=1}^K \|\va_k\|~~\mathrm{s.t.}~~\|\bv w_k\| = 1~\forall k, \bv f = \bv h_\theta\,, \label{eq: R norm}
\end{align}
and a \textbf{minimizer} of this cost, i.e. the `min-cost' solution, as 
\begin{align}
    \bv f^*\in\mathrm{arg}\!\min_{\bv f} R(\bv f)~~s.t.~{\bv f}(\bv y_{n,m}) = {\bv x}_n~~\forall n, m\,, \label{eq:mainopt}
\end{align}
\end{definition}
where the second equality in \eqref{eq: R norm} holds due to the  $1$-homogeneity of the ReLU activation function, and since the bias terms are not regularized (see \cite[Appendix A]{savarese2019infinite}). In the next sections, we examine which function we obtain by minimizing the representation cost $R(\bv f)$ in various settings.
\section{Closed form solution for the NN denoiser function --- univariate data} \label{sec:oned}
In this section, we prove that  NN denoisers that minimize $R(f)$  for univariate data have the specific piecewise linear form observed in Figure \ref{Figure:neural_network_denoiser_online_offline}, and we discuss the properties of this form.
We observe $N$ clean univariate data points $\left\{ x_{n}\right\} _{n=1}^{N}$, s.t. $-\infty <x_1 < x_2 < \cdots <  x_N < \infty$, and $M$ noisy samples (drawn from some known distribution) for each clean data point, such that $ y_{n,m} = x_{n} + \epsilon_{n,m}$.
We denote by $\epsilon^{\mathrm{max}}_n$ the maximal noise seen for data point $x_n$, and by $\epsilon^{\mathrm{min}}_n$ the minimal noise seen for data point $x_n$, i.e., 
\begin{align}
    \epsilon^{\mathrm{max}}_n \equiv \max_m \epsilon_{n,m}, \quad
    \epsilon^{\mathrm{min}}_n \equiv \min_m \epsilon_{n,m}\,,
\end{align}
and assume the following,
\begin{assumption} \label{assumption:seperate_datapoints}
Assume the data $\left\{ x_{n}\right\}_{n=1}^{N}$ is \textit{well-separated} after the addition of noise, i.e., 
\begin{align}
    \forall n \in [N-1]:\, x_n+\epsilon^{\mathrm{max}}_n < x_{n+1}+\epsilon^{\mathrm{min}}_{n+1}\,,
\end{align}
and $\epsilon^{\mathrm{max}}_n>0\,, \epsilon^{\mathrm{min}}_n<0$.
\end{assumption}
So we can state the following,
\begin{proposition} \label{proposition:1d_denoiser}
For all datasets such that Assumption \ref{assumption:seperate_datapoints} holds, the unique minimizer of $R(f)$ is
\begin{align}
    f^{*}_{1D}(y) = \begin{cases} 
    x_1, & y< x_1 + \epsilon^{\mathrm{min}}_{1} \\
    x_n, & x_n+\epsilon^{\mathrm{min}}_n\le y \le x_n+\epsilon^{\mathrm{max}}_n \\
    \frac{x_{n+1}-x_n}{x_{n+1}+\epsilon^{\mathrm{min}}_{n+1}-\left(x_n+\epsilon^{\mathrm{max}}_n\right)}\left(y-\left(x_n+\epsilon^{\mathrm{max}}_n\right)\right) + x_n, & x_n+\epsilon^{\mathrm{max}}_n<y<x_{n+1}+\epsilon^{\mathrm{min}}_{n+1} \\
    x_N, & y> x_N + \epsilon^{\mathrm{max}}_{N}
    \end{cases}\,.\label{eq:offline_denoiser1d}
\end{align}
\end{proposition}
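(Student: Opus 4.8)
The plan is to reduce \eqref{eq:mainopt} to a one–dimensional variational problem over derivatives, solve it by a matching lower bound and explicit construction, and then obtain uniqueness from an equality analysis. First I would invoke the known characterization of the representation cost of univariate shallow ReLU networks with an (unregularized) skip connection and bias, as in \citet{hanin2021ridgeless} (see also \citet{savarese2019infinite}): for any $f:\R\to\R$ realizable as in \eqref{eq:htheta}, one has $R(f)=\mathrm{TV}(f')$, the total variation of the a.e.\ derivative of $f$ (with $R(f)=\infty$ when $f'$ is not of bounded variation). The point is that the affine degrees of freedom $\mV y+\vc$ exactly cancel the endpoint-slope penalty present in the skip-free formula, since $\mathrm{TV}(f')\ge |f'(-\infty)-f'(+\infty)|$ always. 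Thus \eqref{eq:mainopt} becomes: minimize $\mathrm{TV}(f')$ over $f$ with $f(y_{n,m})=x_n$ for all $n,m$.

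For the lower bound, write $l_n=x_n+\epsilon^{\mathrm{min}}_n$ and $r_n=x_n+\epsilon^{\mathrm{max}}_n$; Assumption~\ref{assumption:seperate_datapoints} gives $l_1<r_1<l_2<\cdots<l_N<r_N$ with $f(l_n)=f(r_n)=x_n$, and set $\sigma_n:=\frac{x_{n+1}-x_n}{l_{n+1}-r_n}>0$ (which is exactly the slope appearing in \eqref{eq:offline_denoiser1d}). On each cluster interval $[l_n,r_n]$ we have $\int_{l_n}^{r_n}f'=0$, so $f'\le 0$ on a positive-measure subset there; on each bridge $[r_n,l_{n+1}]$ we have $\int_{r_n}^{l_{n+1}}f'=x_{n+1}-x_n$, so $f'\ge\sigma_n$ on a positive-measure subset there. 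Choosing checkpoints $q_n$ in the cluster intervals with $f'(q_n)\le 0$ and $p_n$ in the bridges with $f'(p_n)\ge\sigma_n$, ordered $q_1<p_1<q_2<\cdots<q_N$, total variation dominates the variation along this finite increasing chain, so $\mathrm{TV}(f')\ge\sum_{n=1}^{N-1}\big[(f'(p_n)-f'(q_n))+(f'(p_n)-f'(q_{n+1}))\big]\ge 2\sum_{n=1}^{N-1}\sigma_n$ (with the usual $\mathrm{ess\,sup}/\mathrm{ess\,inf}$ bookkeeping for a BV derivative).

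For achievability, the function $f^{*}_{1D}$ of \eqref{eq:offline_denoiser1d} is piecewise linear with derivative $0$ on the tails and on each cluster interval and $\sigma_n$ on bridge $n$; it is realized by $2(N-1)$ ReLU neurons (two per bridge, of slopes $\pm\sigma_n$) plus a bias, so $R(f^{*}_{1D})\le 2\sum_{n=1}^{N-1}\sigma_n$, matching the lower bound, and $f^{*}_{1D}$ is a minimizer. For uniqueness, let $f$ be any minimizer, so every inequality above is an equality. Then no checkpoint with $f'(q_n)<0$ can exist, so $f'\ge 0$ on each cluster interval, which with $\int_{l_n}^{r_n}f'=0$ forces $f'=0$ a.e.\ there; symmetrically $f'\le\sigma_n$ on bridge $n$, which with $\int_{r_n}^{l_{n+1}}f'=x_{n+1}-x_n$ forces $f'=\sigma_n$ a.e.\ there; and $f'$ must be constant (hence $0$) on the two infinite tails, else extra variation appears beyond the chain. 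Hence $f'=(f^{*}_{1D})'$ a.e., and since $f$ and $f^{*}_{1D}$ agree at the sample points, $f=f^{*}_{1D}$.

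I expect the main obstacle to be the uniqueness step: converting "equality in the chain bound" into pointwise (essential) control of $f'$ requires care, because the interpolation constraints fix $f$ only at the finitely many noisy samples (so a priori $f$ could oscillate within a cluster between consecutive samples), and because $f'$ is merely a BV function, so one must argue with its essential extrema and jump set rather than pointwise values. A secondary, more routine point is nailing down the identity $R(f)=\mathrm{TV}(f')$ in the presence of the skip connection and unregularized bias, i.e.\ checking that the affine term exactly removes the endpoint-slope contribution of the skip-free representation-cost formula.
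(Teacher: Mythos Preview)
Your proposal is correct and essentially complete; the worries you flag at the end are not real obstacles. The equality analysis you outline already forces $f'=0$ a.e.\ on each cluster interval $[l_n,r_n]$ using only the two extreme constraints $f(l_n)=f(r_n)=x_n$, so the intermediate noisy samples play no role and the ``oscillation inside a cluster'' issue never arises. Likewise, since any feasible $f$ is a shallow ReLU network, $f'$ is a step function with finitely many breakpoints, so the checkpoint argument is elementary once you work with the piecewise-constant values of $f'$ rather than pointwise ones. Your identification $R(f)=\mathrm{TV}(f')$ with the unregularized skip connection is also correct: subtracting a free linear term lets you zero out the $|f'(-\infty)+f'(+\infty)|$ side of the Savarese--Srebro formula.

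The paper's proof takes a much shorter route: it observes that the problem is unchanged if one keeps only the two extreme noisy samples $l_n,r_n$ per cluster, notes that on the resulting $2N$-point dataset the discrete second differences alternate in sign, and then invokes Theorem~1.2 of \citet{hanin2021ridgeless} as a black box to conclude that the min-cost interpolant is unique and equal to the connect-the-dots linear spline, which is exactly $f^*_{1D}$. Your argument is longer but more self-contained: you re-derive from scratch the relevant piece of Hanin's result via the $\mathrm{TV}(f')$ lower bound and its equality case, rather than citing it. The payoff is that your write-up does not depend on the reader knowing the precise statement (and hypotheses) of Hanin's theorem, and it makes transparent exactly where the $2\sum_n\sigma_n$ cost comes from.
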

The proof (which is based on Theorem 1.2. in \citep{hanin2021ridgeless}) can be found in Appendix \ref{appendix:proof_1d_denoiser}. As can be seen from Figure \ref{Figure:neural_network_denoiser_online_offline}, the empirical simulation matches Proposition \ref{proposition:1d_denoiser}. \footnote{Notice that the training points in Figure \ref{Figure:neural_network_denoiser_online_offline} are used in the online setting \eqref{eq:pratical_update_rule} and in the offline setting \eqref{eq:offline loss} we observe less noisy samples.}
Proposition \ref{proposition:1d_denoiser} states that \eqref{eq:offline_denoiser1d} is a closed-form solution for \eqref{eq:offline loss} with minimal representation cost. Notice that the \emph{minimal} number of neurons needed to represent  $f^{*}_{1D}$ using $h_\theta\left(y\right)$ is $2N-2$, which is less than the number of the total training samples $NM$ for $M \ge 2$.

In the case of univariate data, we can prove that the representation cost minimizer $f^{*}_{1D}$  (linear interpolation) generalizes better than the optimal estimator over the empirical distribution (eMMSE) for low noise levels.
\begin{theorem} \label{theorem:generalization of 1d denoiser}
Let $y = x+\epsilon$ where $x \sim p_{x}\left(x\right)$ and $\epsilon \sim \mathcal{N}\left(0,\sigma^2\right)$ where $x$ and $\epsilon$ are statistically independent.
Then for all datasets such that Assumption \ref{assumption:seperate_datapoints} holds, and for all density probability distributions $p_{x}\left(x\right)$ with bounded second moment such that $p_{x}\left(x\right)>0$ for all $x\in\left[\min_{n}{x_{n}},\max_{n}{x_{n}}\right]$, the following holds,
\begin{align*}
\lim_{\sigma\to0^+}\mathrm{MSE}\left(\hat{x}^{\mathrm{eMMSE}}\left(y\right)\right)	> \lim_{\sigma\to0^+} \mathrm{MSE}\left(f^{*}_{1D}\left(y\right)\right)\,.
\end{align*}
\end{theorem}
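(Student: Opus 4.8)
Throughout, I take the training noise and the test noise to share the common level $\sigma$, so that as $\sigma\to0^{+}$ one has $\epsilon^{\mathrm{max}}_n\to0^{+}$ and $\epsilon^{\mathrm{min}}_n\to0^{-}$ for every $n$ (almost surely, for Gaussian noise; in particular $f^{*}_{1D}$ is well defined by Proposition~\ref{proposition:1d_denoiser} on the event that Assumption~\ref{assumption:seperate_datapoints} holds, whose probability tends to $1$). The plan is to evaluate both limits explicitly by passing $\sigma\to0^{+}$ inside the expectation and then comparing the limiting integrands. Writing the test noise as $\epsilon=\sigma z$ with $z\sim\mathcal{N}(0,1)$, the first step is to show that, as $\sigma\to0^{+}$, for Lebesgue-almost every $x$ and every $z$,
\[
\hat{x}^{\mathrm{eMMSE}}(x+\sigma z)\;\longrightarrow\;\argmin_{x'\in\{x_n\}}|x-x'|\qquad\text{and}\qquad f^{*}_{1D}(x+\sigma z)\;\longrightarrow\;\Pi_{[x_1,x_N]}(x),
\]
where $\Pi_{[x_1,x_N]}$ denotes the metric projection onto the interval $[x_1,x_N]$.

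For the eMMSE limit I would use the closed form \eqref{eq:eMMSE}: if $x$ lies strictly in the Voronoi cell of its nearest center $x_{n^{*}}$, then for each $j\ne n^{*}$ the exponent difference $(x+\sigma z-x_j)^2-(x+\sigma z-x_{n^{*}})^2$ tends to the strictly positive number $(x-x_j)^2-(x-x_{n^{*}})^2$, so every softmax weight other than the $n^{*}$-th vanishes and the output converges to $x_{n^{*}}$; the finitely many boundary points between Voronoi cells form a $p_x$-null set. For $f^{*}_{1D}$ I would read the limiting map off \eqref{eq:offline_denoiser1d}: as $\epsilon^{\mathrm{max}}_n,\epsilon^{\mathrm{min}}_n\to0$ the plateaus $[x_n+\epsilon^{\mathrm{min}}_n,x_n+\epsilon^{\mathrm{max}}_n]$ collapse to $\{x_n\}$ and each interpolation slope $\tfrac{x_{n+1}-x_n}{x_{n+1}+\epsilon^{\mathrm{min}}_{n+1}-x_n-\epsilon^{\mathrm{max}}_n}$ tends to $1$, so for $x\in(x_1,x_N)$ the output is within $o(1)$ of its input $x+\sigma z$ and hence converges to $x$, while for $x<x_1$ (resp.\ $x>x_N$) the input eventually enters the left (resp.\ right) clamping region and the output equals $x_1$ (resp.\ $x_N$). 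Both estimators are uniformly bounded by $\max_n|x_n|$ --- the eMMSE output is a convex combination of the centers $x_n$, and $f^{*}_{1D}$ takes values in $[x_1,x_N]$ --- so, writing $\hat g$ for either one, the integrand $|\hat g(x+\sigma z)-x|^2$ is dominated by $2\max_n x_n^2+2x^2$, which is $p_x$-integrable since $p_x$ has a bounded second moment. Dominated convergence then yields
\[
\lim_{\sigma\to0^{+}}\mathrm{MSE}\big(\hat{x}^{\mathrm{eMMSE}}(y)\big)=\mathbb{E}_x\Big[\min_n(x-x_n)^2\Big],\qquad\lim_{\sigma\to0^{+}}\mathrm{MSE}\big(f^{*}_{1D}(y)\big)=\mathbb{E}_x\big[\mathrm{dist}(x,[x_1,x_N])^2\big].
\]

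It then remains to compare these two quantities. Since $\{x_n\}_{n=1}^{N}\subseteq[x_1,x_N]$, for every $x$ one has $\min_n|x-x_n|=\mathrm{dist}(x,\{x_n\})\ge\mathrm{dist}(x,[x_1,x_N])$, so the difference of the two limits equals $\mathbb{E}_x\big[\min_n(x-x_n)^2-\mathrm{dist}(x,[x_1,x_N])^2\big]\ge0$. For strictness (in the nontrivial case $N\ge2$) observe that on the open interval $(x_1,x_2)$ --- which has positive Lebesgue measure and on which $p_x>0$ by hypothesis --- one has $\mathrm{dist}(x,[x_1,x_N])=0$ while $\min_n|x-x_n|=|x-x_1|>0$, so the integrand is strictly positive there and nonnegative everywhere else; hence the difference is strictly positive, which is the claim.

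The step I expect to be the main obstacle is the interchange of the limit and the expectation on both sides, i.e.\ upgrading the pointwise limits of the two denoiser functions to limits of their MSEs. This has two delicate parts: (i) obtaining almost-everywhere pointwise convergence when the denoiser \emph{and} the evaluation point $x+\sigma z$ both move with $\sigma$, which requires isolating the $p_x$-null exceptional sets (Voronoi boundaries for the eMMSE, the data points $\{x_n\}$ for $f^{*}_{1D}$) and, for $f^{*}_{1D}$, controlling both the plateau widths and the slope errors by $O(\epsilon^{\mathrm{max}}_n)+O(-\epsilon^{\mathrm{min}}_n)\to0$; and (ii) exhibiting a single $\sigma$-independent integrable envelope, which is exactly where the uniform boundedness of the estimators and the bounded-second-moment hypothesis on $p_x$ enter. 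The remaining pieces --- the exponent-gap estimate for the softmax, reading $\Pi_{[x_1,x_N]}$ off \eqref{eq:offline_denoiser1d}, and the elementary inequality $\mathrm{dist}(x,\{x_n\})\ge\mathrm{dist}(x,[x_1,x_N])$ --- are routine.
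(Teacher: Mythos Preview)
Your proposal is correct and follows the same overall mechanism as the paper's proof: identify the pointwise limits of both denoisers as $\sigma\to 0^+$, justify the interchange of limit and expectation via dominated convergence (using the uniform bound $\max_n|x_n|$ on the estimators together with the bounded second moment of $p_x$), and then compare the resulting quantities. The paper also first reduces eMMSE to the $1$-NN classifier (their Lemma~\ref{lemma:emmse convergance to 1-NN} and Lemma~\ref{lemma:limit mse emmse}) exactly as you do with the softmax exponent-gap argument.

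Where your route differs is in the comparison step. The paper restricts to $N=2$, observes that $\hat{x}^{1\text{-NN}}$ and $f^{*}_{1D}$ agree outside $[x_1+\Delta_1,x_2+\Delta_2]$, and then separately shows that the $1$-NN contribution on that interval tends to a strictly positive constant while the $f^{*}_{1D}$ contribution tends to zero. You instead compute both limiting MSEs in closed form for general $N$ as $\mathbb{E}_x[\min_n(x-x_n)^2]$ and $\mathbb{E}_x[\mathrm{dist}(x,[x_1,x_N])^2]$, and compare them via the trivial inclusion $\{x_n\}\subset[x_1,x_N]$. This is both more general (no ad hoc reduction to $N=2$, which the paper invokes without justification) and more transparent, since it makes explicit that the limiting NN denoiser is the projection $\Pi_{[x_1,x_N]}$. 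The paper's region-by-region decomposition buys nothing extra here; your formulation subsumes it.
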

See Appendix \ref{appendix:generalization_proof} for the proof. We may deduce from Theorem \ref{theorem:generalization of 1d denoiser} that for each density probability distribution $p\left(x\right)$ there exists a critical noise level for which the the representation cost minimizer $f^{*}_{1D}$ has strictly lower MSE than the eMMSE for all smaller noise levels (this is because the MSE is a continuous function of $\sigma$). The critical noise level can change significantly depending on $p\left(x\right)$. For example, if $p\left(x\right)$ has a high “mass” in between the training points then the critical noise level is large. However, if the density function has a low “mass” between the training points then the critical noise level is small. 
In Appendix \ref{appndix:additional simulations} we show the MSE vs. the noise level on MNIST denoiser for NN denoiser and eMMSE denoiser (Figure \ref{Figure:mse vs noise std}). As can be seen there, the critical noise level in this case is not small ($\sim5$). 

Intuitively, the difference between the NN denoiser and the eMMSE denoiser is how they operate on inputs that are not close to any of the clean samples (compared to the noise standard deviation). For such a point, the eMMSE denoiser does not take into account that the empirical distribution of the clean samples does not approximate well their true distribution. Thus, for small noise, it insists on ``assigning'' it to the closest clean sample point. By contrast, the NN denoiser generalizes better since it takes into account that, far from the clean samples, the data distribution is not well approximated by the empirical sample distribution. Thus, its operation there is near the identity function, with a small contraction toward the clean points, as we discuss next.

\paragraph{Minimal norm leads to contractive solutions on univariate data.}
\citet{radhakrishnan2018memorization} have empirically shown that Auto-Encoders (AE, i.e. NN denoisers without input noise), are \textit{locally} contractive toward the training samples. Specifically, they showed that the clean dataset can be recovered when iterating the AE output multiple times until convergence. Additionally, they showed that, as we increase the width or the depth of the NN, the network becomes more contractive toward the training examples. 
In addition, \citet{radhakrishnan2018memorization} proved that $2$-layer AE models are locally contractive under strong assumptions (the weights of the input layer are fixed and the number of neurons goes to infinity). Next, we prove that a univariate shallow NN denoiser is \textit{globally} contractive toward the clean data points without using the assumptions used by \citet{radhakrishnan2018memorization} (i.e., the minimizer optimizes over both layers and has a finite number of neurons).

\begin{definition} \label{def:contractive}
We say that ${\bv f}:\mathbb{R}^d\to\mathbb{R}^d$ is contractive toward a set of points $\{\bv x_n\}_{n=1}^N$ on $\mathcal{Y}\subseteq \mathbb{R}^d$ if  there exists a real number $0\le \alpha < 1$ such that for any $\bv y \in \mathcal{Y}$ there exists $i \in [N]$ so that
\begin{align}
    \left\Vert \bv f\left(\bv y\right)- \bv f\left(\bv x_i\right) \right\Vert \le \alpha  \left \Vert \bv y-\bv x_i\right \Vert \,. \label{eq:contraction condition}
\end{align}
\end{definition}
\begin{lemma}\label{lemma:contractive}
    $f^{*}_{1D}\left(y\right)$ is contractive toward the clean training points $\{\bv x_n\}_{n=1}^N$ on $\mathcal{Y}={\mathbb{R} \setminus  \cup_{n\in [N-1]} \Bigl\{\frac{ x_{n+1}\epsilon^{\mathrm{max}}_n - x_n \epsilon^{\mathrm{min}}_{n+1}}{\epsilon^{\mathrm{max}}_n-\epsilon^{\mathrm{min}}_{n+1}}\Bigr\}}$.
\end{lemma}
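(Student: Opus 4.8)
The plan is to verify Definition~\ref{def:contractive} directly using the closed form \eqref{eq:offline_denoiser1d} from Proposition~\ref{proposition:1d_denoiser}. For any $y\in\mathcal Y$, I would pick the index $i$ to be the clean point such that $y$ lies either in the flat region $[x_i+\epsilon^{\mathrm{min}}_i,\,x_i+\epsilon^{\mathrm{max}}_i]$ around $x_i$, or in an interpolation/extrapolation region ``attached'' to $x_i$. The key observation is that on each flat region $f^*_{1D}(y)=x_i=f^*_{1D}(x_i)$, so the left-hand side of \eqref{eq:contraction condition} vanishes and the inequality holds trivially with any $\alpha\ge 0$. So the whole content is in the sloped regions.

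On a sloped region between $x_n$ and $x_{n+1}$, the function is affine with slope $s_n \equiv \frac{x_{n+1}-x_n}{x_{n+1}+\epsilon^{\mathrm{min}}_{n+1}-(x_n+\epsilon^{\mathrm{max}}_n)}$, and on the two unbounded tails the function is constant (slope $0$). Since $f^*_{1D}(x_n)=x_n$ and $f^*_{1D}(x_{n+1})=x_{n+1}$ are fixed, for $y$ in the open interval $(x_n+\epsilon^{\mathrm{max}}_n,\,x_{n+1}+\epsilon^{\mathrm{min}}_{n+1})$ I would write, taking $i=n$ if $y$ is closer to $x_n$ and $i=n+1$ otherwise,
\begin{align*}
\frac{|f^*_{1D}(y)-x_i|}{|y-x_i|}
\;=\; s_n\cdot \frac{|y-(x_i+\text{offset}_i)|}{|y-x_i|}\;=\; s_n\cdot\Bigl|\,1-\tfrac{\text{offset}_i}{y-x_i}\,\Bigr|,
\end{align*}
where $\text{offset}_n=\epsilon^{\mathrm{max}}_n>0$ and $\text{offset}_{n+1}=\epsilon^{\mathrm{min}}_{n+1}<0$. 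On the half of the interval nearer $x_n$, i.e. $y-x_n \le \tfrac12(x_{n+1}+x_n+\epsilon^{\mathrm{min}}_{n+1}-\epsilon^{\mathrm{max}}_n\,)$ roughly, the factor $|y-(x_n+\epsilon^{\mathrm{max}}_n)|\le |y-x_n|$ is automatic because $\epsilon^{\mathrm{max}}_n>0$ and $y>x_n+\epsilon^{\mathrm{max}}_n$, giving ratio $\le s_n$; symmetrically near $x_{n+1}$ using $\epsilon^{\mathrm{min}}_{n+1}<0$. And Assumption~\ref{assumption:seperate_datapoints} gives $s_n<1$ (the numerator $x_{n+1}-x_n$ is strictly smaller than the denominator, which is the gap between consecutive noisy clusters, strictly contained in $x_{n+1}-x_n$ shifted — more precisely $0< x_{n+1}+\epsilon^{\mathrm{min}}_{n+1}-(x_n+\epsilon^{\mathrm{max}}_n) < x_{n+1}-x_n$ since $\epsilon^{\mathrm{max}}_n>0>\epsilon^{\mathrm{min}}_{n+1}$, so $s_n>1$!). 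I need to be careful here: in fact $s_n>1$, so the contraction is \emph{not} from the slope but from the factor $|1-\text{offset}_i/(y-x_i)|<1$, which holds strictly on the appropriate half-interval. The tails are handled trivially since $f^*_{1D}$ is constant there and equals the nearest $x_i$. Finally I would take $\alpha$ to be the supremum of these finitely many ratios over the (closed halves of the) sloped regions, excluding the midpoints of the form $\frac{x_{n+1}\epsilon^{\mathrm{max}}_n-x_n\epsilon^{\mathrm{min}}_{n+1}}{\epsilon^{\mathrm{max}}_n-\epsilon^{\mathrm{min}}_{n+1}}$ — which is exactly the removed set $\mathcal Y^c$, the point where the two candidate ratios (for $i=n$ and $i=n+1$) are equal and both equal $1$.

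\textbf{Main obstacle.} The delicate point is the sharp bookkeeping near the excluded midpoints: I must show that the $\sup$ over $y\in\mathcal Y$ of the best-case ratio $\min\{\text{ratio with }i=n,\ \text{ratio with }i=n+1\}$ is strictly below $1$, even though each individual ratio approaches $1$ near its own endpoint and the two ratios cross at the excluded point where both hit $1$. Concretely I would show that on the half of the sloped interval nearer $x_n$ the ratio for $i=n$ is bounded by a constant $<1$ (it is increasing in $y$ and at the crossover point equals the value $\frac{s_n(\text{gap}/2 - \epsilon^{\mathrm{max}}_n+\dots)}{\dots}$, which simplifies to something $<1$ precisely because of the strict well-separation), and symmetrically for $i=n+1$ on the other half; taking the max of these finitely many strict bounds gives a uniform $\alpha<1$. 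That the crossover point must be deleted (rather than just being a measure-zero annoyance) is genuine: at that point the nearest-clean ratio equals exactly $1$, so \eqref{eq:contraction condition} fails there for \emph{every} $\alpha<1$, which is why $\mathcal Y$ is defined with that set removed.
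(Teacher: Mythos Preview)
Your overall approach mirrors the paper's: partition $\mathbb{R}$ into the flat pieces, the two tails, and the $N-1$ sloped intervals; identify on each sloped interval the unique fixed point $y^*_n=\tfrac{x_{n+1}\epsilon^{\max}_n-x_n\epsilon^{\min}_{n+1}}{\epsilon^{\max}_n-\epsilon^{\min}_{n+1}}$ (the paper's proof in fact begins by verifying $f^*_{1D}(y^*_n)=y^*_n$, which is exactly why these points are removed from $\mathcal Y$); and on $[x_n+\epsilon^{\max}_n,y^*_n)$ take $i=n$ while on $(y^*_n,x_{n+1}+\epsilon^{\min}_{n+1}]$ take $i=n+1$, using that $f^*_{1D}$ lies strictly below (resp.\ above) the diagonal on the respective half. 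You also correctly catch that the slope satisfies $s_n>1$, so the contraction must come from the offset factor rather than the slope.

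However, the step you flag as the ``main obstacle'' does not go through, and this is a genuine gap. You assert that on the half nearer $x_n$ the ratio $|f^*_{1D}(y)-x_n|/|y-x_n|$ is bounded by a constant strictly less than $1$ and that ``at the crossover point'' it ``simplifies to something $<1$''. But since $f^*_{1D}(y^*_n)=y^*_n$ and $f^*_{1D}(x_n)=x_n$, continuity gives
\[
\lim_{y\to (y^*_n)^-}\frac{f^*_{1D}(y)-x_n}{y-x_n}\;=\;\frac{y^*_n-x_n}{y^*_n-x_n}\;=\;1,
\]
so the supremum of this ratio over the open half-interval is exactly $1$ (not attained, but not strictly below $1$ either); the symmetric statement holds with $i=n+1$ on the other half. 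You yourself observe that both ratios ``hit $1$'' at the excluded point, which is inconsistent with the next sentence claiming a strict bound. Consequently no \emph{uniform} $\alpha<1$---as Definition~\ref{def:contractive} requires, with quantifier order $\exists\alpha\,\forall y\,\exists i$---can be extracted, and the proposed step ``taking the max of these finitely many strict bounds gives a uniform $\alpha<1$'' fails. The paper's own argument is not more careful here: it asserts ``there exists $0<\gamma_1<1$ such that $f^*_{1D}(y)\le\gamma_1 y$'' on the same open half-interval, which is again only a pointwise inequality. What the argument actually delivers is the weaker pointwise conclusion: for every fixed $y\in\mathcal Y$ there exist $i$ and $\alpha_y<1$ with $|f^*_{1D}(y)-x_i|\le\alpha_y|y-x_i|$. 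A uniform $\alpha<1$ would require excluding open neighborhoods of the points $y^*_n$, not merely the points themselves.
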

The proof can be found in  Appendix \ref{appendix:proof contractive}. 
\section{Minimal norm leads to alignment phenomenon on multivariate data}\label{sec:highdim}
In the multivariate case, min-cost solutions of \eqref{eq:mainopt} are difficult to explicitly characterize. Even in the setting of fitting scalar-valued shallow ReLU networks, explicitly characterizing min-cost solutions under interpolation constraints remains an open problem, except in some basic cases (e.g., co-linear data  \citep{ergen2021convex}).

As an approximation to \eqref{eq:mainopt} that is more mathematically tractable, we assume the functions being fit are constant and equal to $\vx_n$ on a closed ball of radius $\rho$ centered at each $\vx_n$, i.e., $\bv f(\vy) = \vx_n$ for all $\|\vy-\vx_n\|\leq \rho$, such that the balls do not overlap. Letting $B(\vx_n,\rho)$ denote the ball of radius $\rho$ centered at $\vx_n$, we can write this constraint more compactly as $\bv f(B(\vx_n,\rho)) = \{\vx_n\}$. Consider minimizing the representation cost under this constraint:
\begin{equation}\label{eq:fitonballs}
    \min_{\bv f} R(\bv f)~~s.t.~~{\bv f}(B(\vx_n,\rho)) = \{\vx_n\}~~\forall n\in [N]\,.
\end{equation}
However, even with this approximation, explicitly describing minimizers of \eqref{eq:fitonballs} for an arbitrary collection of training samples  remains challenging. Instead, to gain intuition, we describe minimizers of \eqref{eq:fitonballs} assuming the training samples belong to simple geometric structures  that yield explicit solutions. Our results reveal a general alignment phenomenon, such that the weights of the representation cost minimizer align themselves with edges and/or faces connecting data points. We also show that approximate solutions of \eqref{eq:mainopt} obtained numerically by training a NN denoiser with weight decay match well with the solutions of \eqref{eq:fitonballs} having exact closed-form expressions.

\subsection{Training data on a subspace}
In the event that the clean training samples belong to a subspace, we show the representation cost minimizer depends only on the projection of the inputs onto the subspace containing the training data, and its output is also constrained to this subspace.
\begin{theorem}\label{thm:data_on_subspace}
Assume the training samples $\{\vx_n\}_{n=1}^N$ belong to a linear subspace $\gS \subset \R^d$, and let $\mP_\gS \in \R^{d\times d}$ denote the orthogonal projector onto $\gS$.  Then any minimizer $\bv {f}^*$ of \eqref{eq:fitonballs} satisfies $\bv {f}^*(\vy) = \mP_\gS \bv f^*(\mP_\gS \vy)$ for all $\vy \in \R^d$.
\end{theorem}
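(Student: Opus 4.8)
The plan is to exploit the freedom in the representation cost to symmetrize any minimizer. Concretely, let $\mP=\mP_\gS$ and $\mP^\perp = \mI - \mP_\gS$. Given any minimizer $\bv f^*$ of \eqref{eq:fitonballs}, I would construct a candidate competitor $\bv g(\vy) := \mP\,\bv f^*(\mP\vy)$ and argue two things: (i) $\bv g$ is still feasible for \eqref{eq:fitonballs}, and (ii) $R(\bv g)\leq R(\bv f^*)$, with equality forcing $\bv f^* = \bv g$. Feasibility of $\bv g$ is easy: if $\|\vy - \vx_n\|\le \rho$ then, since $\vx_n\in\gS$, we have $\|\mP\vy - \vx_n\| = \|\mP(\vy-\vx_n)\|\le \|\vy-\vx_n\|\le\rho$, so $\bv f^*(\mP\vy) = \vx_n$, hence $\bv g(\vy)=\mP\vx_n = \vx_n$.

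The core is step (ii), controlling the representation cost. Here I would use the neuron-level description: if $\bv f^* = \sum_k \va_k[\vw_k^\T\vy + b_k]_+ + \mV\vy + \vc$ attains (or nearly attains) the infimum in \eqref{eq: R norm} with $\|\vw_k\|=1$, then writing the decomposition and applying $\mP$ on the left and restricting the input to $\mP\vy$ gives
\begin{align}
\bv g(\vy) = \mP\bv f^*(\mP\vy) = \sum_k (\mP\va_k)\,[(\mP\vw_k)^\T\vy + b_k]_+ + \mP\mV\mP\,\vy + \mP\vc\,.
\end{align}
After renormalizing each surviving neuron (those with $\mP\vw_k\neq 0$) so its inner weight has unit norm, the cost of the $k$-th neuron changes from $\tfrac12(\|\va_k\|^2+1)$ to $\tfrac12(\|\mP\va_k\|^2\|\mP\vw_k\|^2 + \|\mP\vw_k\|^2)\le \tfrac12(\|\mP\va_k\|^2+\|\mP\vw_k\|^2)\le\tfrac12(\|\va_k\|^2+1)$, using $\|\mP\vw_k\|\le 1$ and $\|\mP\va_k\|\le\|\va_k\|$. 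Since $\mP\mV\mP$ and $\mP\vc$ are an allowed skip connection and bias (unregularized), we get $R(\bv g)\le R(\bv f^*)$; because $\bv f^*$ is a minimizer this must be an equality. I would then need the uniqueness-type conclusion: equality in the norm bounds forces $\vw_k\in\gS$ and $\va_k\in\gS$ for every active neuron, and also $\mV = \mP\mV\mP$ after accounting for the skip term (an input component $\mP^\perp\vy$ passing only through $\mV$ contributes nothing on the constraint balls but strictly increases nothing in the cost since $\mV$ is unregularized — so here I would instead argue directly that \emph{some} minimizer has this form, and then that \emph{every} minimizer equals $\bv g$ by the strict inequality whenever a neuron has $\vw_k\notin\gS$). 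Cleanly: if $\bv f^*$ is optimal, so is $\bv g$; and if $\bv f^*\neq\bv g$ one can interpolate or use strict convexity-type arguments on the neuron norms to produce something strictly cheaper, contradiction. Hence $\bv f^* = \bv g$, i.e. $\bv f^*(\vy) = \mP_\gS\bv f^*(\mP_\gS\vy)$.

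The main obstacle I anticipate is handling the \textbf{unregularized skip connection $\mV$ and bias $\vc$} rigorously. Because $\mV$ carries no cost, the naive argument "replace $\mV$ by $\mP\mV\mP$ for free" shows a symmetrized minimizer exists but does not by itself rule out an asymmetric minimizer whose skip term does something nontrivial off $\gS$ — one genuinely needs that the ReLU part must also be symmetric, which relies on the strictness $\|\mP\vw_k\| < 1$ whenever $\vw_k\notin\gS$ combined with the fact that the infimum in \eqref{eq: R norm} is attained (finitely many neurons suffice, as noted after Proposition \ref{proposition:1d_denoiser}, or one argues with the integral/measure representation of $R$). A careful treatment will likely phrase the argument as: apply the averaging/projection operator $\bv f \mapsto \mP\bv f(\mP\,\cdot)$, show it does not increase $R$ and preserves feasibility, conclude the image of a minimizer is a minimizer, and finally invoke an explicit uniqueness statement for minimizers in this symmetrized class (or show the map is a strict contraction in cost off the symmetric subspace). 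I would also double-check the edge case $\mP\vw_k = 0$: such neurons become constant functions of $\vy$, absorbed into the bias $\vc$ at zero extra cost, consistent with the claimed identity.
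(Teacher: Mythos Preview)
Your approach is essentially the paper's: build the competitor $\bv g(\vy)=\mP_\gS\bv f^*(\mP_\gS\vy)$, verify feasibility, show $R(\bv g)\le R(\bv f^*)$ neuron-by-neuron via $\|\mP_\gS\va_k\|\|\mP_\gS\vw_k\|\le\|\va_k\|$, and then argue strictness to force $\bv f^*=\bv g$. (The paper separates this into two lemmas, projecting the outer and inner weights in turn, but the content is the same.)

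The gap is precisely the one you flag but do not close. You write that if $\bv f^*\neq\bv g$ ``one can interpolate or use strict convexity-type arguments on the neuron norms to produce something strictly cheaper,'' but this only works once you know some \emph{neuron} weight lies off $\gS$. If $\bv f^*$ and $\bv g$ differ only in the unregularized affine part $(\mV,\vc)$ while all $\va_k,\vw_k\in\gS$, the neuron norms are identical and nothing gets strictly cheaper. The paper's resolution, which you are missing, is to use the interpolation constraints themselves: since $\bv f^*$ is constant on each ball $B(\vx_n,\rho)$, its Jacobian vanishes there, giving
\[
\mV=-\sum_{k\in A_n}\va_k\vw_k^\T,\qquad \vc=\vx_n-\sum_{k\in A_n}b_k\va_k,
\]
where $A_n$ indexes the units active on that ball. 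Hence $\mV$ and $\vc$ are \emph{determined by the neurons}: if every $\va_k,\vw_k\in\gS$ then automatically $\mP_\gS\mV\mP_\gS=\mV$ and $\mP_\gS\vc=\vc$; contrapositively, if $\mV$ or $\vc$ fails the symmetry, some $\va_k$ or $\vw_k$ must lie off $\gS$, and the projection strictly decreases cost. With this observation in place your outline becomes a complete proof; without it, the strict-inequality step does not go through.

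A minor point: your neuron-cost computation is slightly garbled (the expression $\tfrac12(\|\mP\va_k\|^2\|\mP\vw_k\|^2+\|\mP\vw_k\|^2)$ is neither the renormalized nor the unrenormalized cost), though the final inequality $\tfrac12(\|\mP\va_k\|^2+\|\mP\vw_k\|^2)\le\tfrac12(\|\va_k\|^2+1)$ is correct and is all you need.
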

The proof of this result and all others in this section is given in Appendix \ref{appendix:proof_sec_multivariate}. 

Note the assumption that the dataset lies on a subpaces is practically relevant, since, in general, large datasets are (approximately) low rank, i.e., lie on a linear subspace \citep{udell2019big}. In Appendix \ref{appndix:additional simulations} we also validated that common image datasets are (approximately) low rank (Table \ref{table:percentile of the energy}).

Specializing to the case co-linear training data (i.e., training samples belonging to a one-dimensional subspace) the min-cost solution is unique and is described by the following corollary:

\begin{corollary}\label{cor:colinear_data}
Assume the training samples $\{\vx_n\}_{n=1}^N$ are co-linear, i.e., $\vx_n = c_n\vu$ for some scalars $c_1 < c_2 < \cdots < c_n$ where $\vu \in \R^d$ is a unit-vector. Then the minimizer $\bv f^*$ of \eqref{eq:fitonballs} is unique and given by $\bv f^*(\vy) = \vu \phi(\vu^\T\vy)$ where $\phi:\R\rightarrow\R$ has the same form as the 1-D minimizer \eqref{eq:offline_denoiser1d} $f^{*}_{1D}$ with $x_n = c_n$ and $ \epsilon_n^{\mathrm{max}} =-\epsilon_n^{\mathrm{min}} = \rho$.
\end{corollary}

In other words, the min-cost solution has a particularly simple form in this case: $\bv f^*(\vy) = \vu \phi(\vu^\T\vy)$, where $\phi$ is a monotonic piecewise linear function. We call any function of this form a \emph{rank-one piecewise linear interpolator}. Below we show that many other min-cost solutions can be expressed as superpositions of rank-one piecewise linear interpolators.

\subsection{Training data on rays}
As an extension of the previous setting, we now consider training data belonging to a union of one-sided rays sharing a common origin. Assuming the rays are well-separated (in a sense made precise below) we prove that the representation cost minimizer decomposes into a sum of rank-one piecewise linear interpolators aligned with each ray.
\begin{theorem}\label{thm:rays}
Suppose the training samples $X$ belong to a union of $L$ rays plus a sample at the origin: $X = \{\bm 0\} \cup \{\vx_n^{(1)}\}_{n=1}^{N_1} \cup \cdots \cup  \{\vx_n^{(L)}\}_{n=1}^{N_L}$ where $\vx_n^{(\ell)} = c^{(\ell)}_n\vu_\ell$  for some unit vector $\vu_\ell$ and constants $0 < c^{(\ell)}_1 < c^{(\ell)}_2 < \cdots < c^{(\ell)}_{N_\ell}$. Assume that the rays make obtuse angles with each other (i.e., $\vu_\ell^\T\vu_k < 0$ for all $\ell \neq k$). Then the minimizer $\bv f^*$ of \eqref{eq:fitonballs} is unique and is given by
\begin{equation}\label{eq:rays_minimizer}
    \bv f^*(\vy) = \vu_1 \phi_1(\vu_1^\T\vy) + \cdots + \vu_L \phi_L(\vu_L^\T\vy)\,,
\end{equation}
where $\phi_\ell:\R\rightarrow\R$ has the form of the 1-D minimizer \eqref{eq:offline_denoiser1d} $f^{*}_{1D}$ with $x_n = c_n^{(\ell)}$, $\epsilon_n^{\mathrm{max}} =-\epsilon_n^{\mathrm{min}} =  \rho$.
\end{theorem}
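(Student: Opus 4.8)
The plan is to establish optimality and uniqueness by matching upper and lower bounds on the representation cost $R$. As a preliminary reduction, since all $\vx_n^{(\ell)}$ and the origin lie in $\gS := \mathrm{span}\{\vu_1,\dots,\vu_L\}$, Theorem~\ref{thm:data_on_subspace} shows any minimizer $\bv f$ of \eqref{eq:fitonballs} satisfies $\bv f(\vy) = \mP_\gS\bv f(\mP_\gS\vy)$, so I may assume the rays span $\R^d$ and work within their span. For the \textbf{upper bound}, I would first check that the candidate $\bv f^*(\vy)=\sum_{\ell=1}^L\vu_\ell\phi_\ell(\vu_\ell^\T\vy)$ is feasible. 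Each $\phi_\ell$ has the form \eqref{eq:offline_denoiser1d} with 1-D data $0=c_0<c_1^{(\ell)}<\cdots<c_{N_\ell}^{(\ell)}$ (the origin being shared) and $\epsilon^{\mathrm{max}}_n=-\epsilon^{\mathrm{min}}_n=\rho$; in particular $\phi_\ell(t)=0$ for all $t\le\rho$, and the non-overlap hypothesis on the balls makes the 1-D problems well-posed (it forces $c_{n+1}^{(\ell)}-c_n^{(\ell)}>2\rho$ and $c_1^{(\ell)}>2\rho$). Fix $\vy=c_n^{(\ell)}\vu_\ell+\vr$ with $\|\vr\|\le\rho$. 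Then $\vu_\ell^\T\vy\in[c_n^{(\ell)}-\rho,c_n^{(\ell)}+\rho]$ lies in the $n$-th flat region of $\phi_\ell$, so $\phi_\ell(\vu_\ell^\T\vy)=c_n^{(\ell)}$; and for $k\ne\ell$ the obtuse-angle condition gives $\vu_k^\T\vy=c_n^{(\ell)}(\vu_k^\T\vu_\ell)+\vu_k^\T\vr<0+\rho$, so $\phi_k(\vu_k^\T\vy)=0$. Hence $\bv f^*(\vy)=c_n^{(\ell)}\vu_\ell=\vx_n^{(\ell)}$, and $\vy\in B(\vzero,\rho)$ is handled the same way; this is precisely where obtuseness enters. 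Since the ``directions'' $\{\pm\vu_1,\dots,\pm\vu_L\}$ are pairwise distinct, the rank-one pieces $\vu_\ell\phi_\ell(\vu_\ell^\T\cdot)$ have disjoint supports in the ``Radon domain'' $\sS^{d-1}\times\R$ of neuron parameters, so $R$ is additive over them and $R(\bv f^*)=\sum_{\ell=1}^L R\bigl(\vu_\ell\phi_\ell(\vu_\ell^\T\cdot)\bigr)=\sum_{\ell=1}^L R(\phi_\ell)$, where $R(\phi_\ell)$ is the scalar representation cost and the last step lifts each 1-D solution along $\vu_\ell$ (absorbing affine parts into the unpenalized skip $\mV\vy+\vc$), exactly as in Corollary~\ref{cor:colinear_data}.

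For the \textbf{lower bound}, I would build a dual certificate. For each $\ell$, the duality underlying Proposition~\ref{proposition:1d_denoiser} supplies a signed measure $g_\ell$ on $\R$ supported on $\bigcup_n[c_n^{(\ell)}-\rho,c_n^{(\ell)}+\rho]\cup[-\rho,\rho]$ certifying that $\phi_\ell$ is the unique min-cost 1-D interpolant. Lifting each $g_\ell$ onto ray $\ell$ (with vector weight along $\vu_\ell$, placed inside the corresponding balls) and summing yields a vector-valued measure $\bv g$ on $\R^d$ supported on $\bigcup_{n,\ell}B(\vx_n^{(\ell)},\rho)\cup B(\vzero,\rho)$. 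I would then verify: (i) $\bigl\|\int\bv g(\vy)\,[\vw^\T\vy+b]_+\,d\vy\bigr\|\le 1$ for every unit $\vw$ and every $b$; (ii) $\int\bv g(\vy)\,\vy^\T d\vy=0$ and $\int\bv g(\vy)\,d\vy=0$, so the unpenalized $\mV,\vc$ add no constraint; and (iii) $\int\bv g(\vy)^\T\bv f^*(\vy)\,d\vy=R(\bv f^*)$. From (i)--(ii), every shallow-net $\bv h$ satisfies $R(\bv h)\ge\int\bv g(\vy)^\T\bv h(\vy)\,d\vy$; applied to a feasible $\bv f$ and combined with (iii) together with $\bv f\equiv\bv f^*$ on $\mathrm{supp}\,\bv g$, this gives $R(\bv f)\ge R(\bv f^*)$. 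The obtuse-angle condition is again central: it guarantees that the lifted certificate for ray $\ell$ contributes negligibly to test (i) for directions $\vw$ associated with the other rays, so the global bound (i) follows from the $L$ one-dimensional estimates.

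\textbf{Uniqueness} follows from the equality cases: a second minimizer must saturate the dual bound, which forces its neuron measure onto $\bigcup_\ell\{\pm\vu_\ell\}\times\R$, hence to be a sum of rank-one interpolators along the $\vu_\ell$; the ball constraints then pin each scalar profile to $\phi_\ell$ by the 1-D uniqueness of Proposition~\ref{proposition:1d_denoiser}. I expect the \emph{main obstacle} to be the construction and verification of the $\R^d$ dual certificate --- in particular checking the neuron inequality (i) uniformly over all directions $\vw$ and offsets $b$ (including $b\to\pm\infty$, which couples to the unpenalized skip connection) and confirming that the obtuse geometry genuinely decouples the per-ray certificates; the degenerate case $\vu_\ell=-\vu_k$ (two antipodal rays, whose profiles merge into one co-linear interpolator) would have to be split off and handled via Corollary~\ref{cor:colinear_data}.
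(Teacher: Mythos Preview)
Your approach is genuinely different from the paper's. The paper does \emph{not} construct a dual certificate; it argues primally via a ``unit-splitting'' construction. After reversing units so that none is active over the origin ball $B(\bm 0,\rho)$ (which forces the unpenalized affine part to vanish), the paper takes any minimizer $\vf^*$ and builds a competitor $\vh=\sum_\ell \mP_\ell \vf_\ell(\mP_\ell\cdot)$, where $\vf_\ell$ is the sum of those units of $\vf^*$ active on at least one ball along ray $\ell$ and $\mP_\ell=\vu_\ell\vu_\ell^\T$. Feasibility of $\vh$ follows from the obtuse geometry (your upper-bound check is essentially the same). The crux is a short geometric lemma: if $\{\vu_i\}$ are unit vectors with pairwise negative inner products and $\vw$ is a unit vector with $\vu_i^\T\vw>0$ for all $i$, then $\sum_i|\vu_i^\T\va|(\vu_i^\T\vw)<\|\va\|$ for every $\va$. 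This says that any single unit active over $n>1$ rays contributes \emph{strictly more} to $R(\vf^*)$ than the sum of its $n$ projected copies contributes to $R(\vh)$; hence every unit of a minimizer is already aligned with one ray, reducing each ray to a 1-D problem and giving uniqueness directly.

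Your upper bound and feasibility check are correct; the divergence, and the potential gap, is in the lower bound. The heuristic that obtuseness makes the lifted certificate from ray $\ell$ ``contribute negligibly'' for directions $\vw$ associated with the other rays is too optimistic as stated. A neuron direction $\vw$ with $\vw^\T\vu_\ell>0$ for \emph{several} $\ell$ simultaneously is entirely possible under pairwise obtuseness (take $\vw$ near the barycenter of the $\vu_\ell$), and its active half-space then slices through multiple rays at once. The per-ray estimate you would extract, $|\alpha_\ell|\le|\vw^\T\vu_\ell|$, does \emph{not} by itself give $\bigl\|\sum_\ell\alpha_\ell\vu_\ell\bigr\|\le 1$: already for $L=2$ with $\vu_1^\T\vu_2=-c$, the operator $\vu_1\vu_1^\T+\vu_2\vu_2^\T$ has norm $1+c>1$. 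To close the argument you would have to exploit the \emph{correlation} among the $\alpha_\ell$'s through their common dependence on $b$ (a case analysis on where the neuron hyperplane meets each ray), or else prove an inequality essentially equivalent to the paper's splitting lemma. So the dual route is not wrong, but the ``decoupling'' you invoke is precisely where the hard work hides; the paper's primal projection argument packages that work into one clean inequality, and the strict inequality there yields uniqueness for free.
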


\begin{figure}
    \centering
    \includegraphics[height=4.5cm]{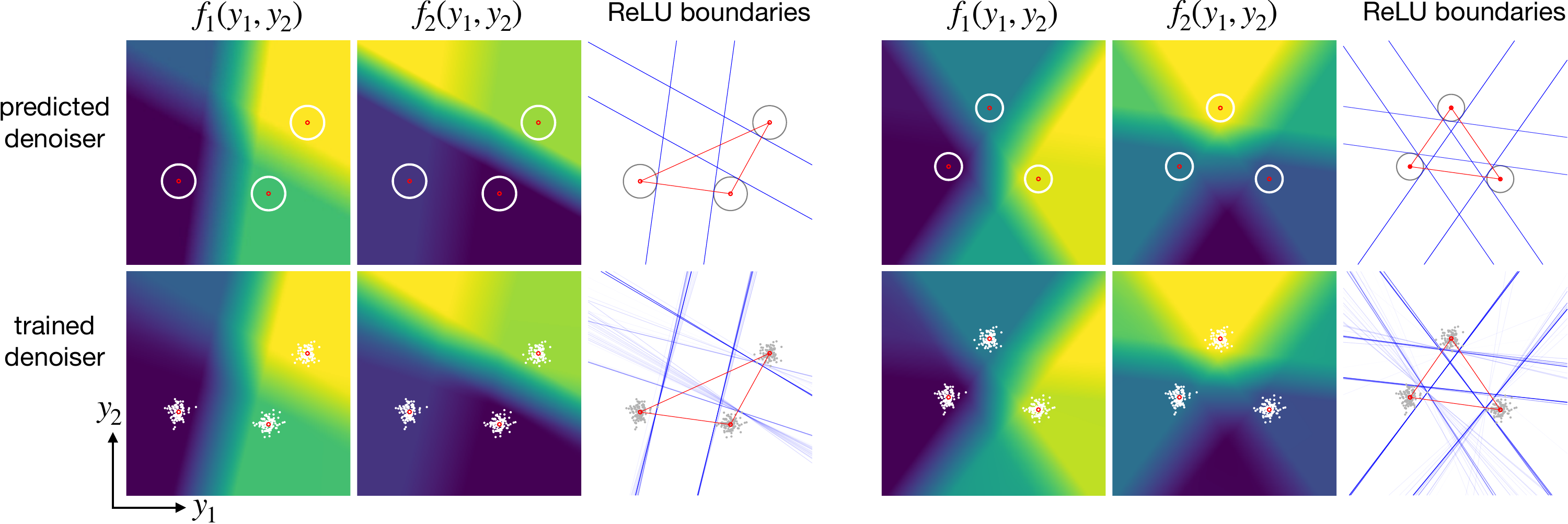}
    \caption{\small{Predicted (top row) and empirical (bottom row) min-cost NN denoisers for $N=3$ clean training samples in $d=2$ dimensions. The empricial NN denoisers were trained with weight decay parameter $\lambda=10^{-5}$ and $M=100$ noisy samples. As predicted by our theory, the ReLU boundaries align either perpendicular to the triangle edges in the obtuse case (left panel), or parallel to the triangle edges (right panel).
    \label{fig:2dfig}}}
\end{figure}

Additionally, in the Appendix \ref{app:rays_perturbed} we show that this min-cost solution is stable with respect to small perturbations of the data. In particular, if the training data is perturbed from the rays, the functional form of the min-cost solution only changes slightly, such that the inner and outer-layer weight vectors align with the line segments connecting consecutive data points.

\subsection{Special case: training data forming a simplex}
Here, we study the representation cost minimizers for $N \leq d+1$ training points that form the vertices of a $(N-1)$-simplex, i.e., a $(N-1)$-dimensional simplex in $\R^d$ (e.g., a $2$-simplex is a triangle, a $3$-simplex is a tetrahedron, etc.). As we will show, the angles between vertices of the simplex (e.g., an acute versus obtuse triangle in $N=3$) influences the functional form of the min-cost solution.

Our first result considers one extreme where the simplex has one vertex that makes an obtuse angle with \emph{all} other vertices (e.g., an obtuse triangle for $N=3$).

\begin{proposition}\label{prop:obtuse_triangle}
    Suppose the convex hull of the training points $\{\vx_1,\vx_2,...,\vx_{N}\} \subset \R^d$ is a $(N-1)$-simplex such that $\vx_1$ forms an obtuse angle with all other vertices, i.e., $(\vx_j-\vx_1)^\T(\vx_i-\vx_1) < 0$ for all $i\neq j$ with $i,j>1$. Then the  minimizer $\bv f^*$ of \eqref{eq:fitonballs} is unique, and is given by
    \begin{equation}\label{eq:obtusefit}
    \bv f^*(\vy) = 
    \vx_1 + \sum_{n=2}^{N} \vu_n \phi_n(\vu_n^\T(\vy-\vx_1))
    \end{equation}
    where  $\vu_n = \frac{\vx_n-\vx_1}{\|\vx_n-\vx_1\|}$, $\phi_n(t) = s_n([t-a_n]_+-[t-b_n]_+)$, with $a_n = \rho$, $b_n = \|\vx_n-\vx_1\|-\rho$, and $s_n = \|\vx_n-\vx_1\|/(b_n-a_n)$ for all $n=2,...,N$.
\end{proposition}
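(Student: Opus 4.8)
The plan is to deduce Proposition~\ref{prop:obtuse_triangle} from the rays result, Theorem~\ref{thm:rays}, by recentering at $\vx_1$. The key observation is that the representation cost $R$ is invariant under translating the input and the output of a function: if $\tilde{\bv f}(\vz):=\bv f(\vz+\vx_1)-\vx_1$, then any representation $\bv h_\theta$ of $\bv f$ of the form \eqref{eq:htheta} gives one for $\tilde{\bv f}$ with the \emph{same} weights $\va_k,\vw_k,\mV$ but shifted biases $b_k\mapsto b_k+\vw_k^\T\vx_1$ and offset $\vc\mapsto\mV\vx_1+\vc-\vx_1$; since $C(\theta)$ depends only on $\{\va_k,\vw_k\}$ this shows $R(\tilde{\bv f})=R(\bv f)$, and symmetrically. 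Moreover $\bv f(B(\vx_n,\rho))=\{\vx_n\}$ for all $n$ iff $\tilde{\bv f}(B(\vx_n-\vx_1,\rho))=\{\vx_n-\vx_1\}$ for all $n$. Hence $\bv f^*$ minimizes \eqref{eq:fitonballs} for $\{\vx_n\}_{n=1}^N$ if and only if $\tilde{\bv f}^*$ minimizes \eqref{eq:fitonballs} for the translated points $\{\vx_n-\vx_1\}_{n=1}^N$, and uniqueness is preserved.

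Next I would check that the translated data fits Theorem~\ref{thm:rays}. Writing $r_n:=\|\vx_n-\vx_1\|>0$ and $\vu_n:=(\vx_n-\vx_1)/r_n$ for $n=2,\dots,N$, the translated set is $\{\bm{0}\}\cup\{r_2\vu_2\}\cup\cdots\cup\{r_N\vu_N\}$: a sample at the origin plus $L=N-1$ rays, each carrying a single sample ($N_\ell=1$, $c_1^{(\ell)}=r_{\ell+1}$). The obtuse hypothesis $(\vx_j-\vx_1)^\T(\vx_i-\vx_1)<0$ for distinct $i,j>1$ is exactly $\vu_i^\T\vu_j<0$ for $i\neq j$, which is the separation condition of Theorem~\ref{thm:rays}. (This is consistent with the simplex assumption: $N-1\le d$ vectors with pairwise negative inner products are automatically linearly independent.) The non-overlap requirement in \eqref{eq:fitonballs} forces $r_n>2\rho$, and for distinct $i,j>1$, $\|\vx_i-\vx_j\|^2=r_i^2+r_j^2-2(\vx_i-\vx_1)^\T(\vx_j-\vx_1)>r_i^2+r_j^2>(2\rho)^2$, so all balls are disjoint and the instance is feasible. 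Theorem~\ref{thm:rays} then gives that the unique minimizer of the translated problem is $\tilde{\bv f}^*(\vz)=\sum_{n=2}^N\vu_n\,\phi_n(\vu_n^\T\vz)$, where $\phi_n$ is the $1$-D minimizer \eqref{eq:offline_denoiser1d} for the data on ray $\vu_n$ together with the shared origin sample.

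It remains to make $\phi_n$ explicit. The one-dimensional data relevant to ray $\vu_n$ is $\{0,r_n\}$ with $\epsilon^{\mathrm{max}}=-\epsilon^{\mathrm{min}}=\rho$; substituting $x_1=0$, $x_2=r_n$ into \eqref{eq:offline_denoiser1d} gives $\phi_n(t)=0$ for $t\le\rho$, $\phi_n(t)=\tfrac{r_n}{r_n-2\rho}(t-\rho)$ for $\rho\le t\le r_n-\rho$, and $\phi_n(t)=r_n$ for $t\ge r_n-\rho$. This piecewise-linear function is precisely $\phi_n(t)=s_n\big([t-a_n]_+-[t-b_n]_+\big)$ with $a_n=\rho$, $b_n=r_n-\rho=\|\vx_n-\vx_1\|-\rho$, and $s_n=r_n/(b_n-a_n)=\|\vx_n-\vx_1\|/(b_n-a_n)$, exactly as claimed. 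Undoing the translation, $\bv f^*(\vy)=\vx_1+\tilde{\bv f}^*(\vy-\vx_1)=\vx_1+\sum_{n=2}^N\vu_n\,\phi_n\big(\vu_n^\T(\vy-\vx_1)\big)$, which is \eqref{eq:obtusefit}, and uniqueness carries over from Theorem~\ref{thm:rays}.

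The substantive work sits entirely inside Theorem~\ref{thm:rays}, which supplies the matching lower bound on $R$ for the rays configuration. Granting that, the only points requiring care here are the bookkeeping for the translation-invariance of $R$ and of the constraint set, and the correct identification of the degenerate ``origin-plus-one-point'' $1$-D minimizer: one must notice that the shared origin sample is attached to every ray, which is what forces $\phi_n$ to vanish on $[-\rho,\rho]$ and take the stated difference-of-ReLUs form rather than being the constant $r_n$. A proof bypassing Theorem~\ref{thm:rays} would instead have to redo its lower-bound argument directly --- via the integral/Radon-type characterization of the representation cost for vector-valued shallow ReLU networks --- which is the genuinely hard ingredient.
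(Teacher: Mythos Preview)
Your approach is exactly that of the paper: invoke translation invariance of the representation cost (the paper states this as Lemma~\ref{lem:translate}) to recenter at $\vx_1$, observe that the translated configuration is precisely an instance of Theorem~\ref{thm:rays} with one sample per ray, and then translate back. Your additional bookkeeping---checking ball disjointness, noting that the origin sample is shared by every ray so that each $\phi_n$ is the $1$-D minimizer for $\{0,r_n\}$, and writing out the resulting difference-of-ReLUs form---fills in details the paper leaves implicit, but the structure of the argument is identical.
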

This result is essentially a corollary of Theorem \ref{thm:rays}, since after translating $\vx_1$ to be the origin, the vertices of the simplex belong to rays making obtuse angles with each other, where there is exactly one sample per ray.  Details of the proof are given in Appendix \ref{app:rays}.

At the opposite extreme, we consider the case where every vertex of the simplex is acute, meaning for all $n=1,...,N$ we have $(\vx_i-\vx_n)^\T(\vx_j-\vx_n) > 0$ for all $i,j \neq n$. In this case, we make following conjecture: the min-cost solution is instead a sum of $N$ rank-one piecewise linear interpolators, each aligned orthogonal to a different $(N-2)$-dimensional face of the simplex.

\begin{conjecture}\label{conj:acute_triangle}
Suppose the convex hull of the training points $\{\vx_1,\vx_2,...,\vx_{N}\} \subset \R^d$ is a $(N-1)$-simplex where every vertex of the simplex is acute. 
Then the minimizer $\vf^*$ of \eqref{eq:fitonballs} is unique, and is given by   \begin{equation}\label{eq:acutefit}
    \bv f^*(\vy) = \overline{\vx} + \sum_{n=1}^{N} \vv_n \phi_n(\vu_n^\T(\vy-\vz_{n}))
    \end{equation}
    where: $\vz_n$ is the  projection of $\vx_n$ onto the unique $(N-2)$-dimensional face of the simplex not containing $\vx_n$;   $\overline{\vx}$  is the weighted geometric median of the vertices specified by
    \[
    \overline{\vx} = \argmin_{\vx\in\R^d} \sum_{n=1}^{N} \frac{\|\vx_n-\vx\|}{\|\vx_n-\vz_n\|};
    \]
    $\vu_n = \frac{\vx_n-\vz_n}{\|\vx_n-\vz_n\|}$, $\vv_n = \frac{\vx_n-\overline{\vx}}{\|\vx_n-\overline{x}\|}$; and $\phi_n(t) = s_n([t-a_n]_+-[t-b_n]_+)$ with $a_n = \rho$, $b_n = \|\vx_n-\vz_n\|-\rho$, and $s_n = \|\vx_n-\overline{\vx}\|/(b_n-a_n)$.
\end{conjecture}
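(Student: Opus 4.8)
I would follow the standard route for a minimum-norm interpolation problem: first show the candidate $\bv f^*$ in \eqref{eq:acutefit} is feasible for \eqref{eq:fitonballs} and read off an upper bound $R(\bv f^*)\le 2\sum_{n=1}^N s_n$ (it uses two ReLU units per index $n$, with unit inner weight $\vu_n$ and outer weights $\pm s_n\vv_n$, together with the unpenalized affine part $\vy\mapsto\overline{\vx}$); then prove the matching lower bound $R(\bv f)\ge 2\sum_n s_n$ for every feasible $\bv f$ by exhibiting a dual certificate; and finally upgrade the certificate to strict feasibility off $\bv f^*$ to get uniqueness. The first and last steps I expect to be routine; the core difficulty is the certificate.

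\textbf{Feasibility and the upper bound.} The only geometric fact needed is that $\vu_m=(\vx_m-\vz_m)/\|\vx_m-\vz_m\|$ is orthogonal to the affine hull of the face opposite $\vx_m$, so $\vu_m^\T(\vx_n-\vz_m)=0$ whenever $n\ne m$ (that face contains $\vx_n$). Hence for $\vy\in B(\vx_n,\rho)$ and $m\ne n$, $\vu_m^\T(\vy-\vz_m)=\vu_m^\T(\vy-\vx_n)\le\|\vy-\vx_n\|\le\rho=a_m$, giving $\phi_m(\vu_m^\T(\vy-\vz_m))=0$; while $\vu_n^\T(\vy-\vz_n)=\vu_n^\T(\vy-\vx_n)+\|\vx_n-\vz_n\|\ge\|\vx_n-\vz_n\|-\rho=b_n$, giving $\phi_n(\vu_n^\T(\vy-\vz_n))=s_n(b_n-a_n)=\|\vx_n-\overline{\vx}\|$, so $\bv f^*(\vy)=\overline{\vx}+\vv_n\|\vx_n-\overline{\vx}\|=\vx_n$ on all of $B(\vx_n,\rho)$ (assuming $\rho$ small enough that $a_n<b_n$ and the balls are disjoint). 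The same computation shows that for \emph{any} base point one obtains a feasible function of the form \eqref{eq:acutefit}, since $\vz_n,\vu_n,a_n,b_n$ do not depend on it and interpolation forces $\vv_n=(\vx_n-\overline{\vx})/\|\vx_n-\overline{\vx}\|$; minimizing the resulting cost $2\sum_n\|\vx_n-\overline{\vx}\|/(b_n-a_n)$ over the base point is a weighted geometric-median problem, which both explains the definition of $\overline{\vx}$ and gives its uniqueness (the objective is strictly convex for a nondegenerate simplex).

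\textbf{Lower bound via duality.} Represent a shallow ReLU network in variation form $\bv f(\vy)=\int[\vw^\T\vy+b]_+\,d\vec\mu(\vw,b)+\mV\vy+\vc$ with $\vec\mu$ an $\R^d$-valued measure over $\{\|\vw\|=1\}\times\R$ and $R(\bv f)=\inf|\vec\mu|$ over such representations with $\mV,\vc$ free. Since the constraints in \eqref{eq:fitonballs} are linear in $\bv f$, Lagrangian duality introduces an $\R^d$-valued measure $\vec\nu=\sum_n\vec\nu_n$ with $\vec\nu_n$ supported on $B(\vx_n,\rho)$; pairing with $\bv f$ and using $\bv f\equiv\vx_n$ there yields $R(\bv f)\ge\sum_n\vx_n^\T\!\int_{B(\vx_n,\rho)}d\vec\nu_n$ for every feasible $\bv f$, provided $\vec\nu$ is \emph{dual feasible}: (a) $\int d\vec\nu=\bv 0$ and $\int\vy\,d\nu^{(i)}(\vy)=\bv 0$ for each coordinate $i$ (because $\vc,\mV$ are unpenalized, the certificate must annihilate affine functions), and (b) $\big\|\int[\vw^\T\vy+b]_+\,d\vec\nu(\vy)\big\|\le 1$ for all $\|\vw\|=1,\ b\in\R$. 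So it suffices to build such a $\vec\nu$ with $\sum_n\vx_n^\T\!\int_{B(\vx_n,\rho)}d\vec\nu_n=2\sum_n s_n$. The natural ansatz is to take $\vec\nu_n$ concentrated on $B(\vx_n,\rho)$ and composed of pieces along $\vu_n$ and $\vv_n$, calibrated by complementary slackness: the active hyperplanes of $\bv f^*$, namely $\{\vu_n^\T(\vy-\vz_n)=a_n\}$ (grazing the opposite face) and $\{\vu_n^\T(\vy-\vz_n)=b_n\}$ (tangent to $B(\vx_n,\rho)$), should saturate (b) with $\int[\cdot]_+d\vec\nu=\pm\vv_n$, while (a) fixes the internal balance of $\vec\nu_n$; note that (a) makes (b) automatic for $|b|$ large, so only a bounded range of offsets matters.

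\textbf{Main obstacle and uniqueness.} The hard part — and the reason this is stated as a conjecture rather than a theorem — is verifying (b) \emph{globally}, i.e. simultaneously over all directions $\vw$, not just the face normals. In the rays/obtuse setting of Theorem \ref{thm:rays} and Proposition \ref{prop:obtuse_triangle} the obtuse angles decouple the problem, so a certificate assembled from one-ray pieces (reducible to the $1$D minimizer \eqref{eq:offline_denoiser1d}) works; for an acute simplex the $N$ face directions genuinely interact — this is precisely why the optimal base point is a nontrivial geometric median rather than a vertex — and one must control the worst-case $\vw$ uniformly. A plausible approach is to show, using acuteness of all vertex angles, that $\big\|\sum_n\int_{B(\vx_n,\rho)}[\vw^\T\vy+b]_+\,d\vec\nu_n\big\|$ is maximized when $\vw$ is a face normal and $b$ one of the two active offsets. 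Once a certificate with strict inequality in (b) away from those active atoms is in hand, uniqueness follows by the usual complementary-slackness argument: any minimizer must place all of its ReLU mass on those atoms, and the interpolation constraints then pin it down to $\bv f^*$.
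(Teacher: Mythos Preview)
Your feasibility and upper-bound argument is correct and matches what the paper implicitly uses. Your lower-bound strategy, however, is genuinely different from the paper's. The paper does not attempt a dual certificate. Instead, it takes any min-cost interpolant $\vf$, reverses units so that each ReLU is active on exactly one ball, and writes $\vf=\vf_1+\vf_2+\vf_3+\mA\vy+\vb$ with $\vf_n$ the sum of units active only on $B_n$. It then proves a Lipschitz-type lemma: if $\vg$ is a sum of ReLU units vanishing on a convex region $C$ and equal to a constant $\vc$ on a ball $B$, then $R(\vg)\ge 2\|\vc\|/\mathrm{dist}(B,C)$. Applied to each $\vf_n$ this gives $R(\vf)\ge\sum_n 2\|\vx_n-\vb\|/\delta_n$, whose minimum over $\vb$ is exactly the weighted geometric median value $2\sum_n s_n$. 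The paper's obstruction is not your global dual constraint~(b) but rather the assumption $\mA=\bm 0$: the decomposition and the Lipschitz lemma only yield the matching lower bound once the unpenalized linear part is known to vanish, and the paper cannot rule out $\mA\neq\bm 0$ in general. So both approaches stall, but at structurally different places.

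Where the paper goes further than your proposal is the equilateral case $N=3$, which it proves in full. The mechanism is a symmetrization argument you do not have: averaging $\vf$ against the reflection across a median and the $120^\circ$ rotations produces another min-cost interpolant $\vh$ whose linear part $\mC$ commutes with both a reflection and a nontrivial rotation, forcing $\mC=\lambda\mI$. The paper then bounds $\lambda$ by projecting each $\vh_n$ onto the line through $\vx_n$ and the perpendicular line, reducing to two univariate $R_0$ computations whose sum is minimized at $\lambda=0$. Your duality framework could in principle absorb this symmetry (restrict to symmetric certificates), but as written it does not, and the concrete step that closes the gap in the equilateral case is the algebraic fact that only scalar matrices commute with the dihedral group of the triangle---a tool orthogonal to complementary slackness. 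If you pursue the dual route, the paper's argument suggests that the real enemy is not an arbitrary direction $\vw$ in~(b) but the freedom in the skip connection $\mV$; your condition~(a) handles this at the level of dual feasibility, but showing the certificate is \emph{tight} despite this freedom is where the difficulty resurfaces.
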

Justification for this conjecture is given in Appendix \ref{app:acutesimplex}. In particular, we prove that the interpolator $\vf^*$ given in  \eqref{eq:acutefit} is a min-cost solution in the special case of three training points whose convex hull is an equilateral triangle. 
If true in general, this would imply a phase transition behavior in the min-cost solution when the simplex changes from having one obtuse vertex to all acute vertices, such that ReLU boundaries go from being aligned orthogonal to the edges connecting vertices, to being aligned parallel with the simplex faces. Figure \ref{fig:2dfig} illustrates this for $N=3$ training points forming a triangle in $d=2$ dimensions. Moreover, Figure \ref{fig:2dfig} shows that the empirical minimizer obtained using noisy samples and weight decay regularization agrees well with the form of the exact min-cost solution predicted by Proposition \ref{prop:obtuse_triangle} and Conjecture \ref{conj:acute_triangle}.
 
In general, any given vertex of a simplex may make acute angles with some vertices and obtuse angles with others. This case is not covered by the above results. Currently, we do not have a conjectured form of the min-cost solution in this case, and we leave this as an open problem for  future work.

\section{Related works}
Numerous methods have been proposed for image denoising. In last decade NN-based methods achieve state-of-the-art results \citep{zhang2017beyond,zhang2021plug}. See \citep{doi:10.1137/23M1545859} for a comprehensive review of image denoising. \cite{sonthalia2023training} empirically showed a double decent behavior in NN denoisers, and theoretically proved it in a linear model. Similar to a denoiser, an Auto-Encoder (AE) is a NN model whose output dimension equals its input dimension, and is trained to match the output to the input. For AE, the typical goal is to learn an efficient lower-dimensional representation of the samples. \cite{radhakrishnan2018memorization} proved that a single hidden-layer AE that interpolates the training data (i.e., achieves zero loss), projects the input onto a nonlinear span of the training data. In addition, \cite{radhakrishnan2018memorization} empirically demonstrated that a multi-layer ReLU AE is locally contractive toward the training samples by iterating the AE and showing that the points converge to one of the training samples. Denoising autoencoders inject noise into the input data in order to learn a good representation \citep{alain2014regularized}. The marginalized denoising autoencoder, proposed by \cite{pmlr-v32-cheng14}, approximates the marginalized loss over the noise (which is equivalent to observing infinitely many noisy samples) by using a Taylor approximation. \cite{pmlr-v32-cheng14} demonstrated that by using the approximate marginalized loss we can achieve a substantial speedup in training and improved representation compared to standard denoising AE.

Many recent works aim to characterize function space properties of interpolating NN  with minimal representation cost (i.e., min-cost solutions). Building off of the connection between weight decay and path-norm regularization identified in \cite{neyshabur2015norm,neyshabur2017exploring}, \cite{savarese2019infinite} showed that the representation cost of a function realizable as a univariate two-layer ReLU network coincides with the $L^1$-norm of the second derivative of the function. Extensions to the multivariate setting were studied in \cite{Ongie2020A}, which identified the representation cost of a multivariate function with its $R$-norm, a Banach space semi-norm defined in terms of the Radon transform. Related work has extended the $R$-norm to other activation functions \cite{parhi2021banach}, vector-valued networks \cite{shenouda2023vector}, and deeper architectures \cite{parhi2022kinds}. A separate line of research studies min-cost solutions from a convex duality perspective \cite{ergen2021convex}, incuding two-layer CNN denoising AEs \cite{sahiner2021convex}. Recent work also studies properties of min-cost solutions in the case of arbitrarily deep NNs with ReLU activation \cite{jacot2022implicit,jacotNeurips}.

Despite these advances in understanding min-cost solutions, there are few results explicitly characterizing their functional form. One exception is \cite{hanin2021ridgeless}, which gives a complete characterization of min-cost solutions in the case of shallow univariate ReLU networks with unregularized bias. This characterization is possible because the univariate representation cost is defined in terms of the 2nd derivative, which acts locally. Therefore, global minimizers can be found by minimizing the representation cost locally over intervals between data points. An extension of these results to the case of regularized bias is studied in \cite{boursier2023penalising}.  In the multivariate setting, the representation cost involves the Radon transform of the function – a highly non-local operation – that complicates the analysis. \cite{parhi2021banach} prove a representer theorem showing that there always exists a min-cost solution realizable as a shallow ReLU network with finitely many neurons, and \cite{ergen2021convex} give an implicit characterization of min-cost NNs the solution to a convex optimization problem, and give explicit solutions in the case of co-linear training features. However, to the best of our knowledge, there are no results explicitly characterizing min-cost solutions in the case of non-colinear multivariate inputs, even for networks having scalar outputs.

\section{Discussions}
\paragraph{Conclusions.}
We have explored the elementary properties of NN solutions for the denoising problem, while focusing on offline training of a one hidden-layer ReLU network. When the noisy clusters of the data samples are well-separated, there are multiple networks with zero loss, even in the case of under-parameterization, while having a different representation cost. In contrast, previous theoretical works focused on the over-parametrized regime.
In the univariate case, we have derived a closed-form solution to such global minima with minimum representation cost. We also showed that the univariate NN solution generalizes better than the eMMSE denoiser. In the multivariate case, we showed that the interpolating solution with minimal representation cost is aligned with the edges and/or faces connecting the clean data points in several cases. 
\paragraph{Limitations.}
One limitation of our analysis in the multivariate case is that we assume the denoiser interpolates data on a full $d$-dimensional ball centered at each clean training sample, where $d$ is the input dimension. In practical settings, often the number of noisy samples $M \ll d$. A more accurate model would be to assume that denoiser interpolates over an $(M-1)$-dimensional disc centered at each training sample. This model may still be a tractable alternative to assuming interpolation of finitely many noisy samples. Also, our results relate to NN denoisers trained with explicit weight decay regularization, which is not always used in practice. However,  recent work shows that stable minimizers of SGD must have low representation cost \cite{mulayoff2021implicit,nacsonimplicit}, and so some of our analysis may provide insight for unregularized training, as well. Finally, for mathematical tractability, we focussed on the case of fully-connected ReLU networks with one hidden-layer. Extending our analysis to deeper architectures and convolutional neural networks is an important direction for future work.

\section*{Acknowledgments}
We thank Itay Hubara for his technical advice and valuable comments on the manuscript.
The research of DS was Funded by the European Union (ERC, A-B-C-Deep, 101039436). Views and opinions expressed are however those of the author only and do not necessarily reflect those of the European Union or the European Research Council Executive Agency (ERCEA). Neither the European Union nor the granting authority can be held responsible for them. DS also acknowledges the support of Schmidt Career Advancement Chair in AI. The research of NW was supported by the Israel Science Foundation (ISF), grant no. 1782/22. GO was supported by the National Science Foundation (NSF) CRII award CCF-2153371.

\bibliographystyle{plainnat}
\bibliography{denoiser}

\newpage
\appendix

\section{Marginalized loss} \label{appndix:marginalize loss}
In this section, we derive the marginal loss for the case of a 1-hidden layer ReLU neural network. The loss function is,
\begin{align*}
\mathcal{L}\left(\bs \theta, \sigma\right)&=\mathbb{E}_{\bv x,\bv y}\left\Vert \sum_{k=1}^{K}\bv a_{k}[\bv w_{k}^{\top}\bv y]_+-\bv x\right\Vert ^{2} \\ &=\mathbb{E}_{\bv x,\bv y}\left[\sum_{i=1}^{K}\sum_{j=1}^{K}\bv a_{i}^{\top}\bv a_{j}[\bv w_{i}^{\top}\bv y]_+[\bv w_{j}^{\top}\bv y]_+-2\sum_{i=1}^{K}\bv x^{\top} \bv a_{i}[\bv w_{i}^{\top}\bv y]_++\left\Vert \bv x\right\Vert ^{2}\right]\\
&=\sum_{i=1}^{K}\sum_{j=1}^{K}\bv a_{i}^{\top}\bv a_{j}\mathbb{E}_{\bv x,\bv y}\left[[\bv w_{i}^{\top}\bv y]_+[\bv w_{j}^{\top}\bv y]_+\right]-2\sum_{i=1}^{K}\mathbb{E}_{\bv x,\bv y}\left[\bv x^{\top} \bv a_{i}[\bv w_{i}^{\top}\bv y]_+\right]+\mathbb{E}\left\Vert \bv x\right\Vert ^{2}\\&=\sum_{i=1}^{K}\sum_{j=1}^{K}\bv a_{i}^{\top}\bv a_{j}\mathbb{E}_{\bv x}\left[\mathbb{E}_{\bv y| \bv x}\left[[\bv w_{i}^{\top}\bv y]_+[\bv w_{j}^{\top}\bv y]_+\right]\right]-2\sum_{i=1}^{K}\mathbb{E}_{\bv x}\left[\mathbb{E}_{\bv y|\bv x}\left[\bv x^{\top}\bv a_{i}[\bv w_{i}^{\top}\bv y]_+\right]\right]+\mathbb{E}\left\Vert \bv x\right\Vert ^{2}\\&=\sum_{i=1}^{K}\sum_{j=1}^{K}\bv a_{i}^{\top}\bv a_{j}\mathbb{E}_{\bv x}\left[\mathbb{E}_{\bv y| \bv x}\left[[\bv w_{i}^{\top}\bv y]_+\right]\mathbb{E}_{\bv y| \bv x}\left[[\bv w_{j}^{\top}\bv y]_+\right]\right]-2\sum_{i=1}^{K}\mathbb{E}_{\bv x}\left[\mathbb{E}_{\bv y|\bv x}\left[\bv x^{\top} \bv a_{i}[\bv w_{i}^{\top}\bv y]_+\right]\right]+\mathbb{E}\left\Vert \bv x\right\Vert ^{2}\\&\quad+\sum_{i=1}^{K}\sum_{j=1}^{K}\bv a_{i}^{\top}\bv a_{j}\mathbb{E}_{\bv x}\left[\mathbb{E}_{\bv y|\bv x}\left[[\bv w_{i}^{\top}\bv y]_+[\bv w_{j}^{\top}\bv y]_+\right]\right]-\sum_{i=1}^{K}\sum_{j=1}^{K}\bv a_{i}^{\top}\bv a_{j}\mathbb{E}_{\bv x}\left[\mathbb{E}_{\bv y| \bv x}\left[[\bv w_{i}^{\top}\bv y]_+\right]\mathbb{E}_{\bv y|\bv x}\left[[\bv w_{j}^{\top}\bv y]_+\right]\right]\\&=\mathbb{E}_{(\bv x)}\left\Vert \sum_{i=1}^{K}\bv a_{i}\tilde{\phi}\left(\bv w_{i}^{\top}\bv x\right)-\bv x\right\Vert ^{2}+\sum_{i=1}^{K}\sum_{j=1}^{K}\bv a_{i}^{\top}\bv a_{j}\mathbb{E}_{\bv x}\left[\mathbb{E}_{\bv y|\bv x}\left[[\bv w_{i}^{\top}\bv y]_+[\bv w_{j}^{\top}\bv y]_+\right]\right] \\ &\quad-\sum_{i=1}^{K}\sum_{j=1}^{K}\bv a_{i}^{\top}\bv a_{j}\mathbb{E}_{\bv x}\left[\mathbb{E}_{\bv y|\bv x}\left[[\bv w_{i}^{\top}\bv y]_+\right]\mathbb{E}_{\bv y|\bv x}\left[[\bv w_{j}^{\top}\bv y]_+\right]\right]\,.
\end{align*}
We denote by
\begin{align*}
\mathbf{H}_{ij}	\left(\bv w_{i},\bv w_{j},\sigma^{2}\right)=\mathbb{E}_{\bv x}\biggl[\mathbb{E}_{\bv y|\bv x}\left[[\bv w_{i}^{\top}\bv y]_+[\bv w_{j}^{\top}\bv y]_+\right]-\mathbb{E}_{\bv y|\bv x}\left[[\bv w_{i}^{\top}\bv y]_+\right]\mathbb{E}_{\bv y|\bv x}\left[[\bv w_{j}^{\top}\bv y]_+\right]\biggr]\,.
\end{align*}
Note that $\mathbf{H}\succeq0$ since $\mathbf{H}$ is a covariance matrix. Thus we get,
\begin{align*}
    \mathcal{L}\left(\bs \theta, \sigma\right) = \mathbb{E}_{\bv x}\left\Vert \sum_{k=1}^{K}\bv a_{k} \tilde{\phi}\left(\hat{\bv w}_{k}^{\top}\bv x,\left\Vert \bv w_{k}\right\Vert ,\sigma\right)-\bv x\right\Vert ^{2}+\sum_{i=1}^{K}\sum_{j=1}^{K}\bv a_{i}^{\top}\bv a_{j}\mathbf{H}_{ij}\left(\bv w_{i},\bv w_{j},\sigma^{2}\right)\,.
 \end{align*}
\begin{lemma}
In the case of the ReLU activation function and Gaussian noise the following holds,
\begin{align*}
     \tilde{\phi}\left(\hat{\bv w}_{i}^{\top}\bv x,\left\Vert \bv w_{i}\right\Vert ,\sigma\right)&=\left\Vert \bv w_{i}\right\Vert \left(\left(1-\Phi\left(-\frac{\hat{\bv w}_{i}^{\top}\bv x}{\sigma}\right)\right)\hat{\bv w}_{i}^{\top}\bv x+ \varphi\left(-\frac{\hat{\bv w}_{i}^{\top}\bv x}{\sigma}\right)\right)
\end{align*}
where $\varphi, \Phi$ are the density and cumulative distribution of standard normal distribution, and
\begin{align*}
\mathbf{H}_{ij}&=\mathbb{E}_{\bv x}\left[\Psi\left(\bv x,\bv w_{i},\bv w_{j},\sigma^{2}\right)-\tilde{\phi}\left(\hat{\bv w}_{i}^{\top}\bv x,\left\Vert \bv w_{i}\right\Vert , \sigma \right)\tilde{\phi}\left(\hat{\bv w}_{j}^{\top}\bv x,\left\Vert \bv w_{j}\right\Vert, \sigma \right)\right]
\end{align*}
where,
\begin{align*}
\Psi\left(\bv x,\bv w_{i},\bv w_{j},\sigma^{2}\right)&=P\left(z_{1}>0,z_{2}>0;\bs \mu^{(i,j)},\bs \Sigma^{(i,j)}\right) \biggl(\sigma_{12}+\mathrm{det}\left(\bs \Sigma\right)\frac{f\left(-\bs \mu^{\left(i,j\right)};\bv 0,\bs \Sigma^{\left(i,j\right)}\right)}{P(z_{1}>-\mu_{1},z_{2}>-\mu_{2};\bv 0,\bs \Sigma^{(i,j)})} \\ &\quad +\sigma_{11}F_{1}\left(-\mu_{1}\right)\mu_{2}+\mu_{1}\sigma_{22}F_{2}\left(-\mu_{2}\right)+\mu_{1}\mu_{2}\biggr) \\
F_{2}\left(-\mu_2\right)	&=\frac{\varphi\left(-\frac{\mu_2}{\sqrt{\sigma_{22}}}\right)P\left(z>-\mu_{1};\frac{\Lambda_{12}\mu_2}{\Lambda_{11}},\frac{1}{\Lambda_{11}}\right)}{P(z_{1}>-\mu_{1},z_{2}>-\mu_{2};\bv 0,\bs \Sigma)} \\ 
F_{1}\left(-\mu_1\right)	&=\frac{\varphi\left(-\frac{\mu_1}{\sqrt{\sigma_{11}}}\right)P\left(z>-\mu_{2};\frac{\Lambda_{21}\mu_1}{\Lambda_{22}},\frac{1}{\Lambda_{22}}\right)}{P(z_{1}>-\mu_{1},z_{2}>-\mu_{2};\bv 0,\bs \Sigma)}
\end{align*}
and 
\begin{align*}
    \bs \mu^{\left(i,j\right)}&=\begin{pmatrix}
    \mu_1\\
    \mu_2
    \end{pmatrix} =\begin{pmatrix}
    \bv w_{i}^{\top}\bv x\\
    \bv w_{j}^{\top}\bv x
    \end{pmatrix} \\
    \bs \Sigma^{\left(i,j\right)}&= \begin{pmatrix}
    \sigma_{11} & \sigma_{12}\\
    \sigma_{12} & \sigma_{22}
    \end{pmatrix} = \begin{pmatrix}
    \sigma^{2}\left\Vert \bv w_{i}\right\Vert ^{2} & \sigma^{2}\bv w_{i}^{\top}\bv w_{j}\\
    \sigma^{2}\bv w_{j}^{\top}\bv w_{i} & \sigma^{2}\left\Vert \bv w_{j}\right\Vert ^{2}
\end{pmatrix} \\ 
\bs \Lambda &= \bs \Sigma^{-1}\,.
\end{align*}
Notice that we denote by $P\left(z_{1}>0,z_{2}>0;\bs \mu^{(i,j)},\bs \Sigma^{(i,j)}\right)$ the probability that $z_{1}>0,z_{2}>0$ where $\left(z_1,z_2\right)^\top\sim \mathcal{N}\left(\bs \mu^{(i,j)},\bs \Sigma^{(i,j)}\right)$.
\end{lemma}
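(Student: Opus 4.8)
The plan is to evaluate directly the two conditional expectations that appear in the marginalized loss, using $\bv y = \bv x + \bs\epsilon$ with $\bs\epsilon\sim\mathcal N(\bv 0,\sigma^2\bv I)$, and to match the results with the stated closed forms. For $\tilde\phi$: conditionally on $\bv x$, the scalar $\bv w_i^\top\bv y$ is Gaussian with mean $\bv w_i^\top\bv x = \|\bv w_i\|\,\hat{\bv w}_i^\top\bv x$ and variance $\sigma^2\|\bv w_i\|^2$, so $\mathbb E_{\bv y\mid\bv x}[[\bv w_i^\top\bv y]_+]$ equals $\mathbb E[[Z]_+]$ for $Z\sim\mathcal N(\mu,s^2)$. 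Writing $Z = \mu + sW$ with $W$ standard normal and using the elementary identity $\int_a^\infty w\,\varphi(w)\,dw = \varphi(a)$ yields $\mathbb E[[Z]_+] = \mu\,(1-\Phi(-\mu/s)) + s\,\varphi(-\mu/s)$; substituting $\mu=\|\bv w_i\|\,\hat{\bv w}_i^\top\bv x$ and $s=\sigma\|\bv w_i\|$ and pulling out the factor $\|\bv w_i\|$ gives the claimed formula. This computation also shows that the expectation depends on $(\bv w_i,\bv x,\sigma)$ only through $(\hat{\bv w}_i^\top\bv x,\|\bv w_i\|,\sigma)$, as the notation presumes, which is a consequence of the rotational invariance of the isotropic noise.

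For $\mathbf H$: by its definition it suffices to put $\Psi(\bv x,\bv w_i,\bv w_j,\sigma^2) = \mathbb E_{\bv y\mid\bv x}[[\bv w_i^\top\bv y]_+[\bv w_j^\top\bv y]_+]$ and evaluate this quantity, since the subtracted product $\tilde\phi\,\tilde\phi$ was already identified above. Conditionally on $\bv x$ the pair $(z_1,z_2) := (\bv w_i^\top\bv y,\bv w_j^\top\bv y)$ is bivariate normal with mean $\bs\mu^{(i,j)}$ and covariance $\bs\Sigma^{(i,j)}$ as displayed. I would use $[z_1]_+[z_2]_+ = z_1 z_2\,\mathbf 1\{z_1>0,z_2>0\}$, expand $z_1 z_2 = \mu_1\mu_2 + \mu_1(z_2-\mu_2) + \mu_2(z_1-\mu_1) + (z_1-\mu_1)(z_2-\mu_2)$, and integrate term by term over the orthant: the constant term contributes $\mu_1\mu_2\,P(z_1>0,z_2>0)$; each of the two linear terms is computed by Gaussian integration by parts (Stein's identity), which converts $\mathbb E[(z_2-\mu_2)\,\mathbf 1\{\cdots\}]$ into the boundary term $\sigma_{11}F_1(-\mu_1)\mu_2$ (and symmetrically $\mu_1\sigma_{22}F_2(-\mu_2)$), where $F_1,F_2$ record the univariate boundary conditionals — one coordinate conditioned to sit on the other coordinate's orthant boundary — expressed through $\bs\Lambda = \bs\Sigma^{-1}$; and the quadratic term $\mathbb E[(z_1-\mu_1)(z_2-\mu_2)\,\mathbf 1\{\cdots\}]$, obtained by applying the same identity twice, splits into the ``interior'' piece $\sigma_{12}P(z_1>0,z_2>0)$ plus a ``corner'' contribution $\det(\bs\Sigma)\,f(-\bs\mu^{(i,j)};\bv 0,\bs\Sigma^{(i,j)})$ coming from the Dirac terms. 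Collecting the pieces reproduces the displayed expression for $\Psi$, and taking $\mathbb E_{\bv x}$ of $\Psi - \tilde\phi\,\tilde\phi$ gives $\mathbf H_{ij}$. The bound $\mathbf H\succeq 0$ is then immediate, since $\mathbf H$ is the $\bv x$-average of the conditional covariance matrix of the vector $([\bv w_1^\top\bv y]_+,\dots,[\bv w_K^\top\bv y]_+)$ given $\bv x$, hence an average of positive semidefinite matrices.

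I expect the delicate step to be the quadratic term $\mathbb E[(z_1-\mu_1)(z_2-\mu_2)\,\mathbf 1\{z_1>0,z_2>0\}]$: one must make rigorous the repeated distributional use of Gaussian integration by parts — equivalently, differentiating the bivariate orthant probability twice in the mean vector, for instance via a smoothing/limiting argument — and then match the resulting Dirac and boundary terms to the $\det(\bs\Sigma)$ factor and to the $F_1,F_2$ expressions, tracking the various normalizing constants. The rest is routine bookkeeping with one- and two-dimensional Gaussian integrals. If the direct route becomes unwieldy, a shortcut is to invoke the classical recursions of Tallis (and Kan--Robotti) for moments of truncated multivariate normals and specialize them to dimension two.
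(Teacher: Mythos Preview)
Your plan is sound and would succeed. For $\tilde\phi$ your computation and the paper's are the same elementary integral, phrased slightly differently: the paper writes $\mathbb E[[Z]_+]=\mathbb E[Z\mid Z>0]\,P(Z>0)$ and cites the truncated-normal mean, while you compute the half-line integral directly via $\int_a^\infty w\,\varphi(w)\,dw=\varphi(a)$.

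For $\mathbf H$ the two routes genuinely diverge. The paper does not expand $z_1z_2$ about the mean and does not use Stein; instead it factors $\mathbb E[[z_1]_+[z_2]_+]=\mathbb E[z_1z_2\mid z_1>0,z_2>0]\,P(z_1>0,z_2>0)$ and plugs in an off-the-shelf formula for the second moment of a doubly-truncated bivariate normal (citing Manjunath--Wilhelm), then simplifies the resulting expression algebraically and separately computes $F_1,F_2$ by completing the square in the inner Gaussian integral. In other words, the paper's proof is precisely your proposed ``shortcut'' via truncated-normal moment recursions, not your primary Stein/IBP route. Your approach is more self-contained but requires the distributional care you already flag; the paper's is shorter but outsources the hard step to a citation.

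One small correction to your bookkeeping sketch: applying Stein to $\mu_2\,\mathbb E[(z_1-\mu_1)\mathbf 1\{z_1>0,z_2>0\}]$ produces \emph{two} boundary contributions, $\mu_2\sigma_{11}F_1(-\mu_1)$ and $\mu_2\sigma_{12}F_2(-\mu_2)$ (and symmetrically for the other linear term), not just one. The extra $\sigma_{12}$ cross terms are real; they cancel against matching pieces that emerge from the quadratic term $\mathbb E[(z_1-\mu_1)(z_2-\mu_2)\mathbf 1\{\cdots\}]$ when you apply Stein a second time. This is exactly the cancellation that, in the paper's approach, shows up as the algebraic simplification from the raw Manjunath--Wilhelm formula to the final displayed $\Psi$.
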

\begin{proof}
Let $x\sim \mathcal{N}\left(\mu,\sigma^{2}\right)$, then
\begin{align*}
E\left[[x]_+\right]&=E\left[[x]_+|x>0\right]P\left(x>0\right)+E\left[[x]_+|x<0\right]P\left(x>0\right)\\&=E\left[x|x>0\right]P\left(x>0\right)\\&=E\left[x|x>0\right]\left(1-\Phi\left(-\frac{\mu}{\sigma}\right)\right)
\end{align*}
Note that given $x>0$ the distribution of $x$ is truncated normal \citep{horrace2015moments}. Therefore,
\begin{align}
    \mathbb{E}\left[x|x>0\right]	&=\mu+\sigma\frac{\varphi\left(-\frac{\mu}{\sigma}\right)}{1-\Phi\left(-\frac{\mu}{\sigma}\right)} \nonumber \\
    \mathbb{E}\left[[x]_+\right]	&=\left(1-\Phi\left(-\frac{\mu}{\sigma}\right)\right)\mu+\sigma \varphi\left(-\frac{\mu}{\sigma}\right)\,. \label{eq:expectation of relu}
\end{align}
Note that given $\bv x$, $\bv y$ is a Gaussian random vector. Therefore, given $\bv x$,
\begin{align*}
    \bv w_{k}^{\top}\bv y \sim \mathcal{N}\left(\bv w_{k}^{\top}\bv x,\sigma^{2}\left\Vert \bv w_{k}\right\Vert ^{2}\right)
\end{align*}
and we obtain,
\begin{align*}
\mathbb{E}_{\bv y|\bv x}\left[[\bv w_{k}^{\top}\bv y]_+\right]&=\left\Vert \bv w_{k}\right\Vert \mathbb{E}_{\bv y|\bv x}\left[[\hat{\bv w}_{k}^{\top}\bv y]_+\right]\\&=\left\Vert \bv w_{k}\right\Vert \left(1-\Phi\left(-\frac{\hat{\bv w}_{k}^{\top}\bv x}{\sigma}\right)\right)\hat{\bv w}_{k}^{\top}\bv x+\sigma \varphi\left(-\frac{\hat{\bv w}_{i}^{\top}\bv x}{\sigma}\right)\,.
\end{align*}
\begin{figure}[t]
	\centering
	\subfloat[]{\includegraphics[height=3.9cm]{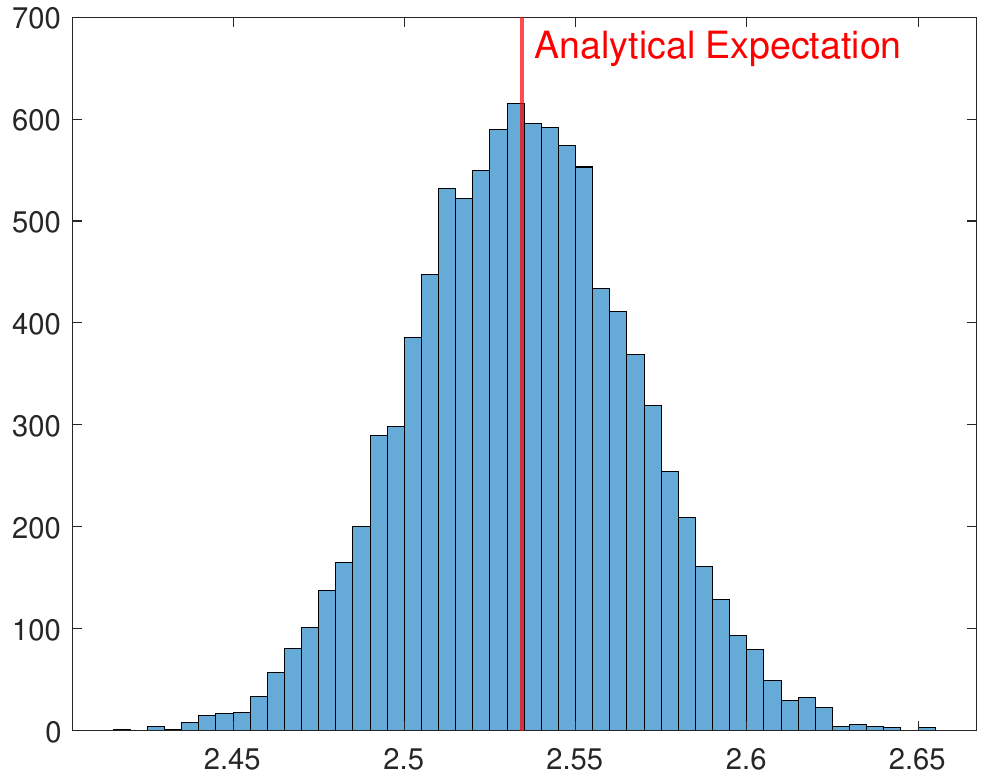}}
	% \hfill
	\subfloat[]{\includegraphics[height=3.9cm]{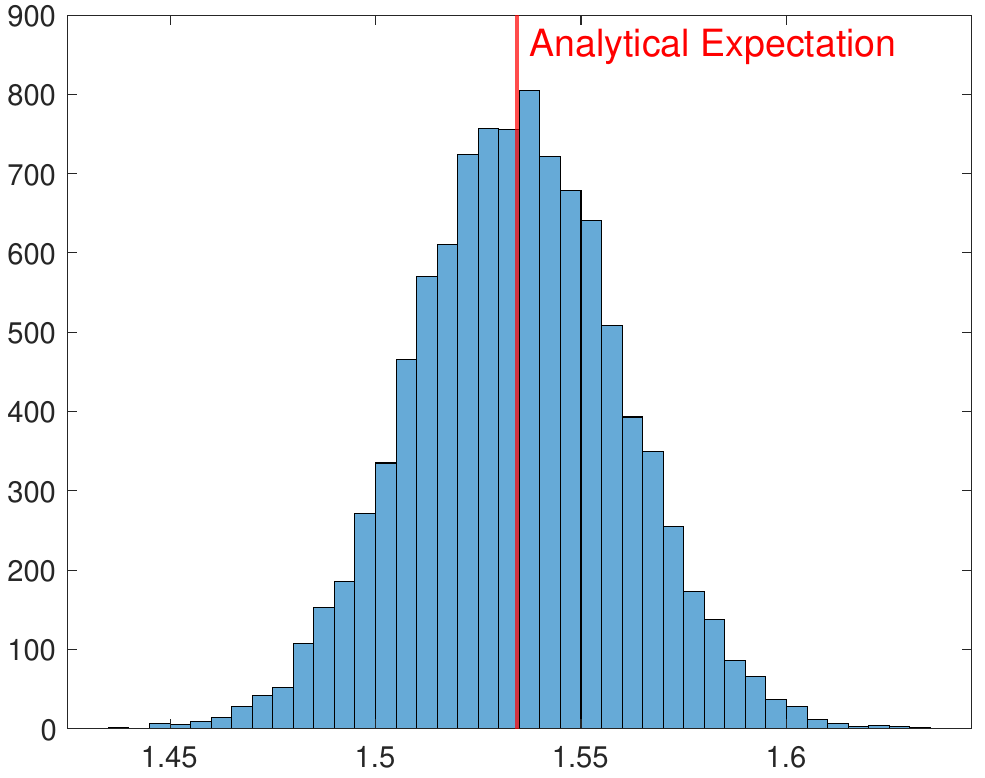}}
	\caption{\textbf{Numerical evaluation of \eqref{eq:expectation of relu}} Histogram of the sample average of \(\mathrm{ReLU}(x)\) for 10000 Monte-Carlo samples. We denote by \(E\) the analytical expectation and by \(\bar{E}\) the sample average. Figure (a) is for \(\mu=1, \sigma = 5\), the normalized error is \(\frac{|E-\bar{E}|}{E} = 0.0032\% \). Figure (b) is for \(\mu=-1, \sigma = 5\) , the normalized error is \(\frac{|E-\bar{E}|}{E} = 0.0059\% \).}\label{fig:numerical_eval_1}
\end{figure}
Let $\left(z_1,z_2\right)^\top$ be a Gaussian random vector with\footnote{Note that $\sigma_{ii}=\sigma_{i}^{2}$.}
\begin{align*}
    \bs \mu &= \begin{pmatrix}
\mu_{1}\\
\mu_{2}
\end{pmatrix} \\
\bs \Sigma &= \begin{pmatrix}
\sigma_{11} & \sigma_{12}\\
\sigma_{12} & \sigma_{22}
\end{pmatrix}
\end{align*}
then,
\begin{align*}
    \mathbb{E}\left[[z_{1}]_+[z_{2}]_+\right]	&=\mathbb{E}\left[[z_{1}]_+[z_{2}]_+|z_{1}>0,z_{2}>0\right]P\left(z_{1}>0,z_{2}>0\right) \\&=\mathbb{E}\left[z_{1}z_{2}|z_{1}>0,z_{2}>0\right]P\left(z_{1}>0,z_{2}>0\right)\,.
\end{align*}
Given $z_{1}>0,z_{2}>0$ the distribution of $\left(z_1,z_2\right)^\top$ is truncated multivariate normal distribution \citep{Manjunath_Wilhelm_2021}, therefore,
\begin{align}
\mathbb{E}\left[z_{1}z_{2}|z_{1}>0,z_{2}>0\right]&=\sigma_{12}+\sigma_{21}\left(-\mu_{1}\right)F_{1}\left(-\mu_{1}\right)+\sigma_{12}\left(-\mu_{2}\right)F_{2}\left(-\mu_{2}\right) \nonumber \\&+\left(\sigma_{11}\sigma_{22}-\sigma_{12}\sigma_{21}\right)\frac{f\left(-\bs \mu;\bv 0,\bs \Sigma\right)}{p(z_{1}>-\mu_{1},z_{2}>-\mu_{2};\bv 0,\bs \Sigma)} \nonumber \\&-\left(\sigma_{11}F_{1}\left(-\mu_{1}\right)+\sigma_{12}F_{2}\left(-\mu_{2}\right)\right)\left(\sigma_{21}F_{1}\left(-\mu_{1}\right)+\sigma_{22}F_{2}\left(-\mu_{2}\right)\right) \nonumber \\&+\left(\sigma_{11}F_{1}\left(-\mu_{1}\right)+\sigma_{12}F_{2}\left(-\mu_{2}\right)+\mu_{1}\right)\left(\sigma_{21}F_{1}\left(-\mu_{1}\right)+\sigma_{22}F_{2}\left(-\mu_{2}\right)+\mu_{2}\right) \nonumber \\&=\sigma_{12}+\mathrm{det}\left(\bs \Sigma\right)\frac{f\left(-\bs \mu;\bv 0,\bs \Sigma\right)}{p(z_{1}>-\mu_{1},z_{2}>-\mu_{2};\bv 0,\bs \Sigma)} \nonumber \\&+\sigma_{11}F_{1}\left(-\mu_{1}\right)\mu_{2}+\mu_{1}\sigma_{22}F_{2}\left(-\mu_{2}\right)+\mu_{1}\mu_{2} \label{eq:covariance of relu}
\end{align}
where $f\left(-\bs \mu;\bv 0,\bs \Sigma\right)$ is a density function of of Gaussian random vector with mean vector $\bv 0$ and covariance matrix $\bs \Sigma$ at the point $-\bs \mu$, and
\begin{align*}
    F_{2}\left(z_{2}\right)	&=\frac{\int_{-\mu_{1}}^{\infty}f\left(z_{1},z_{2};\bv 0,\bs \Sigma\right)dz_{1}}{P(Z_{1}>-\mu_{1},Z_{2}>-\mu_{2};\bv 0,\bs \Sigma)} \\ 
    F_{1}\left(z_{1}\right)	&=\frac{\int_{-\mu_{2}}^{\infty}f\left(z_{1},z_{2};\bv 0,\bs \Sigma\right)dz_{2}}{P(Z_{1}>-\mu_{1},Z_{2}>-\mu_{2};\bv 0,\bs \Sigma)}\,.
\end{align*}
We denote by $\bs \Lambda = \bs \Sigma^{-1}$ thus,
\begin{align*}
    &\int_{-\mu_{1}}^{\infty}\exp\left(-\frac{1}{2}\left(\Lambda_{11}z_{1}^{2}+2\Lambda_{12}z_{1}z_{2}+\Lambda_{22}z_{2}^{2}\right)\right)dz_{1}=\\&\exp\left(-\frac{1}{2}\Lambda_{22}z_{2}^{2}\right)\int_{-\mu_{1}}^{\infty}\exp\left(-\frac{1}{2}\Lambda_{11}z_{1}^{2}-\Lambda_{12}z_{1}z_{2}\right)dz_{1}=\\&\exp\left(-\frac{1}{2}\Lambda_{22}z_{2}^{2}\right)\int_{-\mu_{1}}^{\infty}\exp\left(-\frac{1}{2\frac{1}{\Lambda_{11}}}\left(z_{1}+\frac{\Lambda_{12}z_{2}}{\Lambda_{11}}\right)^{2}+\frac{\Lambda_{12}^{2}z_{2}^{2}}{2\Lambda_{11}}\right)dz_{1}=\\&\exp\left(-\frac{1}{2}\Lambda_{22}z_{2}^{2}+\frac{\Lambda_{12}^{2}z_{2}^{2}}{2\Lambda_{11}}\right)\int_{-\mu_{1}}^{\infty}\exp\left(-\frac{1}{2\frac{1}{\Lambda_{11}}}\left(z_{1}+\frac{\Lambda_{12}z_{2}}{\Lambda_{11}}\right)^{2}\right)dz_{1}=\\&\exp\left(-\frac{1}{2}\Lambda_{22}z_{2}^{2}+\frac{\Lambda_{12}^{2}z_{2}^{2}}{2\Lambda_{11}}\right)\sqrt{\frac{2\pi}{\Lambda_{11}}}P\left(z>-\mu_{1};-\frac{\Lambda_{12}z_{2}}{\Lambda_{11}},\frac{1}{\Lambda_{11}}\right)\,.
\end{align*}
Thus,
\begin{align*}
    \int_{-\mu_{1}}^{\infty}f\left(z_{1},z_{2};0,\bs \Sigma\right)dz_{1}&=\frac{1}{2\pi\sqrt{\det\left(\Sigma\right)}}\exp\left(-\frac{1}{2}\Lambda_{22}z_{2}^{2}+\frac{\Lambda_{12}^{2}z_{2}^{2}}{2\Lambda_{11}}\right)\sqrt{\frac{2\pi}{\Lambda_{11}}}P\left(z>-\mu_{1};-\frac{\Lambda_{12}z_{2}}{\Lambda_{11}},\frac{1}{\Lambda_{11}}\right)\\&=\frac{1}{\sqrt{2\pi\sigma_{22}}}\exp\left(-\frac{1}{2\sigma_{22}}z_{2}^{2}\right)P\left(z>-\mu_{1};-\frac{\Lambda_{12}z_{2}}{\Lambda_{11}},\frac{1}{\Lambda_{11}}\right)\,.
\end{align*}
Similarly, we obtain,
\begin{align*}
    \int_{-\mu_{2}}^{\infty}f\left(z_{1},z_{2};0,\bs \Sigma\right)dz_{2}&=\frac{1}{\sqrt{2\pi\sigma_{11}}}\exp\left(-\frac{1}{2\sigma_{11}}z_{1}^{2}\right)P\left(z>-\mu_{2};-\frac{\Lambda_{21}z_{1}}{\Lambda_{22}},\frac{1}{\Lambda_{22}}\right)\,.
\end{align*}
Therefore,
\begin{align*}
    F_{2}\left(z_{2}\right)	&=\frac{\varphi\left(\frac{z_2}{\sqrt{\sigma_{22}}}\right)P\left(z>-\mu_{1};-\frac{\Lambda_{12}z_{2}}{\Lambda_{11}},\frac{1}{\Lambda_{11}}\right)}{P(Z_{1}>-\mu_{1},Z_{2}>-\mu_{2};\bv 0,\bs \Sigma)} \\ 
    F_{1}\left(z_{1}\right)	&=\frac{\varphi\left(\frac{z_1}{\sqrt{\sigma_{11}}}\right)P\left(z>-\mu_{2};-\frac{\Lambda_{21}z_{1}}{\Lambda_{22}},\frac{1}{\Lambda_{22}}\right)}{P(Z_{1}>-\mu_{1},Z_{2}>-\mu_{2};\bv 0,\bs \Sigma)}\,.
\end{align*}
\end{proof}
Next, we present in Figures \ref{fig:numerical_eval_1} and \ref{fig:numerical_eval_2} a numerical evaluation of \eqref{eq:expectation of relu} and \eqref{eq:covariance of relu}. As can be seen, the Monte-Carlo simulations verify the analytical results.
\begin{figure}[t]
	\centering
	\subfloat[]{\includegraphics[height=3.9cm]{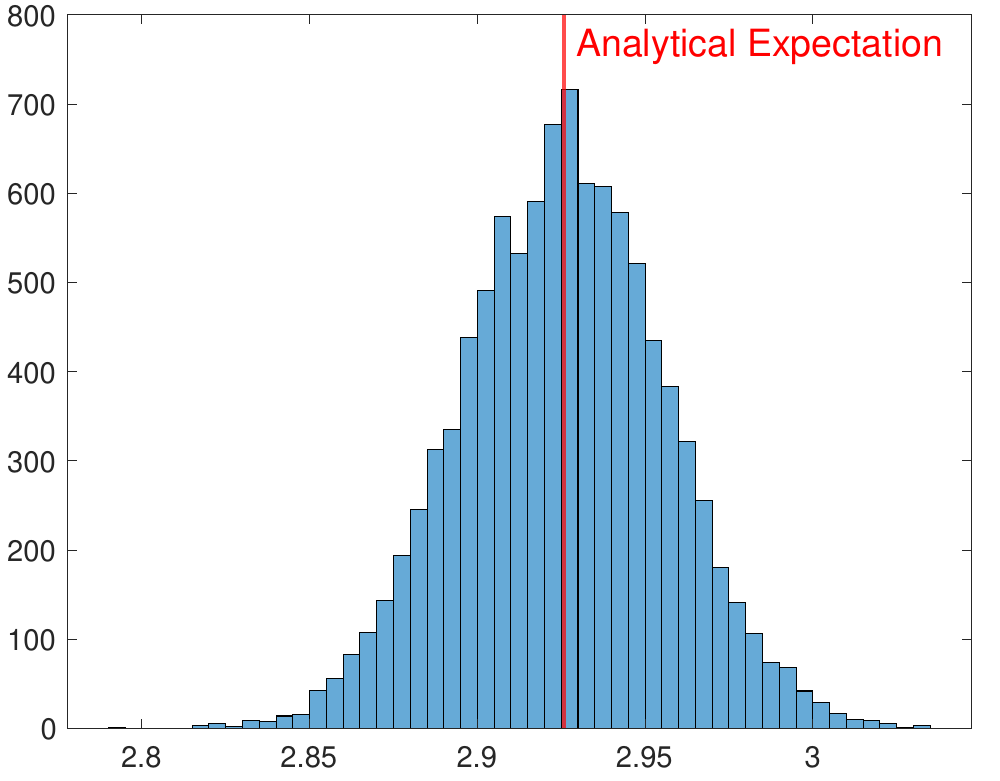}}
	% \hfill
	\subfloat[]{\includegraphics[height=3.9cm]{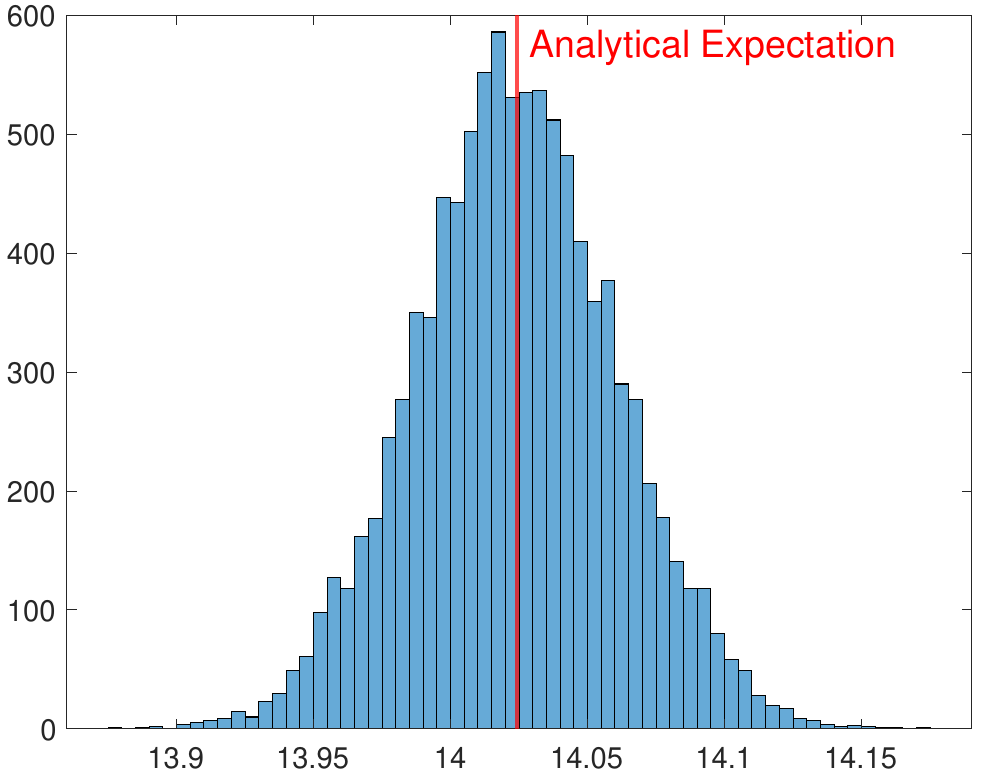}}
	\caption{\textbf{Numerical evaluation of \eqref{eq:covariance of relu}} Histogram of the sample average of \(\mathrm{ReLU}(z_1)\mathrm{ReLU}(z_2)\) for 10000 Monte-Carlo samples. We denote by \(E\) the analytical expectation and by \(\bar{E}\) the sample average. Figure (a) is for \(\bs \mu= \begin{pmatrix} -4\\ 17 \end{pmatrix} , \bs \Sigma = \begin{pmatrix} 13 & -9 \\ -9  & 8 \end{pmatrix} \), the normalized error is \(\frac{|E-\bar{E}|}{E} = 0.0093\% \). Figure (b) is for \(\bs \mu= \begin{pmatrix} 6\\ 2 \end{pmatrix} , \bs \Sigma = \begin{pmatrix} 10 & 2 \\ 2  & 1 \end{pmatrix} \), the normalized error is \(\frac{|E-\bar{E}|}{E} = 0.0044\% \).}\label{fig:numerical_eval_2}
\end{figure}
\section{Proofs of Results in Section \ref{sec:oned}} 
\label{appendix:proof_sec_1d}
\subsection{Proof of Proposition \ref{proposition:1d_denoiser}}
\label{appendix:proof_1d_denoiser}
\begin{proof}
    Let
    \begin{align*}
        \mathcal{D} = \{(y_n = x_n + \epsilon_{n,m}, x_n)\}, n\in[N], m\in [M], \quad -\infty < x_1 < x_2 < \cdots < n_N < \infty
    \end{align*}
    such that Assumption \ref{assumption:seperate_datapoints} holds.
    We define a reduced dataset, which only contains the noisy samples with the most extreme noises
       \begin{align*}
        \mathcal{\bar{D}} = \{(x_n + \epsilon^{\mathrm{min}}_n, x_n), (x_n + \epsilon^{\mathrm{max}}_n, x_n)\}, n\in[N]\,.
    \end{align*}
    We define the $\ell^2$ penalty on the weights,
    \begin{align*}
        C\left(\theta , K\right) = \sum_{k=1}^{K}\left(\|\va_k\|^2+\|\vw_k\|^2\right) \,.
    \end{align*}
    According to Theorem 1.2. in \citep{hanin2021ridgeless}, since we have opposite discrete curvature on the intervals $[x_1+\epsilon_1^{\mathrm{max}}, x_2+\epsilon_1^{\mathrm{min}}] \cdots [x_{N-1}+\epsilon_1^{max}, x_N+\epsilon_1^{min}]$,
    \begin{align*}
        \{ h_\theta(y)\mid h_\theta(y) = x \; \forall (y,x)\in \mathcal{\bar{D}}, C\left(\theta, K\right) = C_{*} \} = \{f^{*}_{1D}\left(y\right)\}\,,
    \end{align*} 
    where
    \begin{align*}
        C_{*} = \inf_{{\theta},K} \{C\left(\theta, K\right)\mid\forall (y,x)\in \mathcal{\bar{D}} \; \; h_\theta(y)=x \}\,.
    \end{align*}
    Also note that, 
    \begin{align*}
         \{ h_\theta(y)\mid h_\theta(y) = x \; \forall (y,x)\in \mathcal{D}, C\left(\theta, K\right) = C_{*} \} = \{f^{*}_{1D}\left(y\right)\}
    \end{align*}
    since $f^{*}_{1D}\left(y\right)$ interpolates all the points in $\mathcal{D}$, and if we assume by contradiction that $C_{*} > \inf_{{\theta},K} \{C\left(\theta, K\right)\mid\forall (y,x)\in \mathcal{D} \; \; h_\theta(y)=x \}$ then $C_{*} \ne \inf_{{\theta},K} \{C\left(\theta, K\right)\mid\forall (y,x)\in \mathcal{\bar{D}} \; \; h_\theta(y)=x \}$, thus contradicting Theorem 1.2. in \citep{hanin2021ridgeless}. Note that the minimal number of neurons needed to represent $f^{*}_{1D}\left(y\right)$ is $2N-2$. So, if $K \ge 2N-2$ then $ f^{*}_{1D}$ is the minimizer of the representation cost (i.e., $f^{*}_{1D} \in \arg\min_{f}R(f)$).
\end{proof}
\subsection{Proof of Theorem \ref{theorem:generalization of 1d denoiser}}
\label{appendix:generalization_proof}
First we prove the following lemmas.
\begin{lemma} \label{lemma:emmse convergance to 1-NN}
Let $y = x+\sigma \epsilon$ where $x,\epsilon \in \mathbb{R}, \sigma>0$ then,
\begin{align*}
 \lim_{\sigma\to0^+} \hat{x}^{\mathrm{eMMSE}}\left(y\right) = \hat{x}^{1-\mathrm{NN}}\left(y\right)
\end{align*}
\end{lemma}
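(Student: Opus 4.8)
The plan is to compute the limit of the eMMSE estimator pointwise and show it equals the 1-nearest-neighbor rule, except possibly on a measure-zero set of equidistant points. Recall from \eqref{eq:eMMSE} that, with noise variance $\sigma^2$,
\begin{align*}
\hat{x}^{\mathrm{eMMSE}}(y) = \frac{\sum_{n=1}^N x_n \exp\left(-\frac{(y-x_n)^2}{2\sigma^2}\right)}{\sum_{n=1}^N \exp\left(-\frac{(y-x_n)^2}{2\sigma^2}\right)}\,.
\end{align*}
Fix $y$ not equidistant from its two nearest clean points, and let $i = \arg\min_n |y - x_n|$ be the unique nearest neighbor, with $\delta_n = (y-x_n)^2 - (y-x_i)^2 \ge 0$ and $\delta_n > 0$ for $n \neq i$. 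First I would divide numerator and denominator by $\exp(-(y-x_i)^2/2\sigma^2)$, rewriting the expression as $\left(\sum_n x_n \exp(-\delta_n/2\sigma^2)\right) / \left(\sum_n \exp(-\delta_n/2\sigma^2)\right)$. As $\sigma \to 0^+$, each term with $n \neq i$ has $\exp(-\delta_n/2\sigma^2) \to 0$, while the $n = i$ term is identically $1$; hence the numerator tends to $x_i$ and the denominator to $1$, giving $\hat{x}^{\mathrm{eMMSE}}(y) \to x_i = \hat{x}^{1\text{-NN}}(y)$.

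The only step requiring care is the handling of ties, i.e., points $y$ equidistant from two (or more) nearest clean samples; there the limit is a weighted average of the tied $x_n$'s rather than a single nearest neighbor, and $\hat{x}^{1\text{-NN}}$ itself is only defined up to a tie-breaking convention. Since the set of such $y$ is finite (it is contained in the set of midpoints $\{(x_n+x_{n+1})/2\}$), this does not affect the MSE computation that uses this lemma, so I would simply state the convergence for all $y$ outside this finite exceptional set (or note that the statement holds for any fixed tie-breaking rule at those points). I do not anticipate a genuine obstacle here — the argument is the standard ``softmin concentrates on the minimizer'' limit — the main thing to be careful about is making the exceptional set explicit so that the downstream use in the proof of Theorem \ref{theorem:generalization of 1d denoiser} (which integrates against $p_x$ and so is insensitive to finitely many points) is clean.
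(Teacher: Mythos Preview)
Your proposal is correct and follows essentially the same approach as the paper: both divide numerator and denominator by the dominant exponential $\exp(-\min_n(y-x_n)^2/2\sigma^2)$ and observe that all non-minimizing terms vanish as $\sigma\to 0^+$. Your treatment is in fact slightly more careful than the paper's, which does not explicitly address the tie set.
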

\begin{proof}
Notice that the following holds,
\begin{align*}
    \frac{\exp\left(-\frac{\left\vert y-x_{i}\right\vert ^{2}}{2\sigma^{2}}\right)}{\sum_{i=1}^{N}\exp\left(-\frac{\left\vert y-x_{i}\right\vert ^{2}} {2\sigma^{2}}\right)} &= \frac{\exp\left(-\frac{\left\vert y-x_{i}\right\vert ^{2}}{2\sigma^{2}}\right)\exp\left(\frac{\min_n\left\vert y-x_{n}\right\vert ^{2}} {2\sigma^{2}}\right)}{\sum_{i=1}^{N}\exp\left(\frac{\left\vert y-x_{i}\right\vert ^{2}} {2\sigma^{2}}\right)\exp\left(\frac{\min_n\left\vert y-x_{n}\right\vert ^{2}} {2\sigma^{2}}\right)} \\
    &= \frac{\exp\left(-\frac{\left\vert y-x_{i}\right\vert ^{2} - \min_n\left\vert y-x_{n}\right\vert ^{2}}{2\sigma^{2}}\right)}{\sum_{i=1}^{N}\exp\left(-\frac{\left\vert y-x_{i}\right\vert ^{2} - \min_n\left\vert y-x_{n}\right\vert ^{2}}{2\sigma^{2}}\right)} \,.
\end{align*}
In addition,
\begin{align*}
\lim _{\sigma \to 0^+} \exp\left(-\frac{\left\vert y-x_{i}\right\vert ^{2} - \min_n\left\vert y-x_{n}\right\vert ^{2}}{2\sigma^{2}}\right) &= \begin{cases}
 1 & \left\vert y-x_{i}\right\vert ^{2} =  \min_n\left\vert y-x_{n}\right\vert ^{2} \\
 0 & \left\vert y-x_{i}\right\vert ^{2} \ne  \min_n\left\vert y-x_{n}\right\vert ^{2} 
\end{cases}
\end{align*}
so we obtain,
\begin{align*}
    \lim_{\sigma \to 0^+}\frac{\exp\left(-\frac{\left\vert y-x_{i}\right\vert ^{2}}{2\sigma^{2}}\right)}{\sum_{i=1}^{N}\exp\left(-\frac{\left\vert y-x_{i}\right\vert ^{2}} {2\sigma^{2}}\right)} = \begin{cases}
 1 & \left\vert y-x_{i}\right\vert ^{2} =  \min_n\left\vert y-x_{n}\right\vert ^{2} \\
 0 & \left\vert y-x_{i}\right\vert ^{2} \ne  \min_n\left\vert y-x_{n}\right\vert ^{2} 
\end{cases}\,.
\end{align*}
Therefore,
\begin{align*}
    \lim_{\sigma\to0^+} \hat{x}^{\mathrm{eMMSE}}\left(y\right) &=  \lim_{\sigma\to0^+}\frac{\sum_{i=1}^{N} x_i\exp\left(-\frac{\left\vert y-x_{i}\right\vert ^{2}}{2\sigma^{2}}\right)}{\sum_{i=1}^{N}\exp\left(-\frac{\left\vert y-x_{i}\right\vert ^{2}} {2\sigma^{2}}\right)} = \arg \min_n |y-x_n|^2 \\
    &= \arg \min_n |y-x_n| = \hat{x}^{1-\mathrm{NN}}\left(y\right)\,.
\end{align*}
\end{proof}
\begin{lemma} \label{lemma:limit mse emmse}
Let $y = x+\sigma \epsilon$ where $x\in \mathbb{R}, \sigma>0$ and $\epsilon \sim \mathcal{N}\left(0,1\right)$ then,
\begin{align*}
\lim_{\sigma\to0^+}\mathrm{MSE}\left(\hat{x}^{\mathrm{eMMSE}}\left(y\right)\right)	= \lim_{\sigma\to0^+}\mathrm{MSE}\left(\hat{x}^{1-\mathrm{NN}}\left(y\right)\right)
\end{align*}
\end{lemma}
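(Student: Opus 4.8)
The plan is to rewrite both mean-squared errors as expectations over the joint law of $(x,\epsilon)$ with $y=x+\sigma\epsilon$, namely
\[
\mathrm{MSE}\!\left(\hat{x}^{\mathrm{eMMSE}}(y)\right)=\mathbb{E}_{x,\epsilon}\!\left[\left(g_\sigma(x+\sigma\epsilon)-x\right)^2\right],\qquad \mathrm{MSE}\!\left(\hat{x}^{1-\mathrm{NN}}(y)\right)=\mathbb{E}_{x,\epsilon}\!\left[\left(\hat{x}^{1-\mathrm{NN}}(x+\sigma\epsilon)-x\right)^2\right],
\]
where $g_\sigma(y)\equiv\hat{x}^{\mathrm{eMMSE}}(y)$ is the function in \eqref{eq:eMMSE} (whose dependence on $\sigma$ we now make explicit), $x\sim p_x$ with bounded second moment as in Theorem \ref{theorem:generalization of 1d denoiser}, and then to show by dominated convergence that both sides tend, as $\sigma\to0^+$, to the common limit $\mathbb{E}_x\!\left[\min_n(x-x_n)^2\right]$.

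Two ingredients are needed. First, a common integrable dominating function: since $g_\sigma(y)$ is a convex combination of the $x_n$'s it satisfies $|g_\sigma(y)|\le\max_n|x_n|$ for every $y$ and every $\sigma$, and likewise $|\hat{x}^{1-\mathrm{NN}}(y)|\le\max_n|x_n|$; hence both integrands are bounded by $2\max_n x_n^2+2x^2$, which is integrable under the joint law because $p_x$ has bounded second moment and the Gaussian factor integrates to a constant. Second, pointwise a.e.\ convergence of the integrands. Fix an $x$ that is not a midpoint between two training points — this excludes only a finite, hence $p_x$-null, set — and let $x_{n^*}$ be its unique nearest neighbor, with $\delta_j:=|x-x_j|^2-|x-x_{n^*}|^2>0$ for $j\neq n^*$. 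For the $1$-NN term, $x+\sigma\epsilon\to x$ lies inside the open nearest-neighbor cell of $x_{n^*}$ for all small $\sigma$, so $\hat{x}^{1-\mathrm{NN}}(x+\sigma\epsilon)=x_{n^*}$ eventually. For the eMMSE term, expanding $|x+\sigma\epsilon-x_j|^2-|x+\sigma\epsilon-x_{n^*}|^2=\delta_j+2\sigma\epsilon(x_{n^*}-x_j)$ shows the weight on $x_j$ relative to that on $x_{n^*}$ in \eqref{eq:eMMSE} equals $\exp\!\left(-\tfrac{\delta_j}{2\sigma^2}-\tfrac{\epsilon(x_{n^*}-x_j)}{\sigma}\right)\to0$, since the $\sigma^{-2}$ term dominates the $\sigma^{-1}$ cross term; hence $g_\sigma(x+\sigma\epsilon)\to x_{n^*}$ as well. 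In both cases the integrand converges to $(x_{n^*}-x)^2=\min_n(x-x_n)^2$, and dominated convergence finishes the proof.

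The main point to be careful about — and the reason this does not follow immediately from Lemma \ref{lemma:emmse convergance to 1-NN} — is that $g_\sigma$ is evaluated at the \emph{$\sigma$-dependent} point $x+\sigma\epsilon$ rather than at a fixed argument, so the pointwise limit of Lemma \ref{lemma:emmse convergance to 1-NN} cannot be quoted directly; one must redo the exponent estimate and verify that the $O(\sigma^{-2})$ separation term beats the $O(\sigma^{-1})$ perturbation induced by the moving evaluation point. The exclusion of the boundary set of midpoints is harmless because $p_x$ is a density, and it is worth noting that the same argument simultaneously shows each of the two limits in the statement exists as a finite number, not merely that they coincide.
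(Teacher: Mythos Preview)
Your proof is correct and uses the same essential ingredients as the paper (uniform boundedness of both estimators by $\max_n|x_n|$, giving an $L^1$ dominant; then dominated convergence), but the organization differs. The paper writes
\[
\mathrm{MSE}(\hat{x}^{\mathrm{eMMSE}})=\mathrm{MSE}(\hat{x}^{1\text{-}\mathrm{NN}})+\mathbb{E}\bigl[(\hat{x}^{\mathrm{eMMSE}}-\hat{x}^{1\text{-}\mathrm{NN}})^2\bigr]+2\,\mathbb{E}\bigl[(\hat{x}^{1\text{-}\mathrm{NN}}-x)(\hat{x}^{\mathrm{eMMSE}}-\hat{x}^{1\text{-}\mathrm{NN}})\bigr]
\]
and argues the two extra terms vanish, whereas you compute both limits directly and identify the common value $\mathbb{E}_x[\min_n(x-x_n)^2]$. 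Your route has two advantages worth noting. First, the paper invokes Lemma~\ref{lemma:emmse convergance to 1-NN} (pointwise convergence of $\hat{x}^{\mathrm{eMMSE}}(y)$ for \emph{fixed} $y$) and then silently applies it at the moving point $y=x+\sigma\epsilon$; your explicit exponent calculation $\exp(-\delta_j/2\sigma^2-\epsilon(x_{n^*}-x_j)/\sigma)\to 0$ fills precisely this gap. Second, by computing the limit explicitly you also establish existence and finiteness of each side, not just equality. The cost is minor: you must separately handle the midpoint set, which the paper's difference formulation sidesteps (since $\hat{x}^{\mathrm{eMMSE}}-\hat{x}^{1\text{-}\mathrm{NN}}\to 0$ even at a midpoint, where both limits are simply the average of the two tied neighbors).
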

\begin{proof}
\begin{align*}
        \mathrm{MSE}\left(\hat{x}^{\mathrm{eMMSE}}\left(y\right)\right) &= \mathbb{E}\left[\left(\hat{x}^{\mathrm{eMMSE}}\left(y\right) - x\right)^2\right] \\ &= \mathbb{E}\left[\left(\hat{x}^{\mathrm{eMMSE}}\left(y\right) - \hat{x}^{1-\mathrm{NN}}\left(y\right) +\hat{x}^{1-\mathrm{NN}}\left(y\right) - x\right)^2\right]\\ 
        &= \mathrm{MSE}\left(\hat{x}^{1-\mathrm{NN}}\left(y\right)\right) + \mathbb{E}\left[\left(\hat{x}^{\mathrm{eMMSE}}\left(y\right) - \hat{x}^{1-\mathrm{NN}}\left(y\right)\right)^2\right] \\
        &\quad+ 2\mathbb{E}\left[\left(\hat{x}^{1-\mathrm{NN}}\left(y\right)-x\right)\left(\hat{x}^{\mathrm{eMMSE}}\left(y\right) - \hat{x}^{1-\mathrm{NN}}\left(y\right)\right)\right]
\end{align*}
Note that,
\begin{align*}
    \hat{x}^{\mathrm{eMMSE}}\left(y\right) &= \frac{\sum_{i=1}^{N} x_i\exp\left(-\frac{\left\vert y-x_{i}\right\vert ^{2}}{2\sigma^{2}}\right)}{\sum_{i=1}^{N}\exp\left(-\frac{\left\vert y-x_{i}\right\vert ^{2}} {2\sigma^{2}}\right)} \le \frac{\sum_{i=1}^{N} \max_j\{x_j\}\exp\left(-\frac{\left\vert y-x_{i}\right\vert ^{2}}{2\sigma^{2}}\right)}{\sum_{i=1}^{N}\exp\left(-\frac{\left\vert y-x_{i}\right\vert ^{2}} {2\sigma^{2}}\right)} \\
    &=\max_j\{x_j\}\frac{\sum_{i=1}^{N} \exp\left(-\frac{\left\vert y-x_{i}\right\vert ^{2}}{2\sigma^{2}}\right)}{\sum_{i=1}^{N}\exp\left(-\frac{\left\vert y-x_{i}\right\vert ^{2}} {2\sigma^{2}}\right)} = \max_j\{x_j\}\,,
\end{align*}
\begin{align*}
    \hat{x}^{\mathrm{eMMSE}}\left(y\right) &= \frac{\sum_{i=1}^{N} x_i\exp\left(-\frac{\left\vert y-x_{i}\right\vert ^{2}}{2\sigma^{2}}\right)}{\sum_{i=1}^{N}\exp\left(-\frac{\left\vert y-x_{i}\right\vert ^{2}} {2\sigma^{2}}\right)} \ge \frac{\sum_{i=1}^{N} \min_j\{x_j\}\exp\left(-\frac{\left\vert y-x_{i}\right\vert ^{2}}{2\sigma^{2}}\right)}{\sum_{i=1}^{N}\exp\left(-\frac{\left\vert y-x_{i}\right\vert ^{2}} {2\sigma^{2}}\right)} \\
     &=\min_j\{x_j\}\frac{\sum_{i=1}^{N} \exp\left(-\frac{\left\vert y-x_{i}\right\vert ^{2}}{2\sigma^{2}}\right)}{\sum_{i=1}^{N}\exp\left(-\frac{\left\vert y-x_{i}\right\vert ^{2}} {2\sigma^{2}}\right)} = \min_j\{x_j\}\,.
\end{align*}
Similarly, 
\begin{align*}
    \hat{x}^{1-\mathrm{NN}}\left(y\right) &= \arg \min_i |y-x_i| \le \max_i\{x_i\} \\
    \hat{x}^{1-\mathrm{NN}}\left(y\right) &= \arg \min_i |y-x_i| \ge \min_i\{x_i\}
\end{align*}
thus $\exists M_0 >0 \; \forall\sigma>0$
\begin{align*}
    |\hat{x}^{\mathrm{eMMSE}}\left(y\right) - \hat{x}^{1-\mathrm{NN}}\left(y\right)|<M_0\,.
\end{align*}
According to Lemma \ref{lemma:emmse convergance to 1-NN} 
\begin{align*}
 \lim_{\sigma\to0^+} \hat{x}^{\mathrm{eMMSE}}\left(y\right) = \hat{x}^{1-\mathrm{NN}}\left(y\right)
\end{align*} 
almost surely. Note that,
\begin{align*}
    \mathbb{E}_{x,y}\left[\left(\hat{x}^{\mathrm{eMMSE}}\left(y\right) - \hat{x}^{1-\mathrm{NN}}\left(y\right)\right)^2\right]  = \mathbb{E}_{x,\epsilon}\left[\left(\hat{x}^{\mathrm{eMMSE}}\left(x+\sigma \epsilon\right) - \hat{x}^{1-\mathrm{NN}}\left(x+\sigma \epsilon\right)\right)^2\right]
\end{align*}
Therefore, by the Dominated convergence theorem we obtain
\begin{align*}
&\lim_{\sigma \to 0^+}\mathbb{E}_{x,\epsilon}\left[\left(\hat{x}^{\mathrm{eMMSE}}\left(x+\sigma \epsilon\right) - \hat{x}^{1-\mathrm{NN}}\left(x+\sigma \epsilon\right)\right)^2\right] = \\ &\mathbb{E}_{x,\epsilon}\left[\lim_{\sigma \to 0^+}\left(\hat{x}^{\mathrm{eMMSE}}\left(x+\sigma \epsilon\right) - \hat{x}^{1-\mathrm{NN}}\left(x+\sigma \epsilon\right)\right)^2\right] = 0
\end{align*} 
Similarly,  
\begin{align*}
&\lim_{\sigma \to 0^+}\mathbb{E}_{x,\epsilon}\left[\left(\hat{x}^{1-\mathrm{NN}}\left(x+\sigma \epsilon\right)-x\right)\left(\hat{x}^{\mathrm{eMMSE}}\left(x+\sigma \epsilon\right) - \hat{x}^{1-\mathrm{NN}}\left(x+\sigma \epsilon\right)\right)\right]= \\ & \mathbb{E}_{x,\epsilon}\left[\lim_{\sigma \to 0^+}\left(\hat{x}^{1-\mathrm{NN}}\left(x+\sigma \epsilon\right)-x\right)\left(\hat{x}^{\mathrm{eMMSE}}\left(x+\sigma \epsilon\right) - \hat{x}^{1-\mathrm{NN}}\left(x+\sigma \epsilon\right)\right)\right] = 0
\end{align*} 
Since $\hat{x}^{\mathrm{eMMSE}}\left(y\right)$ and $ \hat{x}^{1-\mathrm{NN}}\left(y\right)$ are bounded and $\mathbb{E}\left[x\right]<\infty$. 
Therefore, we get
\begin{align*}
  \lim_{\sigma\to0^+}\mathrm{MSE}\left(\hat{x}^{\mathrm{eMMSE}}\left(y\right)\right)	= \lim_{\sigma\to0^+}\mathrm{MSE}\left(\hat{x}^{1-\mathrm{NN}}\left(y\right)\right)
\end{align*}
\end{proof}
Next, we prove Theorem \ref{theorem:generalization of 1d denoiser}.
\begin{proof}
Assume, without loss of generality, that $N=2$. Let $p_{x}\left(x\right)$ be a probability density function with bounded second moment such that  $p_{x}\left(x\right)>0$ for all $x\in\left[x_{1},x_{2}\right]$. According to Lemma \ref{lemma:limit mse emmse} 
\begin{align*}
\lim_{\sigma\to0^+}\mathrm{MSE}\left(\hat{x}^{\mathrm{eMMSE}}\left(y\right)\right)	= \lim_{\sigma\to0^+}\mathrm{MSE}\left(\hat{x}^{1-\mathrm{NN}}\left(y\right)\right)
\end{align*} 
So we need to prove that
\begin{align*}
    \lim_{\sigma\to0^+}\mathrm{MSE}\left( \hat{x}^{1-\mathrm{NN}}\left(y\right)\right)&>\lim_{\sigma\to0^+}\mathrm{MSE}\left(f^{*}_{1D}\left(y\right) \right)
\end{align*}
For the case of $N=2$, the training set includes two data points $\{x_1,x_2\}$. So we get,
\begin{align*}
    \hat{x}^{1-\mathrm{NN}}\left(y\right) = &\begin{cases}
x_{1} & y<\frac{x_{1}+x_{2}}{2}\\
x_{2} & \frac{x_{1}+x_{2}}{2}\leq y
\end{cases} \\
f^{*}_{1D}\left(y\right) =& \begin{cases}
        x_{1} & y
        <x_{1}+\Delta_{1} \\
        \frac{x_{2}-x_{1}}{x_{2}-x_{1}+\Delta_{2}-\Delta_{1}}\left(y-x_{1}-\Delta_{1}\right)+x_{1} & x_{1}+\Delta_{1}\leq y\le x_{2}+\Delta_{2} \\
        x_{2} & y> x_{2}+\Delta_{2}
    \end{cases}
\end{align*}
where $\max_{m\in[M]}\epsilon_{1,m}=\Delta_{1}>0, \min_{m\in[M]}\epsilon_{2,m}=\Delta_{2}<0$.
Note that $\hat{x}^{1-\mathrm{NN}}\left(y\right) = f^{*}_{1D}\left(y\right)$ for all $y\in (-\infty,x_1+\Delta_1)\cup (x_2+\Delta_, \infty)$ and $\mathbb{E}\left[x\right]<\infty, \mathbb{E}\left[x^2\right]<\infty$.
Therefore,
\begin{align}
    &\mathrm{MSE}\left( \hat{x}^{1-\mathrm{NN}}\left(y\right)\right)-\mathrm{MSE}\left(f^{*}_{1D}\left(y\right) \right) = \nonumber \\&\int_{-\infty}^{\infty}\mathrm{d}x\int_{x_{1}+\Delta_{1}}^{x_{2}+\Delta_{2}}\mathrm{d}y\left(\hat{x}^{1-\mathrm{NN}}\left(y\right)- x\right)^{2}p_{y|x}\left(y|x\right)p_{x}\left(x\right) \label{eq:mse_1-NN} \\&-\int_{-\infty}^{\infty}\mathrm{d}x\int_{x_{1}+\Delta_{1}}^{x_{2}+\Delta_{2}}\mathrm{d}y\left(f^{*}_{1D}\left(y\right)-x\right)^{2}p_{y|x}\left(y|x\right)p_{x}\left(x\right) \label{eq:mse_offline}
\end{align}
First, we derive \eqref{eq:mse_1-NN}:
\begin{align*}
&\int_{-\infty}^{\infty}\mathrm{d}x\int_{x_{1}+\Delta_{1}}^{x_{2}+\Delta_{2}}\mathrm{d}y\left(\hat{x}^{1-\mathrm{NN}}\left(y\right)-x\right)^{2}p_{y|x}\left(y|x\right)p_{x}\left(x\right)=\\&\int_{-\infty}^{\infty}\int_{x_{1}+\Delta_{1}}^{\frac{x_{1}+x_{2}}{2}}\left(x_{1}-x\right)^{2}p_{y|x}\left(y|x\right)p_{x}\left(x\right)\mathrm{\mathrm{d}yd}x+\int_{-\infty}^{\infty}\int_{\frac{x_{1}+x_{2}}{2}}^{x_{2}+\Delta_{2}}\left(x_{2}-x\right)^{2}p_{y|x}\left(y|x\right)p_{x}\left(x\right)\mathrm{\mathrm{d}yd}x =\\&\mathbb{E}_{x}\left[P\left(y\in\left[x_{1}+\Delta_{1},\frac{x_{1}+x_{2}}{2}\right]\biggr|x\right)\left(x_{1}-x\right)^{2}\right]+\mathbb{E}_{x}\left[P\left(y\in\left[\frac{x_{1}+x_{2}}{2},x_{2}+\Delta_{2}\right]\biggr|x\right)\left(x_{2}-x\right)^{2}\right]\,.
\end{align*}
Note that,
\begin{align*}
    \mathbb{E}\left[P\left(y\in\left[x_{1}+\Delta_{1},\frac{x_{1}+x_{2}}{2}\right]\biggr|x\right)\left(x_{1}-x\right)^{2}\right] &< \mathbb{E}\left[\left(x_{1}-x\right)^{2}\right] < \infty\\
     \mathbb{E}\left[P\left(y\in\left[\frac{x_{1}+x_{2}}{2},x_{2}+\Delta_{2}\right]\biggr|x\right)\left(x_{2}-x\right)^{2}\right] &<  \mathbb{E}\left[\left(x_{2}-x\right)^{2}\right] < \infty 
\end{align*}
thus by the Dominated convergence theorem we obtain
\begin{align*}
    &\lim_{\sigma\to0^+} \int_{-\infty}^{\infty}\mathrm{d}x\int_{x_{1}+\Delta_{1}}^{x_{2}+\Delta_{2}}\mathrm{d}y\left(\hat{x}^{1-\mathrm{NN}}\left(y\right)-x\right)^{2}p_{y|x}\left(y|x\right)p_{x}\left(x\right) = 
    \\&\mathbb{E}_{x}\left[\lim_{\sigma\to0^+} P\left(y\in\left[x_{1}+\Delta_{1},\frac{x_{1}+x_{2}}{2}\right]\biggr|x\right)\left(x_{1}-x\right)^{2}\right] +\mathbb{E}_{x}\left[\lim_{\sigma\to0^+} P\left(y\in\left[\frac{x_{1}+x_{2}}{2},x_{2}+\Delta_{2}\right]\biggr|x\right)\left(x_{2}-x\right)^{2}\right] =
     \\&\mathbb{E}_{x}\left[\left(x-x_{1}\right)^{2}1\left\{ x\in\left[x_{1},\frac{x_{1}+x_{2}}{2}\right]\right\} \right]+\mathbb{E}_{x}\left[\left(x-x_{2}\right)^{2}1\left\{ x\in\left[\frac{x_{1}+x_{2}}{2},x_{2}\right]\right\} \right] > C > 0
\end{align*}
since $p_{x}\left(x\right)>0$ for all $x\in\left[x_{1},x_{2}\right]$.
Next, we derive \eqref{eq:mse_offline} 
\begin{align*}
    &\int_{-\infty}^{\infty}\mathrm{d}x\int_{x_{1}+\Delta_{1}}^{x_{2}+\Delta_{2}}\mathrm{d}y\left(f^{*}_{1D}\left(y\right)-x\right)^{2}p_{y|x}\left(y|x\right)p_{x}\left(x\right) = \\ &\mathbb{E}_{x,y}\left[1\left\{ y\in\left[x_{1}+\Delta_1,x_{2}+\Delta_2\right]\right\}\left(f^{*}_{1D}\left(y\right)-x\right)^{2}\right] =\\
    &\mathbb{E}_{x,\epsilon}\left[1\left\{ x+\sigma \epsilon\in\left[x_{1}+\Delta_1,x_{2}+\Delta_2\right]\right\}\left(f^{*}_{1D}\left(x+\sigma \epsilon\right)-x\right)^{2}\right]
\end{align*}
Note that,
\begin{align*}
    \mathbb{E}_{x,\epsilon}\left[1\left\{ x+\sigma \epsilon\in\left[x_{1}+\Delta_1,x_{2}+\Delta_2\right]\right\}\left(f^{*}_{1D}\left(x+\sigma \epsilon\right)-x\right)^{2}\right] <  \mathbb{E}_{x,\epsilon}\left[\left(f^{*}_{1D}\left(x+\sigma \epsilon\right)-x\right)^{2}\right] < \infty 
\end{align*}
thus by the Dominated convergence theorem we obtain
\begin{align*}
&\lim_{\epsilon \to 0^{+}}\int_{-\infty}^{\infty}\mathrm{d}x\int_{x_{1}+\Delta_{1}}^{x_{2}+\Delta_{2}}\mathrm{d}y\left(f^{*}_{1D}\left(y\right)-x\right)^{2}p_{y|x}\left(y|x\right)p_{x}\left(x\right) = \\
& \lim_{\epsilon \to 0^{+}}\mathbb{E}_{x,\epsilon}\left[1\left\{ x+\sigma \epsilon\in\left[x_{1}+\Delta_1,x_{2}+\Delta_2\right]\right\}\left(f^{*}_{1D}\left(x+\sigma \epsilon\right)-x\right)^{2}\right] = \\
& \mathbb{E}_{x,\epsilon}\left[\lim_{\epsilon \to 0^{+}} 1\left\{ x+\sigma \epsilon\in\left[x_{1}+\Delta_1,x_{2}+\Delta_2\right]\right\}\left(f^{*}_{1D}\left(x+\sigma \epsilon\right)-x\right)^{2}\right]= 0
\end{align*}
since 
\begin{align*}
    \lim_{\epsilon \to 0^{+}} 1\left\{ x+\sigma \epsilon\in\left[x_{1}+\Delta_1,x_{2}+\Delta_2\right]\right\} &= 1\left\{ x\in\left[x_{1},x_{2}\right]\right\} \\
    \lim_{\epsilon \to 0^{+}} f^{*}_{1D}\left(x+\sigma \epsilon\right) &= \begin{cases}
        x_1 & x < x_1 \\
        x & x_1 \le x \le x_2 \\
        x_2 & x > x_2
    \end{cases}
\end{align*}
\end{proof}
\subsection{Proof of Lemma \ref{lemma:contractive}}
\label{appendix:proof contractive}
\begin{proof}
First we prove that for all $y\in  \cup_{n\in [N-1]} \Bigl\{\frac{ x_{n+1}\epsilon^{\mathrm{max}}_n - x_n \epsilon^{\mathrm{min}}_{n+1}}{\epsilon^{\mathrm{max}}_n-\epsilon^{\mathrm{min}}_{n+1}}\Bigr\}\,\, f^{*}_{1D}(y) = y$. We find the intersection between the line  $f^{*}_{1D}(y) = y$ and the linear part of \eqref{eq:offline_denoiser1d} (the third branch)
\begin{align}
    \frac{x_{n+1}-x_n}{x_{n+1}+\epsilon^{\mathrm{min}}_{n+1}-\left(x_n+\epsilon^{\mathrm{max}}_n\right)}\left(y-\left(x_n+\epsilon^{\mathrm{max}}_n\right)\right) + x_n &= y \\
    \left(x_{n+1}-x_n\right)\left(y-\left(x_n+\epsilon^{\mathrm{max}}_n\right)\right) + x_n \left(x_{n+1}+\epsilon^{\mathrm{min}}_{n+1}-\left(x_n+\epsilon^{\mathrm{max}}_n\right)\right) &= y \left(x_{n+1}+\epsilon^{\mathrm{min}}_{n+1}-\left(x_n+\epsilon^{\mathrm{max}}_n\right)\right)\\
    x_n \left(x_{n+1}+\epsilon^{\mathrm{min}}_{n+1}-\left(x_n+\epsilon^{\mathrm{max}}_n\right)\right) -  \left(x_{n+1}-x_n\right)\left(x_n+\epsilon^{\mathrm{max}}_n\right) &= y \left(\epsilon^{\mathrm{min}}_{n+1}-\epsilon^{\mathrm{max}}_n\right)\\
     x_n \left(x_{n+1}+\epsilon^{\mathrm{min}}_{n+1}\right) -  x_{n+1}\left(x_n+\epsilon^{\mathrm{max}}_n\right) &= y \left(\epsilon^{\mathrm{min}}_{n+1}-\epsilon^{\mathrm{max}}_n\right)\\
      x_n \epsilon^{\mathrm{min}}_{n+1} -  x_{n+1}\epsilon^{\mathrm{max}}_n &= y \left(\epsilon^{\mathrm{min}}_{n+1}-\epsilon^{\mathrm{max}}_n\right) \\
      y = \frac{ x_{n+1}\epsilon^{\mathrm{max}}_n - x_n \epsilon^{\mathrm{min}}_{n+1}}{\epsilon^{\mathrm{max}}_n-\epsilon^{\mathrm{min}}_{n+1}}\,.
\end{align}
Note that
\begin{align}
    x_n + \epsilon^{\mathrm{max}}_{n}< \frac{ x_{n+1}\epsilon^{\mathrm{max}}_n - x_n \epsilon^{\mathrm{min}}_{n+1}}{\epsilon^{\mathrm{max}}_n-\epsilon^{\mathrm{min}}_{n+1}} < x_{n+1}+ \epsilon^{\mathrm{min}}_{n+1}
\end{align}
Since,
\begin{align}
    x_n + \epsilon^{\mathrm{max}}_{n} &< \frac{ x_{n+1}\epsilon^{\mathrm{max}}_n - x_n \epsilon^{\mathrm{min}}_{n+1}}{\epsilon^{\mathrm{max}}_n-\epsilon^{\mathrm{min}}_{n+1}} \\
    \left(x_n + \epsilon^{\mathrm{max}}_{n}\right)\left(\epsilon^{\mathrm{max}}_n-\epsilon^{\mathrm{min}}_{n+1}\right) &< x_{n+1}\epsilon^{\mathrm{max}}_n - x_n \epsilon^{\mathrm{min}}_{n+1} \\
    \left(x_n   + \epsilon^{\mathrm{max}}_{n} - \epsilon^{\mathrm{min}}_{n+1}\right)\epsilon^{\mathrm{max}}_n  &< x_{n+1}\epsilon^{\mathrm{max}}_n \\
    x_n   + \epsilon^{\mathrm{max}}_{n} - \epsilon^{\mathrm{min}}_{n+1}  &< x_{n+1} \\
    x_n   + \epsilon^{\mathrm{max}}_{n}  &< x_{n+1} + \epsilon^{\mathrm{min}}_{n+1}
\end{align}
which holds according to Assumption \ref{assumption:seperate_datapoints}.
\begin{align}
    \frac{ x_{n+1}\epsilon^{\mathrm{max}}_n - x_n \epsilon^{\mathrm{min}}_{n+1}}{\epsilon^{\mathrm{max}}_n-\epsilon^{\mathrm{min}}_{n+1}} &< x_{n+1}+ \epsilon^{\mathrm{min}}_{n+1}\\
     x_{n+1}\epsilon^{\mathrm{max}}_n - x_n \epsilon^{\mathrm{min}}_{n+1} &< \left(x_{n+1}+ \epsilon^{\mathrm{min}}_{n+1}\right)\left(\epsilon^{\mathrm{max}}_n-\epsilon^{\mathrm{min}}_{n+1}\right) \\
    - x_n \epsilon^{\mathrm{min}}_{n+1} &< -\epsilon^{\mathrm{min}}_{n+1}\left( x_{n+1} -\epsilon^{\mathrm{max}}_n + \epsilon^{\mathrm{min}}_{n+1}\right) \\
    x_n  &< x_{n+1} -\epsilon^{\mathrm{max}}_n + \epsilon^{\mathrm{min}}_{n+1} \\
    x_n + \epsilon^{\mathrm{max}}_n &< x_{n+1} + \epsilon^{\mathrm{min}}_{n+1}
\end{align}
which holds according to Assumption \ref{assumption:seperate_datapoints}.
\begin{figure}[t]
	\centering
	\includegraphics[height=5cm]{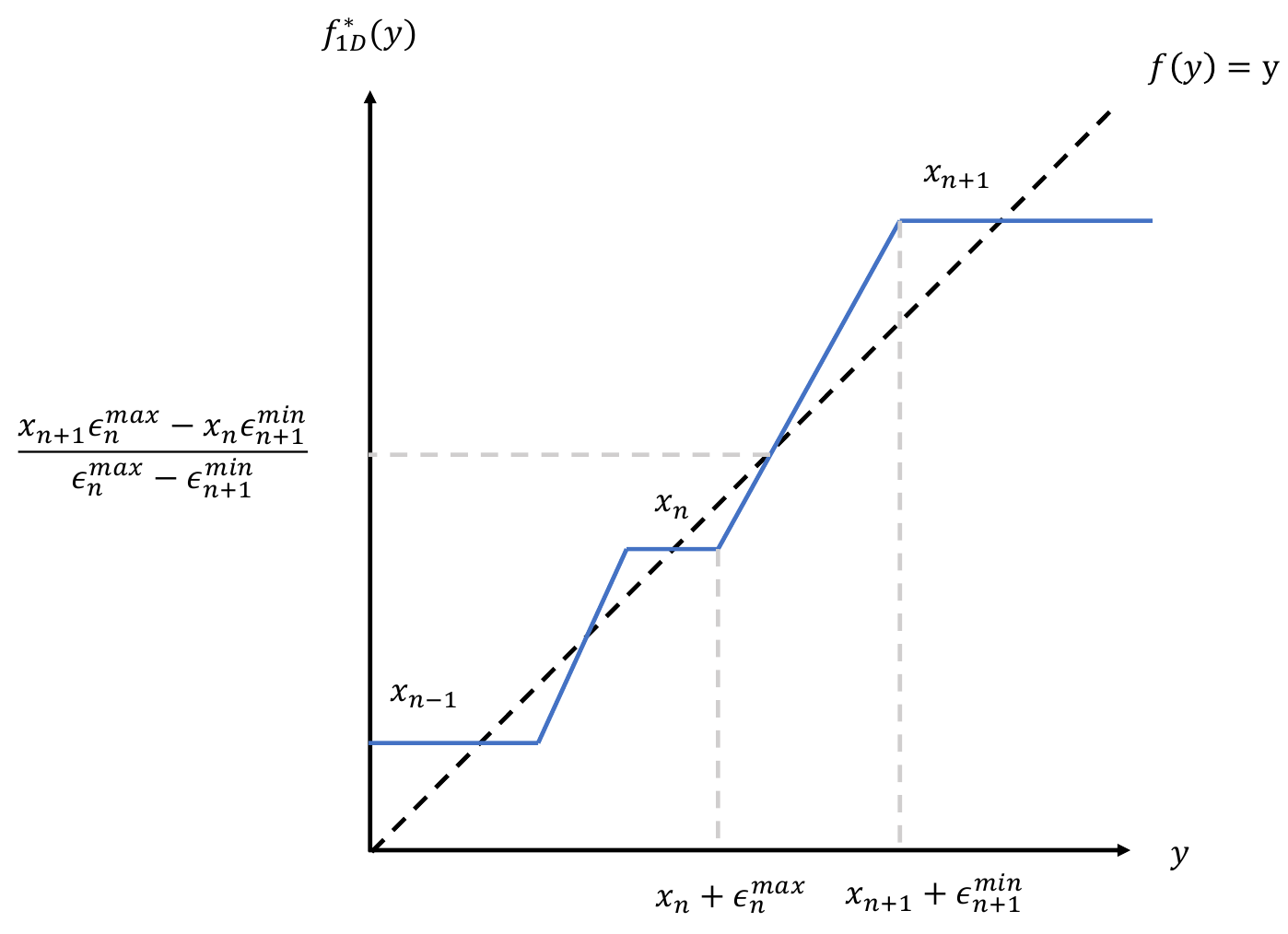}
	\caption{Illustration of $f_{1D}^{*}(y)$.
    \label{Figure:illustration of f_1d}}
\end{figure}

Next we prove that $f^{*}_{1D}(y)$ is contractive toward a set of the clean datapoints $\{\bv x_n\}_{n=1}^N$ on ${\mathbb{R} \setminus \cup_{n\in [N-1]} \Bigl\{\frac{ x_{n+1}\epsilon^{\mathrm{max}}_n - x_n \epsilon^{\mathrm{min}}_{n+1}}{\epsilon^{\mathrm{max}}_n-\epsilon^{\mathrm{min}}_{n+1}}\Bigr\}}$. 
In the case where  $y \in (-\infty, x_1 + \epsilon^{\mathrm{min}}_{1}]$ or $y \in  \cup_{n\in[N]} [x_n+\epsilon^{\mathrm{min}}_n, x_n+\epsilon^{\mathrm{max}}_n]$ or $y \in [x_N + \epsilon^{\mathrm{max}}_{N},\infty)\,\,$ $f^{*}_{1D}(y) \in \{x_n\}_{n=1}^{N}$ therefore \eqref{eq:contraction condition} holds for all $0\le \alpha <1$.
In the case where $y \in  \cup_{n\in[N-1]} [x_n+\epsilon^{\mathrm{max}}_n, \frac{ x_{n+1}\epsilon^{\mathrm{max}}_n - x_n \epsilon^{\mathrm{min}}_{n+1}}{\epsilon^{\mathrm{max}}_n-\epsilon^{\mathrm{min}}_{n+1}})$ we choose $i=n$,
\begin{align}
    \left\vert f^{*}_{1D}\left(y\right)- f\left(x_n\right) \right\vert = \frac{x_{n+1}-x_n}{x_{n+1}+\epsilon^{\mathrm{min}}_{n+1}-\left(x_n+\epsilon^{\mathrm{max}}_n\right)}\left(y-\left(x_n+\epsilon^{\mathrm{max}}_n\right)\right)
\end{align}
There exists $0 < \gamma_1 < 1$ such that
\begin{align}
    x_n \le \frac{x_{n+1}-x_n}{x_{n+1}+\epsilon^{\mathrm{min}}_{n+1}-\left(x_n+\epsilon^{\mathrm{max}}_n\right)}\left(y-\left(x_n+\epsilon^{\mathrm{max}}_n\right)\right) + x_n \le \gamma_1 y 
\end{align}
since for $y \in  \cup_{n\in[N-1]} [x_n+\epsilon^{\mathrm{max}}_n, \frac{ x_{n+1}\epsilon^{\mathrm{max}}_n - x_n \epsilon^{\mathrm{min}}_{n+1}}{\epsilon^{\mathrm{max}}_n-\epsilon^{\mathrm{min}}_{n+1}})\,\,$ $f^{*}_{1D}(y)$ is bellow the line $f(y) = y$ because $f^{*}_{1D}(y)$ is an affine function with slope larger than 1 and $f^{*}_{1D}(\frac{ x_{n+1}\epsilon^{\mathrm{max}}_n - x_n \epsilon^{\mathrm{min}}_{n+1}}{\epsilon^{\mathrm{max}}_n-\epsilon^{\mathrm{min}}_{n+1}})=\frac{ x_{n+1}\epsilon^{\mathrm{max}}_n - x_n \epsilon^{\mathrm{min}}_{n+1}}{\epsilon^{\mathrm{max}}_n-\epsilon^{\mathrm{min}}_{n+1}}$. Therefore there exists $0<\alpha_1<1$
\begin{align}
    \left\vert f^{*}_{1D}\left(y\right)- f\left(x_n\right) \right\vert = \frac{x_{n+1}-x_n}{x_{n+1}+\epsilon^{\mathrm{min}}_{n+1}-\left(x_n+\epsilon^{\mathrm{max}}_n\right)}\left(y-\left(x_n+\epsilon^{\mathrm{max}}_n\right)\right) \le \gamma_1 y -x_n \le \alpha_1\left(y-x_n\right)
\end{align}
since,
\begin{align}
    \gamma_1 y -x_n &\le \alpha_1\left(y-x_n\right)\\
    \alpha_1 &\ge \frac{\gamma_1 y -x_n}{y-x_n}\,.
\end{align}
In the case where $y \in  \cup_{n\in[N-1]} (\frac{ x_{n+1}\epsilon^{\mathrm{max}}_n - x_n \epsilon^{\mathrm{min}}_{n+1}}{\epsilon^{\mathrm{max}}_n-\epsilon^{\mathrm{min}}_{n+1}},x_{n+1}+\epsilon^{\mathrm{min}}_{n+1}]$ we choose $i=n+1$,
\begin{align}
    \left\vert f^{*}_{1D}\left(y\right)- f\left(x_{n+1}\right) \right\vert = x_{n+1} - \frac{x_{n+1}-x_n}{x_{n+1}+\epsilon^{\mathrm{min}}_{n+1}-\left(x_n+\epsilon^{\mathrm{max}}_n\right)}\left(y-\left(x_n+\epsilon^{\mathrm{max}}_n\right)\right) - x_n \,.
\end{align}
There exists $0 < \gamma_2 < 1$ such that
\begin{align}
   x_{n+1} \ge \frac{x_{n+1}-x_n}{x_{n+1}+\epsilon^{\mathrm{min}}_{n+1}-\left(x_n+\epsilon^{\mathrm{max}}_n\right)}\left(y-\left(x_n+\epsilon^{\mathrm{max}}_n\right)\right)+ x_n \ge \frac{1}{\gamma_2} y
\end{align}
since for $y \in  \cup_{n\in[N-1]} (\frac{ x_{n+1}\epsilon^{\mathrm{max}}_n - x_n \epsilon^{\mathrm{min}}_{n+1}}{\epsilon^{\mathrm{max}}_n-\epsilon^{\mathrm{min}}_{n+1}},x_{n+1}+\epsilon^{\mathrm{min}}_{n+1}]\,\,$ $f^{*}_{1D}(y)$ is above the line $f(y) = y$ because $f^{*}_{1D}(y)$ is an affine function with slope larger than 1 and $f^{*}_{1D}(\frac{ x_{n+1}\epsilon^{\mathrm{max}}_n - x_n \epsilon^{\mathrm{min}}_{n+1}}{\epsilon^{\mathrm{max}}_n-\epsilon^{\mathrm{min}}_{n+1}})=\frac{ x_{n+1}\epsilon^{\mathrm{max}}_n - x_n \epsilon^{\mathrm{min}}_{n+1}}{\epsilon^{\mathrm{max}}_n-\epsilon^{\mathrm{min}}_{n+1}}$. Therefore there exists $0<\alpha_2<1$
\begin{align}
    \left\vert f^{*}_{1D}\left(y\right)- f\left(x_{n+1}\right) \right\vert &= x_{n+1} - \frac{x_{n+1}-x_n}{x_{n+1}+\epsilon^{\mathrm{min}}_{n+1}-\left(x_n+\epsilon^{\mathrm{max}}_n\right)}\left(y-\left(x_n+\epsilon^{\mathrm{max}}_n\right)\right) - x_n \\
    & \le x_{n+1} - \frac{1}{\gamma_2}y \le \alpha_2 \left(x_{n+1}-y\right)
\end{align}
since,
\begin{align}
    x_{n+1} - \frac{1}{\gamma_2}y &\le \alpha_2 \left(x_{n+1}-y\right)\\
    \alpha_2 &\ge \frac{x_{n+1} - \frac{1}{\gamma_2}y}{x_{n+1}-y}\,.
\end{align}
Therefore \eqref{eq:contraction condition} holds for $\alpha = \max\{\alpha_1,\alpha_2\}$.
\end{proof}

\section{Proofs of Results in Section \ref{sec:highdim}} \label{appendix:proof_sec_multivariate}
Let $\vf:\R^d\rightarrow \R^d$ be any function realizable as a shallow ReLU network. Consider any parametrization of $\vf$ given by $\vf(\vy) = \sum_{k=1}^K \va_k[\vw_k^\T\vy + b_k]_+ + \mV\vy + \vc$. We say such a parametrization is a \emph{minimal representative of $\vf$} if $\|\vw_k\|_2 = 1$ and $\va_k \neq 0$ for all $k=1,...,K$, and $R(\vf) = \sum_{k=1}^K \|\va_k\|_2$. In particular, the units making up a minimal representative must be distinct in the sense that the hyperplanes describing ReLU boundaries $H_k = \{\vx\in \R^d : \vw_k^\T\vx +b_k = 0\}$ are distinct, which implies that no units can be cancelled or combined.

We will also make use of the following lemma, which shows that representation costs are invariant to a translation of the training samples, assuming the function is suitably translated.
\begin{lemma}\label{lem:translate}
    Let $\vf:\R^d\rightarrow \R^d$ be any function realizable as a shallow ReLU net satisfying norm-ball interpolation constraints $\vf(B(\vx_n;\rho)) = \{\vx_n\}$ for all $n=1,...,N$. Let $\vx_0 \in \R^d$. Then the function $\vg(\vy) = \vf(\vy-\vx_0) + \vx_0$ is such that $R(\vg) = R(\vf)$ and $\vg(B(\vx_n+\vx_0;\rho)) = \{\vx_n+\vx_0\}$ for all $n=1,...,N$.
\end{lemma}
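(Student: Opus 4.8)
The plan is to verify the two assertions by direct substitution, the key point being that precomposing with a translation of the input changes only the \emph{unregularized} parameters (the biases and the skip connection), while leaving the regularized weights untouched.

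\textbf{Interpolation constraints.} First I would observe that $\vy \in B(\vx_n+\vx_0;\rho)$ if and only if $\vy-\vx_0 \in B(\vx_n;\rho)$. Hence for any such $\vy$ the hypothesis $\vf(B(\vx_n;\rho)) = \{\vx_n\}$ yields $\vf(\vy-\vx_0) = \vx_n$, and therefore $\vg(\vy) = \vf(\vy-\vx_0)+\vx_0 = \vx_n+\vx_0$. This establishes $\vg(B(\vx_n+\vx_0;\rho)) = \{\vx_n+\vx_0\}$ for every $n$.

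\textbf{Cost identity.} Next, given \emph{any} parametrization $\vf(\vy) = \sum_{k=1}^K \va_k[\vw_k^\T\vy + b_k]_+ + \mV\vy + \vc$, I would substitute $\vy \mapsto \vy-\vx_0$ and add $\vx_0$ to obtain
\[
\vg(\vy) = \sum_{k=1}^K \va_k[\vw_k^\T\vy + (b_k - \vw_k^\T\vx_0)]_+ + \mV\vy + (\vc - \mV\vx_0 + \vx_0),
\]
which is again a shallow ReLU network of the form \eqref{eq:htheta}, with exactly the same weight vectors $\{(\va_k,\vw_k)\}_{k=1}^K$ and only the biases and the affine offset altered. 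Since $C(\theta)$ depends only on $\{(\va_k,\vw_k)\}$, this reparametrization of $\vg$ has cost equal to that of the original parametrization of $\vf$. Taking the infimum over all parametrizations of $\vf$ then gives $R(\vg) \le R(\vf)$. Applying the same construction to $\vg$ with shift $-\vx_0$ (so that $\vy \mapsto \vg(\vy+\vx_0)-\vx_0 = \vf(\vy)$) gives the reverse inequality $R(\vf) \le R(\vg)$, hence $R(\vg) = R(\vf)$.

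I do not expect a genuine obstacle here; the lemma is essentially bookkeeping. The only point worth stressing is that the argument relies on $C(\theta)$ not penalizing the bias terms $b_k$ or the skip parameters $\vc,\mV$ — precisely the cost used throughout the paper. Were the biases regularized, precomposition with a translation would in general change the bias contribution to the cost, and the equality $R(\vg) = R(\vf)$ would no longer hold in general.
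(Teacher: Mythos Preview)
Your proof is correct and essentially identical to the paper's own argument: both verify the interpolation constraints by direct substitution and establish $R(\vg)=R(\vf)$ by observing that the translation only shifts the unregularized biases and affine offset, then invoking symmetry for the reverse inequality. The only cosmetic difference is that the paper starts from a minimal representative of $\vf$ (so that $\sum_k\|\va_k\| = R(\vf)$ directly), whereas you take the infimum over all parametrizations; both routes are equivalent here.
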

\begin{proof}
First we show $\vg(B(\vx_n+\vx_0;\rho)) = \{\vx_n+\vx_0\}$ for all $n=1,...,N$. Fix any $n$, and let $\vy \in B(\vx_n+\vx_0; \rho)$. Then $\vy = \vy' +\vx_0$ for some $\vy' \in B(\vx_n;\rho)$, and so $\vg(\vy) = \vf(\vy') + \vx_0 = \vx_n+\vx_0$, as claimed.

To show that $R(\vg) = R(\vf)$, let $\vf(\vy) = \sum_{k=1}^K \va_k[\vw_k^\T\vy + b_k]_+ + \mV\vy + \vc$ be any minimal representative of $\vf$. Then 
\begin{align}
\vg(\vy) & = \sum_{k=1}^K \va_k[\vw_k^\T(\vy-\vx_0) + b_k]_+ + \mV(\vy-\vx_0) + \vc+\vx_0\\
& = \sum_{k=1}^K \va_k[\vw_k^\T\vy + \tilde{b}_k]_+ + \mV\vy + \tilde{\vc}
\end{align}
with $\tilde{b}_k = b_k -\vw_k^\T\vx_0$ and $\tilde{\vc} = \vc + \vx_0 - \mV\vx_0$. And so $R(\vg) \leq \sum_{k=1}^K\|\va_k\|_2 = R(\vf)$. A parallel argument with the roles of $\vg$ and $\vf$ reversed shows that $R(\vf) \leq R(\vg)$, and so $R(\vf) = R(\vg)$.
\end{proof}
In particular, this lemma shows that if $\vf^*(\vy)$ is a norm-ball interpolating representation cost minimizer of the training samples $\{\vx_n\}_{n=1}^N$, then $\vg^*(\vy) = \vf^*(\vy-\vx_0) + \vx_0$ is a norm-ball interpolating min-cost solution of the translated training samples $\{\vx_n+\vx_0\}_{n=1}^N$.

\subsection{Training data belonging to a subspace}
The proof of Theorem \ref{thm:data_on_subspace} is a direct consequence of the following two lemmas:
\begin{lemma}\label{lem:outeralign}
Suppose the clean training samples $\{\vx_n\}_{n=1}^N$ belong to a $r$-dimensional subspace $\gS \subset \R^d$, and let $\mP_\gS$ denote the orthogonal projector onto $\gS$. 
Let $\vf(\vy)$ be any function realizable as a shallow ReLU network satisfying  $\vf(B(\vx_n,\rho)) = \{\vx_n\}$ for all $n=1,...,N$. Define $\vg(\vy) = \mP_\gS \vf(\vx)$. Then $\vg(B(\vx_n,\rho)) = \{\vx_n\}$ for all $n=1,...,N$, and $R(\vg) \leq R(\vf)$, with strict inequality if $\vf \neq \vg$.
\end{lemma}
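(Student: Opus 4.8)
The plan is to dispatch the interpolation claim by a one‑line computation, to bound $R(\vg)$ by post‑composing a minimal representative of $\vf$ with $\mP_\gS$, and then to analyze the equality case using the interpolation constraints. First, since each clean sample lies in $\gS$ we have $\mP_\gS\vx_n = \vx_n$; hence for any $\vy \in B(\vx_n,\rho)$, $\vg(\vy) = \mP_\gS\vf(\vy) = \mP_\gS\vx_n = \vx_n$, so $\vg(B(\vx_n,\rho)) = \{\vx_n\}$ for all $n$.

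Next, fix a minimal representative $\vf(\vy) = \sum_{k=1}^K \va_k[\vw_k^\T\vy + b_k]_+ + \mV\vy + \vc$ (one exists by the representer theorem for this class), so that $\|\vw_k\| = 1$ for all $k$ and $R(\vf) = \sum_{k=1}^K\|\va_k\|$. Linearity of $\mP_\gS$ then gives the shallow ReLU parametrization
\[
\vg(\vy) = \mP_\gS\vf(\vy) = \sum_{k=1}^K (\mP_\gS\va_k)[\vw_k^\T\vy + b_k]_+ + (\mP_\gS\mV)\vy + \mP_\gS\vc,
\]
which, after discarding any units with $\mP_\gS\va_k = \vzero$, has unit inner weights. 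Since $R(\vg)$ is the infimum of $\sum_k\|\cdot\|$ over all such parametrizations and orthogonal projection is nonexpansive,
\[
R(\vg) \;\le\; \sum_{k=1}^K \|\mP_\gS\va_k\| \;\le\; \sum_{k=1}^K \|\va_k\| \;=\; R(\vf).
\]

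For the strict inequality, suppose $R(\vg) = R(\vf)$; then the chain above collapses, so $\|\mP_\gS\va_k\| = \|\va_k\|$ for every $k$, equivalently $\va_k \in \gS$, i.e. $\mP_\gS\va_k = \va_k$. Substituting this back, $\vf(\vy) - \vg(\vy) = (\mI - \mP_\gS)\mV\vy + (\mI - \mP_\gS)\vc$ is an \emph{affine} map. But on each ball $B(\vx_n,\rho)$ both $\vf$ and $\vg$ are identically equal to $\vx_n$, so this affine map vanishes on a set with nonempty interior and hence is identically zero, giving $\vf = \vg$; contrapositively, $\vf \neq \vg$ forces $R(\vg) < R(\vf)$. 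I expect the equality analysis to be the only step needing real thought, since it is the one place the interpolation constraints are genuinely used — the norm bound alone yields only $\le$; the supporting facts (existence of a norm‑attaining finite‑width representative, nonexpansiveness of $\mP_\gS$, and that an affine map vanishing on a ball is zero) are routine, and one should merely double‑check that dropping units with zero outer weight leaves the represented function unchanged.
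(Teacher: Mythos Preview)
Your proof is correct and follows the same overall strategy as the paper: take a minimal representative of $\vf$, post-compose with $\mP_\gS$, and use nonexpansiveness of the projector to bound $R(\vg)$. The interpolation claim and the weak inequality are handled identically.

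Where you differ is in the strict-inequality analysis, and your argument is cleaner. The paper works contrapositively: assuming $\vf\neq\vg$, it splits into cases depending on whether $\mP_\gS\mV=\mV$ and $\mP_\gS\vc=\vc$, and in the nontrivial case invokes a Jacobian computation on the balls (using $\bm\partial\vf\equiv\bm 0$ there to write $\mV=-\sum_{k\in A_n}\va_k\vw_k^\T$ and $\vc=\vx_n-\sum_{k\in A_n}b_k\va_k$) in order to conclude that some $\va_k\notin\gS$. You instead argue directly: if equality holds, termwise $\|\mP_\gS\va_k\|=\|\va_k\|$ forces every $\va_k\in\gS$, so $\vf-\vg$ reduces to the affine map $(\mI-\mP_\gS)\mV\vy+(\mI-\mP_\gS)\vc$, which vanishes on a full ball and is therefore identically zero. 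This sidesteps the case analysis and the Jacobian step entirely; the paper's route gives slightly more structural information (explicit formulas for $\mV$ and $\vc$ in terms of the active units), but that information is not needed for the lemma as stated.
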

\begin{proof} First, for any $\vy\in B(\vx_n,\rho)$ we have $\vg(\vy) = \mP_\gS \vf(\vy) = \mP_\gS\vx_n = \vx_n$. Therefore, $\vg(B(\vx_n,\rho)) = \{\vx_n\}$ for all $n=1,...,N$.

Now, let $\vf(\vy) = \sum_{k=1}^K \va_k[\vw_k^\T\vy + b_k]_+ + \mV\vy + \vc$ be a minimal representative of $\vf$. Then $g(\vy) = \sum_{k=1}^K \mP\va_k[\vw_k^\T\vy + b_k]_+ + \mP(\mV\vy + \vc)$. Since $\|\mP \va_k\| \leq \|\va_k\|$ for all $k$, we have $R(\vg) \leq \sum_{k=1}^K \|\mP_\gS \va_k\| \leq \sum_{k=1}^K \|\va_k\| = R(\vf)$. 

Now we show $\vf\neq \vg$ implies $R(\vg) < R(\vf)$. Observe that if any of the outer-layer weight vectors $\va_k \not\in \gS$ then $\|\mP_\gS\va_k\| < \|\va_k\|$, which implies $R(\vg) < R(\vf)$. Hence, it suffices to show that $\vf\neq \vg$ implies some $\va_k \not\in \gS$. 

First, consider the case where $\mP_\gS\mV = \mV$ and $\mP_\gS\vc = \vc$. Then in this case $\vf\neq \vg$ if only if $\sum_{k=1}^K (\mP_\gS\va_k-\va_k)[\vw_k^\T\vy+b_k]_+ \neq 0$ for all $\vy \in \R^d$, which implies $\mP_\gS\va_k \neq \va_k$ for some $k$, or equivalently $\va_k \not\in \gS$. 

Next, assume either $\mP_\gS \mV \neq \mV$ or $\mP_\gS \vc \neq \vc$. Fix any training sample index $n$, and let $A_n$ denote the index set of units active over the ball $B(\vx_n,\rho)$. Then, since $\vf(\vy)$ is constant for all $\vy \in B(\vx_n,\rho)$, the Jacobian $\bm\partial \vf(\vy) = \sum_{k\in A_n}\va_k\vw_k^\T + \mV = \bm 0$ for all $\vy \in B(\vx_n,\rho)$. This gives $\mV = -\sum_{k\in A_n} \va_k \vw_k^\T$. Therefore, if $\mP_\gS \mV \neq \mV$ then at least one of the $\va_k \not\in \mS$. On the other hand, if $\mP_\gS \vc \neq \vc$ then for all $\vy \in B(\vx_n,\rho)$ we have 
\[
\vf(\vy) = \sum_{k \in A_n }\va_k(\vw_k^\T\vy + b_k) + \mV\vy + \vc = \sum_{k \in A_n}(\va_k\vw_k^\T + \mV)\vy + \sum_{k \in A_n }b_k\va_k + \vc = \sum_{k \in A_n } b_k \va_k + \vc = \vx_n,
\]
which implies $\vc = \vx_n - \sum_{k \in A_n } b_k \va_k$, and since $\vx_n \in \gS$ this implies some $\va_k \not\in \gS$, proving the claim.
\end{proof}

\begin{lemma}\label{lem:inneralign}
Suppose the clean training samples $\{\vx_n\}_{n=1}^N$ belong to an $r$-dimensional subspace $\gS \subset \R^d$, and let $\mP_\gS$ denote the orthogonal projector onto $\gS$. 
Let $\vf$ be any network satisfying $\vf(B(\vx_n,\rho)) = \{\vx_n\}$. Define $\vh(\vy) = \vf(\mP_\gS\vy)$. Then $\vh(B(\vx_n,\rho)) = \{\vx_n\}$ and $R(\vh) \leq R(\vf)$, with strict inequality if $\vh\neq \vf$.
\end{lemma}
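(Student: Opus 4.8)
The plan is to mirror the structure of the proof of Lemma~\ref{lem:outeralign}, but now exploiting that pre-composing with $\mP_\gS$ shrinks the \emph{inner} weight vectors rather than the outer ones. The first step is to check that $\vh$ still satisfies the norm-ball interpolation constraints. Since $\vx_n\in\gS$ we have $\mP_\gS\vx_n = \vx_n$, so for any $\vy\in B(\vx_n,\rho)$, $\|\mP_\gS\vy-\vx_n\| = \|\mP_\gS(\vy-\vx_n)\| \le \|\vy-\vx_n\|\le \rho$; hence $\mP_\gS\vy\in B(\vx_n,\rho)$ and $\vh(\vy) = \vf(\mP_\gS\vy) = \vx_n$, giving $\vh(B(\vx_n,\rho)) = \{\vx_n\}$ for all $n$.

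Next, for the cost bound, I take a minimal representative $\vf(\vy) = \sum_{k=1}^K\va_k[\vw_k^\T\vy+b_k]_+ + \mV\vy+\vc$ (so $\|\vw_k\| = 1$, $\va_k\neq\bm 0$, and $R(\vf) = \sum_k\|\va_k\|$). Using $\mP_\gS^\T = \mP_\gS$ we can write $\vh(\vy) = \sum_{k=1}^K\va_k[(\mP_\gS\vw_k)^\T\vy+b_k]_+ + \mV\mP_\gS\vy+\vc$. For each $k$ with $\mP_\gS\vw_k\neq\bm 0$, I rescale the unit by $1$-homogeneity of the ReLU to $(\|\mP_\gS\vw_k\|\va_k)[\hat\vw_k^\T\vy+\hat b_k]_+$ with unit-norm inner weight $\hat\vw_k = \mP_\gS\vw_k/\|\mP_\gS\vw_k\|$; for each $k$ with $\mP_\gS\vw_k = \bm 0$ the unit collapses to the constant $\va_k[b_k]_+$, which is absorbed into $\vc$. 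Since $\mV\mP_\gS\vy$ is an unregularized linear term, this yields a valid representative of $\vh$, so $R(\vh) \le \sum_k\|\mP_\gS\vw_k\|\,\|\va_k\| \le \sum_k\|\va_k\| = R(\vf)$, where the last inequality uses $\|\mP_\gS\vw_k\|\le\|\vw_k\| = 1$.

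For the strict inequality I argue the contrapositive: if $R(\vh) = R(\vf)$ then $\vh=\vf$. Equality in the chain above forces, termwise (using $\va_k\neq\bm 0$), $\|\mP_\gS\vw_k\| = 1$ for every $k$; since $\mP_\gS$ is an orthogonal projector, $\|\mP_\gS\vw_k\| = \|\vw_k\|$ holds iff $\vw_k\in\gS$, so every inner weight lies in $\gS$, whence $\mP_\gS\vw_k = \vw_k$ and $\vw_k^\T\mP_\gS\vy = \vw_k^\T\vy$. It remains to show $\mV\mP_\gS\vy = \mV\vy$, i.e.\ $\mV\mP_{\gS^\perp} = \bm 0$. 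For this I reuse the vanishing-Jacobian argument from the proof of Lemma~\ref{lem:outeralign}: since $\vf$ is constant on $B(\vx_1,\rho)$, on the open subset where the active set is fixed to $A_1$ its Jacobian $\sum_{k\in A_1}\va_k\vw_k^\T + \mV$ vanishes, so $\mV = -\sum_{k\in A_1}\va_k\vw_k^\T$; since each $\vw_k\in\gS$, for any $\vz\in\gS^\perp$ we get $\mV\vz = -\sum_{k\in A_1}\va_k(\vw_k^\T\vz) = \bm 0$, i.e.\ $\mV\mP_{\gS^\perp} = \bm 0$. Combining, $\vh(\vy) = \sum_k\va_k[\vw_k^\T\vy+b_k]_+ + \mV\vy+\vc = \vf(\vy)$, as required.

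The main obstacle is precisely this last step: the cost estimate alone cannot control the skip-connection matrix $\mV$, which is unregularized, so the equality case of $R(\vh) = R(\vf)$ only pins down the inner weights. To conclude $\vh = \vf$ one must bring in the interpolation constraint through the identity $\mV = -\sum_{k\in A_n}\va_k\vw_k^\T$ that holds on each noise ball where $\vf$ is constant. Everything else is routine $1$-homogeneity bookkeeping.
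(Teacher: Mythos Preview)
Your proof is correct and follows essentially the same approach as the paper's: verify the interpolation constraint via $\mP_\gS B(\vx_n,\rho)\subset B(\vx_n,\rho)$, bound $R(\vh)$ using $\|\mP_\gS\vw_k\|\le\|\vw_k\|$ on a minimal representative, and handle the unregularized skip matrix $\mV$ via the vanishing-Jacobian identity $\mV=-\sum_{k\in A_n}\va_k\vw_k^\T$ on a noise ball. The only difference is organizational: the paper argues the direct implication ``$\vf\neq\vh\Rightarrow$ some $\vw_k\notin\gS\Rightarrow R(\vh)<R(\vf)$'' by a case split on whether $\mV\mP_\gS=\mV$, whereas you run the contrapositive ``$R(\vh)=R(\vf)\Rightarrow$ all $\vw_k\in\gS\Rightarrow\mV\mP_{\gS^\perp}=\bm 0\Rightarrow\vh=\vf$''; your explicit $1$-homogeneity rescaling and the treatment of the degenerate case $\mP_\gS\vw_k=\bm 0$ are minor bookkeeping refinements over the paper's presentation.
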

\begin{proof}  Define $\mP_\gS^{-1}(B(\vx_n,\rho)) := \{\vy \in \R^d : \mP_\gS\vy\in B(\vx_n,\rho)\}$, i.e., the set of points in $\R^d$ whose projection onto $\gS$ is contained in $B(\vx_n,\rho)$. Then clearly $\vh(\mP_\gS^{-1}(B(\vx_n,\rho))) = \{\vx_n\}$. Also, by properties of norm-balls, we have $B(\vx_n,\rho) \subset \mP_\gS^{-1}(B(\vx_n,\rho))$. Therefore, $\vh(B(\vx_n,\rho)) = \{\vx_n\}$ for all $n=1,...,N$. 

Next, let $\vf(\vy) = \sum_{k=1}^K \va_k[\vw_k^\T\vy + b_k]_+ + \mV\vy + \vc$ be a minimal representative of $\vf$. Then $h(\vy) = \sum_{k=1}^K \va_k[(\mP_\gS\vw_k)^\T\vy + b_k]_+ + \mV\mP_\gS\vy + \vc$. Since $\|\mP_\gS \vw_k\| \leq \|\vw_k\|$ for all $k$, we have $R(\vh) \leq \sum_{k=1}^K \|\va_k\|\|\mP_\gS \vw_k\| \leq \sum_{k=1}^K \|\va_k\|\|\vw_k\| = R(\vf)$.

Finally, we show that if $\vf\neq \vh$, then $R(\vh) < R(\vf)$. Observe that if any of the inner-layer weight vectors $\vw_k \not\in \gS$ then $\|\mP\vw_k\| < \|\vw_k\|$, which implies $R(\vh) < R(\vf)$. Hence, it suffices to show that $\vf\neq \vh$ implies some $\vw_k \not\in \gS$. First, consider the case $\mV\mP_\gS = \mV$. Then $\vf \neq \vh$ if and only if $\mP_\gS \vw_k \neq \vw_k$ for some $k$, or equivalently, $\vw_k \not\in \gS$. Next, consider the case $\mV\mP_\gS \neq \mV$. Fix any training sample index $n$, and let $A_n$ denote the index set of units active over $B(\vx_n,\rho)$. Then, since $\vf(\vy)$ is constant for all $\vy \in B(\vx_n,\rho)$, the Jacobian $\bm\partial \vf(\vx) = \sum_{k\in A_n}\va_k\vw_k^\T + \mV = \bm 0$ for all $\vy \in B_n(\vx_n,\rho)$. This gives $\mV = -\sum_k \va_k \vw_k^\T$. Therefore, if $\mV\mP_\gS \neq \mV$ (i.e., at least one row of $\mV$ is not contained in $\gS$) then at least one of the $\vw_k \not\in \gS$, proving the claim.
\end{proof}

Finally, we now give the proof Theorem \ref{thm:data_on_subspace} and Corollary \ref{cor:colinear_data}.
\begin{proof}[Proof of Theorem \ref{thm:data_on_subspace}]
Let $\vf^*$ be any min-cost solution. Applying both Lemma \ref{lem:outeralign} and \ref{lem:inneralign} we see it must be the case that $\vf^*(\vy) = \mP_\gS \vf^*(\mP_\gS\vy)$ for all $\vy \in \R^d$, since otherwise the representation cost of $\vf^*$ could be reduced.
\end{proof}

\begin{proof}[Proof of Corollary \ref{cor:colinear_data}]
Suppose $\gS$ is one-dimensional, i.e., $\gS = \text{span}\{\vu\}$ for some unit vector $\vu\in\R^d$. Theorem \ref{thm:data_on_subspace} shows we can express any min-cost solution $\vf^*$ as
\[
\vf^*(\vy) = \sum_{k=1}^K a_k \vu [s_k \vu^\T\vy + b_k]_+ + v\vu\vu^\T\vy + c\vu = \vu\phi(\vu^\T\vy) 
\]
where 
\[
\phi(t) = \sum_k a_k [\vs_k t + b_k]_+ + vt + c
\]
is such that $R(\vf^*) = R(\phi)$. Therefore, minimizing the representation cost subject to norm-ball interpolation constraints is reduces to a univariate problem:
\[
\min_\phi R(\phi)~~s.t.~~\phi((c_n-\rho,c_n + \rho)) = c_n
\]
where $c_n = \vu^\T\vx_n$. The minimizing $\phi^*$ is unique and coincides with the 1-D denoiser $f^*_{1D}$ in \eqref{eq:offline_denoiser1d} with $x_n = c_n$ and $ \epsilon_n^{\mathrm{max}} =-\epsilon_n^{\mathrm{min}} = \rho$.
\end{proof}

\subsection{Data along rays}\label{app:rays}
We begin with a key lemma that is central to the proof of Theorem \ref{thm:rays}.
\begin{lemma}\label{lem:splitcost}
Suppose $\{\vu_i\}_{i=1}^n \subset \R^d$ is a collection of $n>1$ unit vectors such that $\vu_i^\T\vu_j < 0$ for all $i\neq j$, and $\vw$ is a unit vector such that $\vu_i^\T \vw> 0$ for all $i=1,...,n$. Let $\va \in \R^d$ be any vector. Then,
\[
\sum_{i=1}^n |\vu_i^\T \va| (\vu_i^\T \vw) < \|\va\|.
\]
\end{lemma}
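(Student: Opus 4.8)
The plan is to reduce the claimed bound to a purely algebraic inequality about the Gram matrix of the $\vu_i$, and then prove that inequality by exploiting the negativity of the off‑diagonal Gram entries together with the sign pattern of the numbers $\vu_i^\T\va$. Set $\alpha_i:=\vu_i^\T\va$ and $\beta_i:=\vu_i^\T\vw>0$. By Cauchy--Schwarz and $\|\vw\|=1$,
\[
\sum_{i=1}^n |\alpha_i|\,\beta_i \;=\; \Big(\sum_{i=1}^n |\alpha_i|\,\vu_i\Big)^{\!\T}\!\vw \;\le\; \Big\|\sum_{i=1}^n |\alpha_i|\,\vu_i\Big\|,
\]
so it suffices to show $\big\|\sum_i|\alpha_i|\vu_i\big\|<\|\va\|$ (we may assume $\va\neq\vzero$, since for $\va=\vzero$ both sides vanish). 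Next I would check that $\vu_1,\dots,\vu_n$ are linearly independent: if $\sum_i c_i\vu_i=\vzero$, let $\vz$ be the common value of $\sum_{i:\,c_i>0}c_i\vu_i$ and $\sum_{j:\,c_j<0}(-c_j)\vu_j$; then $\|\vz\|^2=\sum_{i:\,c_i>0}\sum_{j:\,c_j<0}c_i(-c_j)(\vu_i^\T\vu_j)\le 0$ (the cross terms involve distinct indices, hence negative inner products), so $\vz=\vzero$, whence $\sum_{i:\,c_i>0}c_i\vu_i=\vzero$; pairing this with $\vw$ rules out any $c_i>0$, and symmetrically any $c_i<0$, so all $c_i=0$.

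Let $\mG=(\vu_i^\T\vu_j)_{i,j}\in\R^{n\times n}$ be the Gram matrix: it is symmetric positive definite (by independence), with $\mG_{ii}=1$ and $\mG_{ij}<0$ for $i\ne j$. Writing $\bs\alpha=(\alpha_1,\dots,\alpha_n)$ and $|\bs\alpha|$ for its entrywise absolute value, we have $\big\|\sum_i|\alpha_i|\vu_i\big\|^2=|\bs\alpha|^\T\mG|\bs\alpha|$, while $\|\va\|^2\ge\bs\alpha^\T\mG^{-1}\bs\alpha$ because $\bs\alpha^\T\mG^{-1}\bs\alpha$ is the squared norm of the orthogonal projection of $\va$ onto $\mathrm{span}\{\vu_i\}$. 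If $\bs\alpha=\vzero$ the claim is immediate, so it remains to prove
\[
|\bs\alpha|^\T\mG|\bs\alpha| \;<\; \bs\alpha^\T\mG^{-1}\bs\alpha \qquad\text{for }\bs\alpha\neq\vzero.
\]

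To prove this I would combine two bounds. From the identity $2\bs\alpha^\T\vz-\vz^\T\mG\vz=\bs\alpha^\T\mG^{-1}\bs\alpha-(\vz-\mG^{-1}\bs\alpha)^\T\mG(\vz-\mG^{-1}\bs\alpha)$, applied with $\vz=\bs\alpha$, one gets $\bs\alpha^\T\mG^{-1}\bs\alpha\ge 2\|\bs\alpha\|^2-\bs\alpha^\T\mG\bs\alpha$, with equality iff $\mG\bs\alpha=\bs\alpha$. On the other hand, expanding and using $\mG_{ii}=1$,
\[
2\|\bs\alpha\|^2-\bs\alpha^\T\mG\bs\alpha-|\bs\alpha|^\T\mG|\bs\alpha| \;=\; -\sum_{i\ne j}\mG_{ij}\big(\alpha_i\alpha_j+|\alpha_i||\alpha_j|\big)\;\ge\;0,
\]
since $\mG_{ij}<0$ and $\alpha_i\alpha_j+|\alpha_i||\alpha_j|\ge 0$ always, with equality iff $\alpha_i\alpha_j\le 0$ for every $i\ne j$. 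Chaining the two displays gives $\bs\alpha^\T\mG^{-1}\bs\alpha\ge|\bs\alpha|^\T\mG|\bs\alpha|$, and equality throughout would force both $\mG\bs\alpha=\bs\alpha$ and $\alpha_i\alpha_j\le 0$ for all $i\ne j$. The second condition forces $\bs\alpha$ to have at most one nonzero coordinate, or exactly two of opposite sign; but then $\mG\bs\alpha=\bs\alpha$ fails: if $\bs\alpha=\alpha_k\ve_k$ then $(\mG\bs\alpha)_\ell=\mG_{\ell k}\alpha_k\ne 0$ for any $\ell\ne k$ (such $\ell$ exists as $n\ge 2$, and $\mG_{\ell k}<0$), and if $\bs\alpha$ is supported on $\{p,q\}$ with $p\ne q$ then $(\mG\bs\alpha)_p=\alpha_p+\mG_{pq}\alpha_q$, so $\mG\bs\alpha=\bs\alpha$ would require $\mG_{pq}=0$. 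Hence the inequality is strict, which yields $\big\|\sum_i|\alpha_i|\vu_i\big\|<\|\va\|$ and finishes the proof.

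I expect the main obstacle to be the algebraic inequality $|\bs\alpha|^\T\mG|\bs\alpha|<\bs\alpha^\T\mG^{-1}\bs\alpha$, specifically getting the strict direction: naive Cauchy--Schwarz or eigenvalue estimates are too lossy (indeed $\lambda_{\max}(\mG)$ can exceed $1$), so one must genuinely use that the off‑diagonal Gram entries are negative and that the coordinate signs of $\bs\alpha$ enter only the left‑hand side, and one needs the linear independence of the $\vu_i$ — supplied by the existence of $\vw$ — both to make $\mG^{-1}$ meaningful and to exclude the degenerate equality cases (which would require $\vu_p=-\vu_q$).
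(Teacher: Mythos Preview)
Your proof is correct and takes a genuinely different route from the paper's. Both arguments open identically: apply Cauchy--Schwarz with $\vw$ to reduce the claim to $\big\|\sum_i|\alpha_i|\vu_i\big\|<\|\va\|$. After that the paper proceeds geometrically: it splits $\vv:=\sum_i|\alpha_i|\vu_i$ as $\vv_+ + \vv_-$ according to the sign of $\alpha_i=\vu_i^\T\va$, proves an auxiliary lemma showing $\|\vv_\pm\mp\va/2\|\le\tfrac12$ (strict when more than one index contributes), and finishes by the triangle inequality together with a short case analysis on the sizes of $I_\pm=\{i:\pm\alpha_i>0\}$. You instead establish linear independence of the $\vu_i$ (crucially using the existence of $\vw$), pass to the Gram matrix $\mG$, lower-bound $\|\va\|^2$ by the projection quantity $\bs\alpha^\T\mG^{-1}\bs\alpha$, and prove the purely algebraic inequality $|\bs\alpha|^\T\mG|\bs\alpha|<\bs\alpha^\T\mG^{-1}\bs\alpha$ via the Fenchel-type bound $\bs\alpha^\T\mG^{-1}\bs\alpha\ge 2\|\bs\alpha\|^2-\bs\alpha^\T\mG\bs\alpha$ combined with the sign expansion $2\|\bs\alpha\|^2-\bs\alpha^\T\mG\bs\alpha-|\bs\alpha|^\T\mG|\bs\alpha|=-\sum_{i\ne j}\mG_{ij}(\alpha_i\alpha_j+|\alpha_i||\alpha_j|)\ge 0$. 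Your equality analysis (forcing $\mG\bs\alpha=\bs\alpha$ and at most one entry of each sign) is tighter and avoids the paper's enumeration over $|I_\pm|\in\{0,1\}$; on the other hand, the paper's geometric argument never invokes $\mG^{-1}$ and does not need the linear-independence lemma. Both approaches are valid and of comparable length; yours makes the role of the Gram structure explicit, while the paper's keeps everything in ambient $\R^d$.
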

Before giving the proof of Lemma \ref{lem:splitcost}, we first prove an auxiliary result.
\begin{lemma}\label{lem:aux_lem}
Let $\va \in \R^d$ be a unit vector, and suppose $\{\vu_i\}_{i=1}^n \subset \R^d$ is a collection of unit vectors such that $\vu_i^\T\va > 0$ for all $i$ and $\vu_i^\T\vu_j < 0$ for all $i\neq j$. Let $\vb = \sum_{i=1}^n \vu_i\vu_i^\T\va$. Then $\|\vb-\va/2\| \leq 1/2$ with strict inequality if $n>1$.
\end{lemma}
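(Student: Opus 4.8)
\textbf{Proof proposal for Lemma \ref{lem:aux_lem}.}
The plan is to reduce the claimed bound to a single scalar inequality by completing the square. Since $\va$ is a unit vector,
\begin{equation}
\|\vb - \va/2\|^2 = \|\vb\|^2 - \vb^\T\va + \tfrac14\|\va\|^2 = \|\vb\|^2 - \vb^\T\va + \tfrac14,
\end{equation}
so proving $\|\vb-\va/2\|\le 1/2$ (resp.\ with strict inequality) is equivalent to proving $\|\vb\|^2 \le \vb^\T\va$ (resp.\ strictly). This is the only thing I would need to establish.

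Next I would introduce the scalars $t_i := \vu_i^\T\va$, which are \emph{strictly positive} by the hypothesis $\vu_i^\T\va>0$ — this is the place the positivity assumption gets used. Writing $\vb = \sum_{i=1}^n t_i \vu_i$, a direct computation gives $\vb^\T\va = \sum_{i=1}^n t_i(\vu_i^\T\va) = \sum_{i=1}^n t_i^2$, while
\begin{equation}
\|\vb\|^2 = \sum_{i=1}^n\sum_{j=1}^n t_i t_j (\vu_i^\T\vu_j) = \sum_{i=1}^n t_i^2 + \sum_{i\neq j} t_i t_j (\vu_i^\T\vu_j).
\end{equation}
Since $t_i,t_j>0$ and $\vu_i^\T\vu_j < 0$ for $i\neq j$, every term in the off-diagonal sum is non-positive (and this sum is nonempty precisely when $n>1$), so $\|\vb\|^2 \le \sum_i t_i^2 = \vb^\T\va$, with strict inequality when $n>1$. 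Combining with the displayed identity yields $\|\vb-\va/2\|^2 \le 1/4$, strictly if $n>1$, which is the claim.

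There is no serious obstacle here: the argument is a one-line Gram-matrix expansion once the completion of the square is performed, and the obtuseness hypothesis $\vu_i^\T\vu_j<0$ is exactly what makes the cross terms work in our favor. The only point requiring a moment's care is to record that $t_i>0$ (so that the signs of the cross terms $t_i t_j(\vu_i^\T\vu_j)$ are controlled), which is immediate from the assumption $\vu_i^\T\va>0$.
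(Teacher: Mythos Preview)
Your proof is correct and follows essentially the same approach as the paper: both reduce the claim via completing the square to the inequality $\|\vb\|^2 \le \vb^\T\va$, then expand $\|\vb\|^2$ as a double sum and observe that the off-diagonal terms $t_it_j(\vu_i^\T\vu_j)$ are strictly negative by the sign hypotheses. One tiny wording nit: you say the off-diagonal terms are ``non-positive,'' but in fact they are strictly negative (since $t_i,t_j>0$ and $\vu_i^\T\vu_j<0$), which is what actually delivers the strict inequality for $n>1$.
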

\begin{proof}
It suffices to show $\vb^\T\va \geq \vb^\T\vb$, since if this is the case then
\begin{align}
\|\vb-\va/2\| & =  \sqrt{(\vb-\va/2)^\T (\vb-\va/2)} = \sqrt{\vb^\T\vb - \vb^\T\va + \frac{1}{4}\va^\T\va} \leq \frac{1}{2},
\end{align}
which also holds with strict inequality when $\vb^\T\va > \vb^\T\vb$.

First, if $n=1$, then $\vb = \va^\T\vu_1\vu_1^\T$, and so $\vb^\T\vb = \va^\T\vu_1\vu_1^\T\vu_1\vu_1^\T\va = \va^\T\vu_1\vu_1^\T\va = \vb^\T\va$. Now assume $n > 1$. Then we have
\begin{align}
\vb^\T\va & = \sum_{i=1}^n \va^\T \vu_i \vu_i^\T\va\\
& = \sum_{i=1}^n \va^\T \vu_i \vu_i^\T\vu_i \vu_i^\T\va\\
& > \sum_{i=1}^n \sum_{j=1}^n \va^\T \vu_i \vu_i^\T\vu_j \vu_i^\T\va\\
& = \vb^\T\vb
\end{align}
The first and last equalities hold by definition. The second equality holds because each $\vu_i$ is a unit vector. The last inequality holds because for $i \neq j$
\[
(\va^\T \vu_i) (\vu_i^\T\vu_j) (\vu_j^\T\va) < 0,
\]
since $\vu_i^\T\vu_j < 0$ and $\va^\T\vu_i, \va^\T\vu_j > 0$ by assumption.
\end{proof}
Now we give the proof of Lemma \ref{lem:splitcost}.
\begin{proof}[Proof of Lemma \ref{lem:splitcost}] 
Let $\vv = \sum_{i=1}^n |\vu_i^\T \va| \vu_i$. Then we have
\[
S = \sum_{i=1}^n |\vu_i^\T \va| (\vu_i^\T \vw) = \vv^\T\vw
\]
WLOG, we may assume $\|\va\|=1$, in which case the lemma reduces to showing $S = \vv^\T\vw <  1$.  By the Cauchy-Schwartz inequality, it suffices to show $\|\vv\| < 1$. Toward this end, let us write $\vv = \vv_+ + \vv_-$ where $\vv_+ = \sum_{i : \vu_i^\T\va > 0} |\vu_i^\T \va| \vu_i = \sum_{i : \vu_i^\T\va > 0} \vu_i \vu_i^\T \va $ and $\vv_- = \sum_{i : \vu_i^\T\va < 0} |\vu_i^\T \va| \vu_i = \sum_{i : \vu_i^\T\va < 0} \vu_i \vu_i^\T(-\va)$. By  Lemma \ref{lem:aux_lem} we have $\|\vv_+-\va/2\| \leq 1/2$ and $\|\vv_- +\va/2\| \leq 1/2$, and so

\begin{align}
\|\vv\| & = \|\vv_+ + \vv_-\|\\
 & = \|\vv_+ - \va/2 +  \vv_- + \va/2\|\\
 & \leq \|\vv_+ - \va/2\| + \|\vv_- + \va/2\|\\
 & \leq 1.
\end{align}
Also, if either of the index sets $I_+ := \{i : \vu_i^\T\va > 0\}$ or $I_- := \{i : \vu_i^\T\va < 0\}$ has cardinality greater than one then the lemma above guarantees  $\|\vv_+ - \va/2\| < 1/2$ or $\|\vv_+ - \va/2\| < 1/2$, and so $\|\vv\| < 1$, which gives $S < 1$.

It remains to show $S<1$ when $I_+$ or $I_-$ has cardinality less than or equal to one, i.e., $|I_+| \leq 1$ or $|I_-| \leq 1$. We consider the various possibilities:

\emph{Case 1: $|I_+|=0$ \& $|I_-|=0$}. Then $\vv = \bm 0$ and so $S = \vv^\T\vw = 0 < 1$.

\emph{Case 2: $|I_+|=1$ \& $|I_-|=0$ or $|I_+|=0$ \& $|I_-|=1$}. Then $\vv = |\vu_i^\T\va|\vu_i$ for some $i$, and $|\vu_j^\T\va| = 0$ for all $j\neq i$. By way of contradiction, assume that $S = \vv^\T\vw = |\vu_i^\T\va|\vu_i^\T\vw = 1$. Since $\vu_i$, $\va$, and $\vw$ are all unit vectors, the only way this is possible is if $|\vu_i^\T\va| = 1$ and $\vu_i^\T\vw = 1$, which implies $\va = \pm \vw$ and $\vu_i = \vw$, and so $\va = \pm \vu_i$. However, this shows that $\vu_j^\T\vu_i = 0$ for all $ j\neq i$, contradicting our assumption that $\vu_j^\T\vu_i < 0$ for all $i \neq j$. Therefore, $S < 1$ in this case as well.

\emph{Case 3: $|I_+|=1$ \& $|I_-|=1$}. Then $\vv = |\vu_i^\T\va|\vu_i + |\vu_j^\T\va|\vu_j$ for some $i \in I_+$ and $j \in I_-$, and so
\begin{align}
\|\vv\| & = \||\vu_i^\T\va|\vu_i + |\vu_j^\T\va|\vu_j\|\\
& = \|(\vu_i\vu_i^\T - \vu_j\vu_j^\T)\va\|\\
 & \leq \|\vu_i\vu_i^\T - \vu_j\vu_j^\T\|
\end{align}
where the last inequality follows by the fact that $\|\va\|=1$.
Finally, since the matrix $\vu_i\vu_i^\T - \vu_j\vu_j^\T$ is symmetric, its operator norm $\|\vu_i\vu_i^\T - \vu_j\vu_j^\T\|$ coincides with the maximum eigenvalue (in absolute value). Eigenvectors in the span of $\vu_i$ and $\vu_j$ have the form $\vv = c_i \vu_i + c_j\vu_j$ where $c_i,c_j$ satisfy the equation
\begin{equation}
(\vu_i\vu_i^\T - \vu_j\vu_j^\T)(c_i \vu_i + c_j\vu_j) = \lambda(c_i \vu_i + c_j\vu_j)
\end{equation}
where $\lambda \in \R$ is the corresponding eigenvalue. Expanding and equating coefficients gives the system
\begin{equation}
    \begin{bmatrix}
    1-\lambda & \vu_i^\T \vu_j\\
    \vu_i^\T \vu_j & 1+\lambda
    \end{bmatrix}
    \begin{bmatrix} 
    c_i\\c_j
    \end{bmatrix}
    = 
    \begin{bmatrix} 
    0\\0
    \end{bmatrix}
\end{equation}
which has non-trivial solutions iff 
\begin{equation}
    (1-\lambda)(1+\lambda) - (\vu_i^\T \vu_j)^2 =  0~~\iff~~\lambda =\pm \sqrt{ 1-(\vu_i^\T \vu_j)^2  }
\end{equation}
and so $|\lambda| < 1$. Therefore, $\|\vv\| \leq \|\vu_i\vu_i^\T - \vu_j\vu_j^\T\| < 1$ as claimed. And so $S \leq \|\vv\| < 1$.
\end{proof}
\begin{figure}[ht!]
    \centering
    \includegraphics[height=5cm]{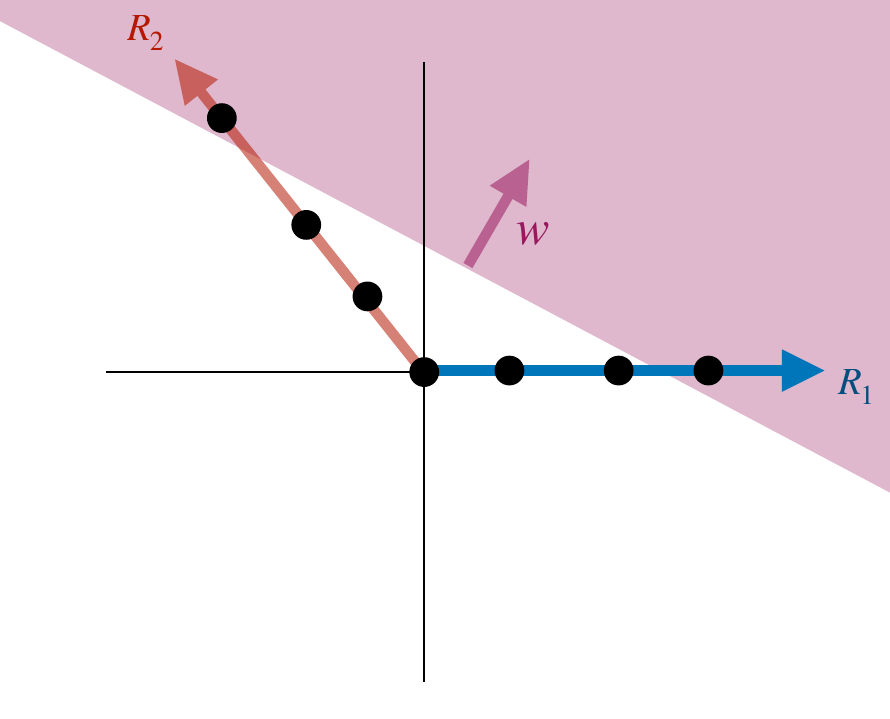}~~
    \includegraphics[height=5cm]{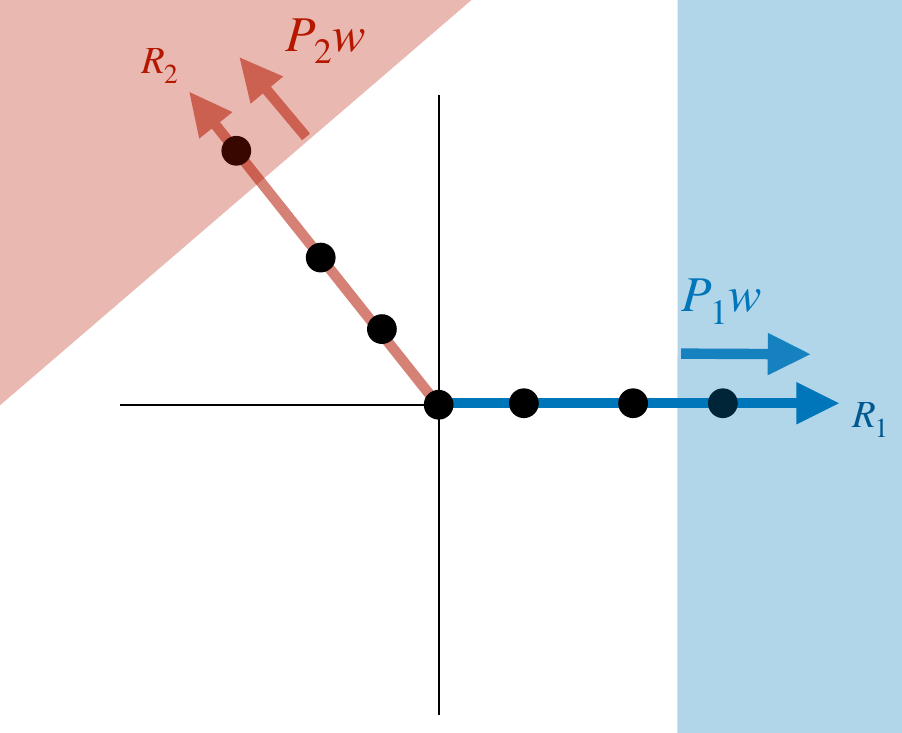}
    \caption{Illustration of ``unit splitting'' technique used in proof of Theorem \ref{thm:rays}. If a ReLU unit is active along two rays (left panel), it can be split into two units with lower representation cost by projecting its inner-layer weight vector $\vw$ onto the two rays separately (right panel).}
    \label{fig:unit_splitting}
\end{figure}

\begin{proof}[Proof of Theorem \ref{thm:rays}]
Let $\vf^*$ be any representation cost minimizer. We prove the claim by showing that if $\vf^*$ fails to have the form specified in \eqref{eq:rays_minimizer} then it is possible to construct a norm-ball interpolant $\vh$ whose units are aligned with the rays that has strictly smaller representation cost than $\vf^*$.

First, let $\vf^*(\vy) = \sum_{k=1}^K \va_k[\vw_k^\T\vy +b_k] + \mV \vx +\vc$ be any minimal representative of $\vf^*$.
By properties of minimal representatives, none of the ReLU boundary sets $\{\vy \in \R^d : \vw_k^\T\vy + b_k = 0\}$ can intersect any of the norm-balls centered at the training samples, since otherwise $\vf^*$ would be non-constant on one of the norm-balls.
Also, we may alter this parameterization in such a way that the active set of every unit avoids the ball centered at the origin $B_0 := B(\bm 0, \rho)$  without changing the representation cost. In particular, suppose the active set of the $k$th unit contains $B_0$. Then, using the identity 
$[t]_+ = t -[-t]_+$ for all $t\in\R$, we have
\[
\va_k[\vw_k^\T\vy + b_k]_+ = \va_k(\vw_k^\T\vy + b_k) -\va_k[-\vw_k^\T\vy - b_k]_+
\]
for all $\vy \in \R^d$. The active set of the reversed unit $-\va_k[-\vw_k^\T\vy - b_k]_+$ does not intersect $B_0$. Therefore, after applying this transformation to all units whose active sets contain $B_0$, we can write 
$\vf^* = \vg^* + \bm\ell$
where $\vg^*$ is a sum of ReLU units whose active sets do not contain $B_0$ and $\bm\ell$ is an affine function that combines the original affine part $\mV\vy + \vc$ with a sum of linear units of the form $\va_k(\vw_k^\T\vy + b_k\va_k)$ arising from the transformation above. 
However, because of the interpolation constraint $\vf(B_0) = \{\bm 0\}$, and by the fact that no units in $\vg^*$ are active over $B_0$, for all $\vy \in B_0$ we have
\begin{equation}
    \vf^*(\vy) = \vg^*(\vy) + \bm\ell(\vy) = \bm\ell(\vy) = \bm 0
\end{equation}
which implies $\bm\ell \equiv \bm 0$, i.e., $\vf^*(\vy) = \vg^*(\vy)$ for all $\vy$. Finally, note that the inner- and outer-layer weight vectors on the reversed units change only in sign. Therefore, this re-parameterization does not change the representation cost, and so is also a minimal representative.

Now we construct a new norm-ball interpolant $\vh$ as follows. Let $\vf_{\ell}$ be the function defined as the sum of all units making up the re-parametrized $\vf^*$ that are active on at least one norm-ball centered on the $\ell$th ray. Also, let $\mP_\ell = \vu_\ell\vu_\ell^\T \in \R^{d\times d}$ denote the orthogonal projector onto the $\ell$th ray. Define $\vh = \sum_{\ell=1}^L \vh_\ell$ where $\vh_\ell(\vy) := \mP_\ell \vf_\ell(\mP_\ell \vy)$. Put in words, $\vh$ is constructed by  ``splitting'' any units active over multiple rays into a sum of multiple units aligned with each ray; see Figure \ref{fig:unit_splitting} for an illustration.

First, we prove that $\vh$ satisfies norm-ball interpolation constraints. Let $\va[\vw^\T\vy + b]_+$ denote any unit belonging to $\vf_{\ell}$.  Since this unit is active on a norm-ball centered on ray $\ell$, this implies that $\vw^\T\vu_\ell > 0$, and since the unit is inactive on $B_0$ we must have $b<0$. Also, because we assume $\vu_\ell^\T \vu_m < 0$ for any $m \neq \ell$, we see that the projected unit $\mP_\ell\va[\vw^\T\mP_\ell \vy + b]_+ = \mP_\ell\va[(\vw^\T\vu_\ell) \vu_\ell^\T\vy + b]_+$ is active on norm-balls centered on ray $\ell$ and inactive on norm-balls centered on any other ray.  In particular, if $\vy$ belongs to a norm-ball centered on ray $\ell$, then $\vh_m(\vy) = 0$ for all $m\neq \ell$. Therefore, if $\vy \in B(\vx_n^{(\ell)},\rho)$ where $\vx_n^{(\ell)}$ denotes a training sample along ray $\ell$, then 
\begin{align}
\vh(\vy) & = \sum_{m=1}^L \vh_m(\vy)\\
& = \vh_\ell(\vy)\\
& = \mP_\ell\vf_\ell(\mP_\ell\vy)\\
& = \mP_\ell\vf(\mP_\ell\vy)\\
& = \mP_\ell\vx_n^{(\ell)}\\
& = \vx_n^{(\ell)}
\end{align}
which shows that $\vh$ satisfies norm-ball interpolation constraints, as claimed.

Next, we show that if $\vh \neq \vf^*$, then $\vh$ has strictly smaller representation cost. Let $\vu(\vy) = \va[\vw^\T\vy +b]_+$ denote any ReLU unit belonging to $\vf^*$. Since we assume $\|\vw\|=1$, the contribution of unit $\vu$ to the representation cost of $\vf^*$ is $\|\va\|$. Let $\gR \subset \{1,2,...,L\}$ index the subset of rays $\ell$ for which unit $\vu$ is active over at least one norm-ball centered on that ray. In constructing the interpolant $\vh$, the unit $\vu$ get mapped to a sum of $|\gR|$ units:
\[
\sum_{\ell \in \gR} \mP_\ell\va[(\mP_\ell\vw)^\T\vy +b]_+
\]
whose contribution to the representation cost of $\vh$ is bounded above by $$C = \sum_{\ell \in \gR} \|\mP_\ell \va\| \|\mP_\ell \vw\| = \sum_{\ell \in \gR} |\vu_\ell^\T\va| |\vu_\ell^\T\vw|.$$
If $|\gR|>1$, then by Lemma \ref{lem:splitcost} guarantees $C < \|\va\|$. And if $|\gR|=1$ then $C =|\vu_\ell^\T\va| |\vu_\ell^\T\vw| \leq \|\va\|$, with strict inequality unless $\vw,\va$ belong to the span of $\vu_\ell$. This implies that any min-cost solution must have all its units aligned with the rays, i.e., $\vf^*$ must have the form $\vf^*(\vy) = \sum_{\ell=1}^L\vu_\ell \phi_\ell(\vu_\ell^\T\vy)$, where each $\phi_\ell$ is a univariate function. The representation cost of any function $\vf^*$ in this form is the sum of the (1-D) representation costs of the $\phi_\ell$. Therefore, $\phi_\ell$ must also be the 1-D min-cost solution of the norm-ball constraints projected onto ray $\ell$, and so must have the form specified by \eqref{eq:offline_denoiser1d}.

\end{proof}
\subsubsection{Perturbations of samples along rays}\label{app:rays_perturbed}
As an extension of the above setting, suppose the training samples along rays ${\vx}_n^{(\ell)}$ have been slightly perturbed, i.e., we consider training samples $\widetilde{\vx}_n^{(\ell)} = \vx_n^{(\ell)} + {\bm\epsilon}_n^{(\ell)}$ for some vectors ${\bm\epsilon}_n^{(\ell)}$ with $\|{\bm\epsilon}_n^{(\ell)}\| < \delta$ for some sufficiently small $\delta > 0$. In particular, we make the following two assumptions:
\begin{itemize}
\item[(A1)] Let $\Delta_\ell := \{\widetilde{\vx}_{n}^{(\ell)}-\widetilde{\vx}_{n-1}^{(\ell)}\}_{n=1}^{N_\ell}$ be the collection of difference vectors between successive perturbed points along the $\ell$th ray (with $\widetilde{\vx}_{0}^{(\ell)} := \bm 0$). Assume that for all $\vv_\ell \in \Delta_\ell$ and for all $\vv_k \in \Delta_k$ with $k\neq \ell$, we have $\vv_\ell^\T\vv_k < 0$.
\item[(A2)] If $H$ is any halfspace not containing the origin that contains the ball $B(\tilde{\vx}^{(\ell)}_n;\rho)$, then $H$ also contains all successive balls along that ray, i.e., $H$ contains $B(\tilde{\vx}^{(\ell)}_m;\rho)$ for $m\geq n$.
\end{itemize}
Note that if the original points ${\vx}_n^{(\ell)}$ belong to rays that satisfy the conditions of Theorem \ref{thm:rays}, then for sufficiently small $\delta$ the perturbed samples $\widetilde{\vx}_n^{(\ell)}$ will satisfy assumptions (A1)\&(A2) above.

We show in this case the norm-ball interpolating representation cost minimizer is a perturbed version of the min-cost solution for the unperturbed samples as identified in Theorem \ref{thm:rays}. In particular, the ReLU boundaries align with the line segments connecting successive points along the rays. See Figure \ref{fig:perturbed_solution} for an illustration in the case of $L=2$ rays.

\begin{figure}
    \centering
    \includegraphics[height=5cm]{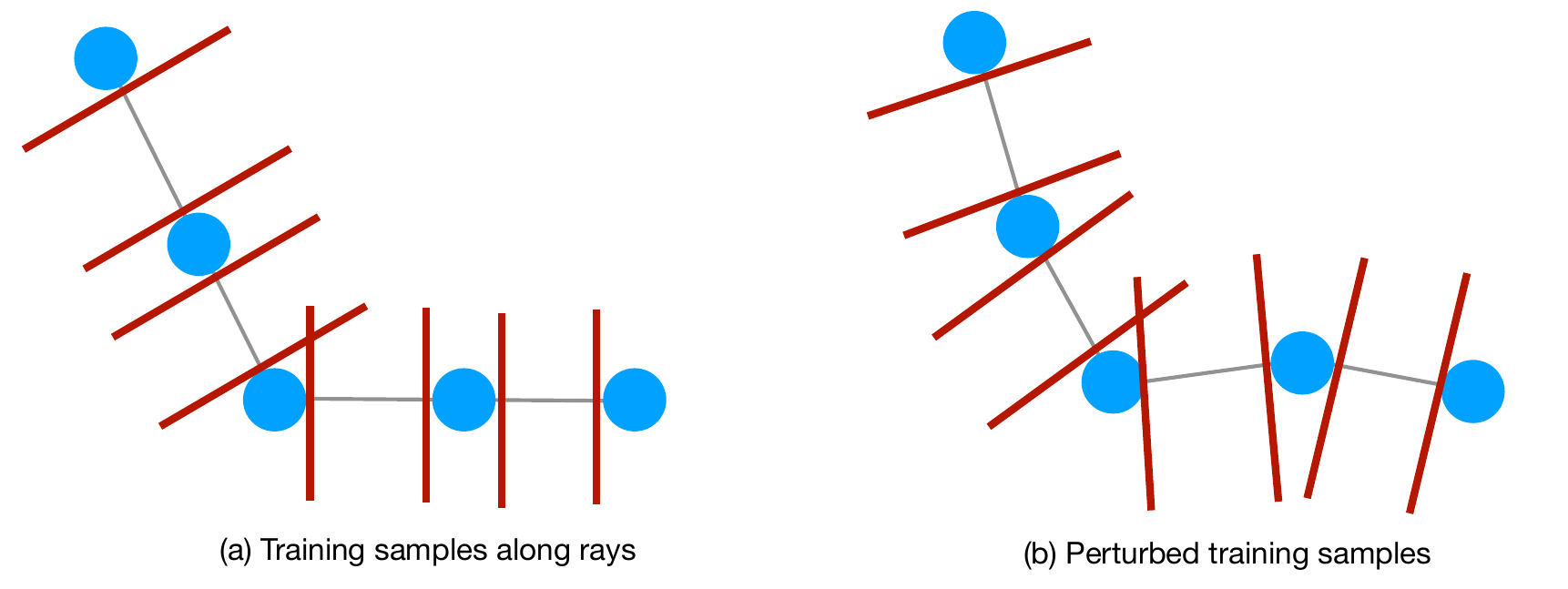}
    \caption{Change in unit alignment of minimal representation cost solutions under perturbations of training samples belonging to rays. The red lines represent the ReLU boundaries of the representation cost minimizing solution satisfying norm-ball interpolation constraints. For the unperturbed samples in (a), all ReLU boundaries align perpendicular to the rays. For the perturbed samples in (b), the ReLU boundaries align perpendicular to the line segments connecting successive samples.}
    \label{fig:perturbed_solution}
\end{figure}

\begin{proposition}\label{prop:rays_perturbed}
Suppose the training samples $X$ are a perturbation of data belong to a union of $L$ rays plus a sample at the origin, i.e., $X = \{\bm 0\} \cup \{\widetilde{\vx}_n^{(1)}\}_{n=1}^{N_1} \cup \cdots \cup  \{\widetilde{\vx}_n^{(L)}\}_{n=1}^{N_L}$ where $\widetilde{\vx}_n^{(\ell)} = c^{(\ell)}_n\vu_\ell + {\bm\epsilon}^{(\ell)}_n$ for some unit vector $\vu_\ell$, constants $0 < c^{(\ell)}_1 < c^{(\ell)}_2 < \cdots < c^{(\ell)}_{N_\ell}$, and vectors ${\bm\epsilon}^{(\ell)}_n$. Assume (A1)\&(A2) above hold. Then the minimizer $\bv f^*$ of \eqref{eq:fitonballs} is unique and is given by 
\begin{equation}\label{eq:perturbed_solution}
\vf^*(\vy) = \sum_{\ell=1}^L\sum_{n=1}^{N_\ell }\vu_n^{(\ell)}\phi_n^{(\ell)}((\vu_n^{(\ell)})^\T(\vy-\widetilde{\vx}_{n-1}^{(\ell)}))
\end{equation}
where $\vu_n^{(\ell)} = \frac{\widetilde{\vx}_n^{(\ell)}-\widetilde{\vx}_{n-1}^{(\ell)}}{\|\widetilde{\vx}_n^{(\ell)}-\widetilde{\vx}_{n-1}^{(\ell)}\|}$ and
$\phi_n^{(\ell)}(t) = s_n^{(\ell)}([t-a_n^{(\ell)}]-[t-b_n^{(\ell)}]_+)$ with $a_n^{(\ell)} = \rho$, $b_n^{(\ell)} = \|\vx_n^{(\ell)}-\vx_{n-1}^{(\ell)}\|-\rho$, and $s_n^{(\ell)} = \|\widetilde{\vx}_n^{(\ell)}-\widetilde{\vx}_{n-1}^{(\ell)}\|/(b_n^{(\ell)}-a_n^{(\ell)})$.
\end{proposition}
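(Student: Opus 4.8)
The plan is to adapt the proof of Theorem~\ref{thm:rays}, with the polygonal path $\widetilde{\vx}_0^{(\ell)}=\bm 0,\widetilde{\vx}_1^{(\ell)},\dots,\widetilde{\vx}_{N_\ell}^{(\ell)}$ playing the role of a straight ray: assumption (A2) substitutes for the fact that a ray is a line — it forces the active half-space of any ReLU unit to meet the balls along a path in an order-consistent (``up-set'') fashion — and assumption (A1) substitutes for the obtuse-angle hypothesis between distinct rays. I would carry this out in two stages: (i) verify that $\vf^*$ as defined in \eqref{eq:perturbed_solution} is feasible for \eqref{eq:fitonballs}, and (ii) show any minimizer must coincide with it.

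For feasibility, fix a ball $B(\widetilde{\vx}_m^{(\ell)};\rho)$. For a summand $\vu_n^{(k)}\phi_n^{(k)}((\vu_n^{(k)})^\T(\vy-\widetilde{\vx}_{n-1}^{(k)}))$ with $k\neq\ell$, writing $\widetilde{\vx}_m^{(\ell)}$ as a sum of difference vectors of path $\ell$ and invoking (A1) gives $(\vu_n^{(k)})^\T\widetilde{\vx}_m^{(\ell)}<0$, so for $\delta$ small the argument stays below $a_n^{(k)}=\rho$ and the summand equals $\bm 0$ on the ball. For $k=\ell$: the half-space $\{\vy:(\vu_n^{(\ell)})^\T(\vy-\widetilde{\vx}_{n-1}^{(\ell)})\ge b_n^{(\ell)}\}$ contains $B(\widetilde{\vx}_n^{(\ell)};\rho)$ and avoids the origin, so by (A2) it contains $B(\widetilde{\vx}_{m'}^{(\ell)};\rho)$ for all $m'\ge n$, making that summand the constant $\widetilde{\vx}_n^{(\ell)}-\widetilde{\vx}_{n-1}^{(\ell)}$ there; symmetrically (using (A2), non-overlap of the balls, and $\delta$ small) every summand with $n>m$ vanishes on the ball. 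Telescoping the remaining summands gives $\sum_{n=1}^m(\widetilde{\vx}_n^{(\ell)}-\widetilde{\vx}_{n-1}^{(\ell)})=\widetilde{\vx}_m^{(\ell)}$.

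For optimality and uniqueness, take any minimizer $\vf^*$ with a minimal representative $\vf^*(\vy)=\sum_k\va_k[\vw_k^\T\vy+b_k]_+ +\mV\vy+\vc$. As in Theorem~\ref{thm:rays}, reverse each unit active on $B_0=B(\bm 0;\rho)$ via $[t]_+=t-[-t]_+$; the constraint $\vf^*(B_0)=\{\bm 0\}$ forces the accumulated affine part to vanish, so afterwards every unit has $b_k\le-\rho<0$, i.e.\ its active half-space avoids the origin. The key \emph{structural} step then follows from (A2): since the active half-space of a unit contains an up-set of the balls along each path, its ReLU boundary separates a closed initial run of those balls from the complementary run, and by convexity it can cut no other segment — so each unit transitions \emph{within a single segment}, say segment $n_k^{(\ell)}$, of each path $\ell$ it meets, and along the $+\vu_{n_k^{(\ell)}}^{(\ell)}$ direction one has $\vw_k^\T\vu_{n_k^{(\ell)}}^{(\ell)}>0$. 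I would then split each unit into one piece per path it meets, replacing its weights by their projections onto the corresponding segment direction (with the bias readjusted, which does not change the cost); feasibility is preserved because near $B(\widetilde{\vx}_m^{(\ell)};\rho)$ only the path-$\ell$ pieces are active (by (A1)), and the Jacobian identities $\sum_{k:\,n_k^{(\ell)}\le m}\va_k\vw_k^\T=\bm 0$, differenced over consecutive $m$, force the projected pieces to reassemble to $\widetilde{\vx}_m^{(\ell)}$; meanwhile the cost of the split pieces of a unit is $\sum_{(\ell,n)}|(\vu_n^{(\ell)})^\T\va_k|\,|(\vu_n^{(\ell)})^\T\vw_k|\le\|\va_k\|$ by Lemma~\ref{lem:splitcost}, whose obtuse-angle hypothesis is exactly (A1), with strict inequality unless the unit meets a single path and $\va_k,\vw_k$ both lie along that segment's direction. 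Hence any minimizer is a sum of rank-one piecewise-linear interpolators aligned with the segment directions; its cost is the sum of the univariate representation costs of the profiles, so each profile is itself a univariate minimum-cost interpolant of the projected ball constraints, which by Proposition~\ref{proposition:1d_denoiser} pins it to the form \eqref{eq:offline_denoiser1d}, and reassembling gives \eqref{eq:perturbed_solution}.

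The main obstacle is the interplay between the two splitting stages: unlike the ray case, consecutive segments of one path need \emph{not} make obtuse angles, so Lemma~\ref{lem:splitcost} cannot be applied to all segments at once — it is essential (and this is exactly where (A2) does its work) that each unit transitions within at most one segment per path, so that Lemma~\ref{lem:splitcost} is only ever invoked \emph{across} paths. A secondary nuisance is that several sub-steps (that a projected piece is still active or inactive on precisely the intended balls, that the ``$k\neq\ell$'' terms vanish, that projections are monotone along each path) require $\delta$ small enough that consecutive same-path difference vectors have nonnegative mutual inner products; and the uniqueness claim additionally uses that, generically, no two segment directions are anti-parallel.
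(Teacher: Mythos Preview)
Your proposal is correct and follows essentially the same approach as the paper: reverse units so none are active on $B(\bm 0,\rho)$, use (A2) to pin each unit to a single transition segment per path, construct a competing interpolant by projecting inner- and outer-layer weights onto the relevant segment directions (the paper writes this as $\vh_n^{(\ell)}(\vy)=\mP_n^{(\ell)}\vf_n^{(\ell)}(\mP_n^{(\ell)}\vy+\widetilde{\vx}_{n-1}^{(\ell)})$), and invoke Lemma~\ref{lem:splitcost} via (A1) across paths to show the projection strictly lowers cost unless each unit is already aligned with a single segment. Your identification of the main obstacle---that Lemma~\ref{lem:splitcost} can only be applied \emph{across} paths because same-path consecutive segments need not be obtuse, and that (A2) is precisely what ensures no unit straddles two segments of one path---is exactly the crux, and your closing caveats about needing $\delta$ small for the projected pieces to activate/deactivate on the correct balls flag a point the paper's proof leaves somewhat implicit.
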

\begin{proof}
Let $\vf^*$ be any min-cost solution. Following the same steps as in the proof of Theorem \ref{thm:rays}, there is a minimal representative of $\vf^*$ having the form $\vf^*(\vy) = \sum_k \va_k[\vw_k^\T\vy+b_k]_+$ where the active sets of all units in this representation have empty intersection with the ball $B(\bm 0, \rho)$.

Let ${\vf}_n^{(\ell)}$ denote the sum of all units in $\vf^*$ whose active set boundary $\{\vy\in\R^d : \vw_k^\T\vy +b_k = 0\}$ intersects the line segment connecting the training samples $\widetilde{\vx}_{n-1}^{(\ell)}$ and $\widetilde{\vx}_{n}^{(\ell)}$. By assumption (A2), this is equivalent to the sum of  units that are active over balls $B(\widetilde{\vx}_m^{(\ell)}, \rho)$ for $m\geq n$, and inactive for the balls with $m < n$. In particular, for $\vy \in B(\widetilde{\vx}_n^{(\ell)}, \rho)$, we have $\vf^*(\vy) = \sum_{m \leq n} \vf_m^{(\ell)}(\vy)$. This implies $\vf_1^{(\ell)}(\vy) = \widetilde{\vx}_1$for all $\vy \in B(\widetilde{\vx}_1^{(\ell)}, \rho)$. Likewise, $\vf_2^{(\ell)}(\vy) = \widetilde{\vx}_2^{(\ell)}-\widetilde{\vx}_1^{(\ell)}$ for all $\vy \in B(\widetilde{\vx}_2^{(\ell)}; \rho)$, and so on, such that for all $n=1,...,N_\ell$ we have $\vf_n^{(\ell)}(\vy) = \widetilde{\vx}_n^{(\ell)}-\widetilde{\vx}_{n-1}^{(\ell)}$ for all $\vy \in B(\widetilde{\vx}_n^{(\ell)}; \rho)$.

Now we show how to construct a new interpolating function $\vh$  having representation cost less than or equal to $\vf^*$. Let $\vu_n^{(\ell)} = \frac{\widetilde{\vx}_n^{(\ell)}-\widetilde{\vx}_{n-1}^{(\ell)}}{\|\widetilde{\vx}_n^{(\ell)}-\widetilde{\vx}_{n-1}^{(\ell)}\|}$ and define $\mP_{n}^{(\ell)} = \vu_{n}^{(\ell)}(\vu_{n}^{(\ell)})^\T$, the orthogonal projector onto the span of the difference vector $\widetilde{\vx}_n^{(\ell)}-\widetilde{\vx}_{n-1}^{(\ell)}$. Note that the map $\vy \mapsto \mP_{n}^{(\ell)}\vy + \widetilde{\vx}_{n-1}^{(\ell)}$ is projection onto the affine line connecting samples $\widetilde{\vx}_{n-1}^{(\ell)}$ and $\widetilde{\vx}_n^{(\ell)}$. Consider the function
\[
\vh = \sum_{\ell=1}^L \sum_{n=1}^{N_\ell} \vh^{(\ell)}_n
\]
where
\[
\vh^{(\ell)}_n(\vy) = \mP_{n}^{(\ell)}\vf_{n}^{(\ell)}(\mP_{n}^{(\ell)}\vy+\widetilde{\vx}_{n-1}^{(\ell)}).
\]
Put in words, $\vh$ is constructed by aligning all inner-layer and outer-layer weights of units in $\vf^*$ with the line segment connecting successive data points over which that unit first activates. See Figure \ref{fig:perturbation} for an illustration.

\begin{figure}[ht!]
    \centering
    \includegraphics[height=5cm]{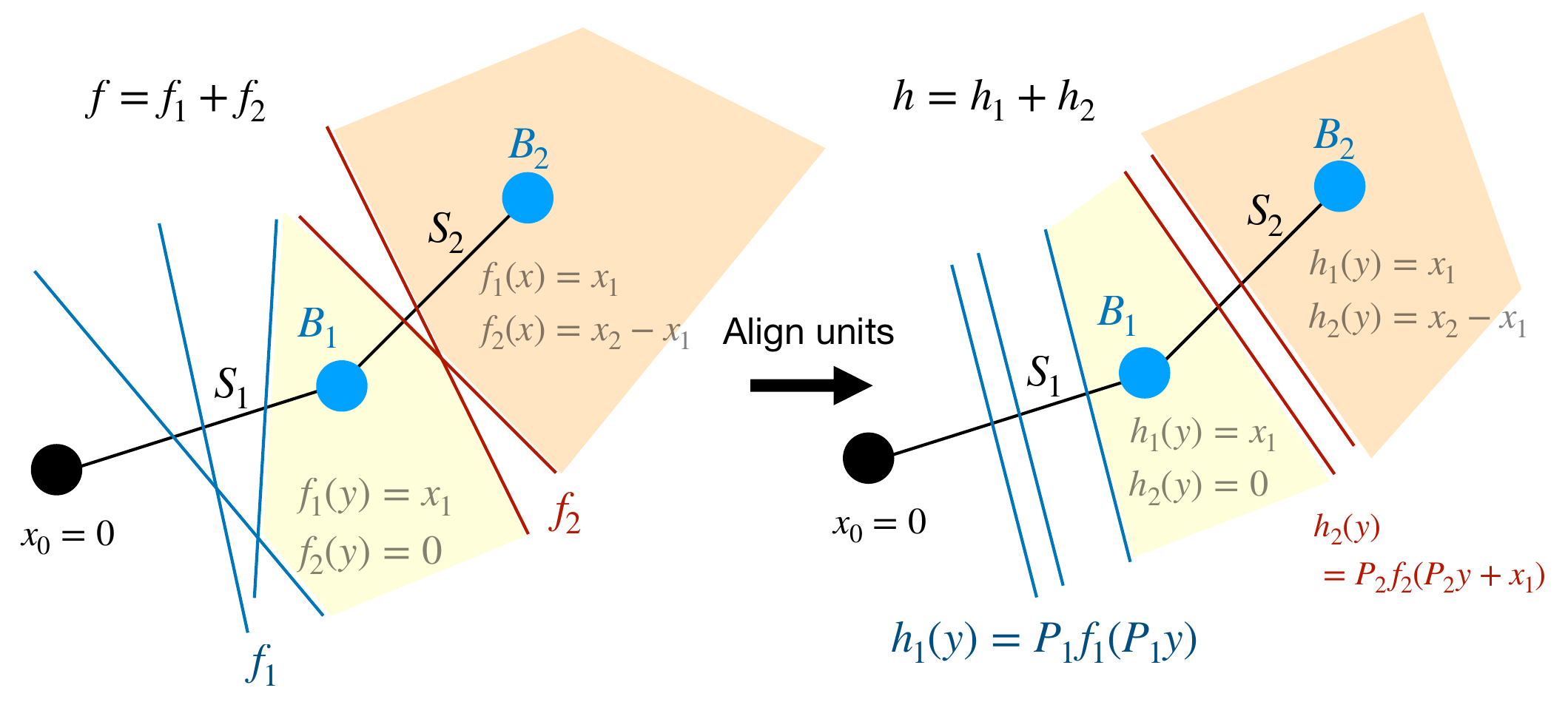}
    \caption{Illustration of ``unit alignment'' technique used in the proof of Proposition \ref{prop:rays_perturbed}. The left panel shows an interpolant $\vf$ whose ReLU boundaries (shown here as dark blue and dark red lines) are not aligned with the line segments $S_1$ and $S_2$ connecting successive training samples. The right panel shows how a new interpolant $\vh$ can be constructed by projecting units in $\vf$ onto the line segments $S_1$ and $S_2$, which reduces the representation cost.}
    \label{fig:perturbation}
\end{figure}

Now we show that $\vh$ satisfies norm-ball interpolation constraints. By assumption (A1), the function $\vh_n^{(\ell)}$ is non-zero only on the norm-balls $B(\widetilde{\vx}_m^{(\ell)},\rho)$ for $m\geq n$. This implies that for all $\vy \in B(\widetilde{\vx}_n^{(\ell)};\rho)$ we have $\vh(\vy) = \sum_{m\leq n} \vh_n^{(\ell)}(\vy)$.

Observe that $\vh(B(\bm 0,\rho)) = \{\bm 0\}$ since all functions $\vh_n^{(\ell)}$ are zero on $B(\bm 0, \rho)$. Also, for all $\vy \in B(\widetilde{\vx}_1^{(\ell)}; \rho)$ we have $\mP_1^{(\ell)}\vy \in B(\widetilde{\vx}_1^{(\ell)}; \rho)$, and so
\[
\vh(\vy) = \vh_1^{(\ell)}(\vy) = \mP_{1}^{\ell}\vf_{1}^{(\ell)}(\mP_{1}^{(\ell)}\vy) = \mP_{1}^{(\ell)}\widetilde{\vx}_1^{(\ell)} = \widetilde{\vx}_1^{(\ell)}.
\]
Similarly, the only terms in $\vh$ active for $\vy \in B(\widetilde{\vx}_2^{(\ell)};\rho)$ are $\vh_1^{(\ell)}$ and $\vh_2^{(\ell)}$, for which $\vh_1^{(\ell)}(\vy) = \vx_1^{(\ell)}$, and $\vh_2^{(\ell)}(\vy) = \mP_{2}^{(\ell)}\vf_{2}^{(\ell)}(\mP_{2}^{(\ell)}\vy+\widetilde{\vx}_{1}^{(\ell)}) = \mP_{2}^{(\ell)}(\widetilde{\vx}_2^{(\ell)}-\widetilde{\vx}_1^{(\ell)}) = \widetilde{\vx}_2^{(\ell)}-\widetilde{\vx}_1^{(\ell)}$. This gives
\begin{align*}
\vh(\vy) & = \vh_1^{(\ell)}(\vy) + \vh_2^{(\ell)}(\vy) = \widetilde{\vx}_2^{(\ell)}.
\end{align*}
Iterating this procedure for all $n=1,...,N_\ell$ we see that if $\vy \in B(\widetilde{\vx}_n^{(\ell)}, \rho)$ then $\vh_n^{(\ell)}(\vy) = \widetilde{\vx}_n^{(\ell)}-\widetilde{\vx}_{n-1}^{(\ell)}$, and so $\vh(\vy) = \sum_{m \leq n}\vh_m^{(\ell)}(\vy) = \vx_n^{(\ell)}$, as claimed.

Following steps similar to the proof of Theorem \ref{thm:rays}, we can show that $R(\vh) \leq R(\vf^*)$ and with strict inequality if $\vh \neq \vf^*$. First, if any unit $\vu(\vy) = \va[\vw^\T\vy +b]_+$ in $\vf^*$ is active over balls centered on multiple rays $\gR \subset \{1,2,...,L\}$, then in the construction of $\vh$ the unit $\vu$ gets mapped to a sum of multiple units: 
\[
\sum_{\ell \in \gR}\vu_{n_\ell}^{(\ell)}(\vu_{n_\ell}^{(\ell)})^\T\va[\vw^\T\vu_{n_\ell}^{(\ell)}(\vu_{n_\ell}^{(\ell)})^\T(\vy+\widetilde{\vx}_{{n_\ell}-1}^{(\ell)})+b]_+
\]
for some $n_\ell$ with $1 \leq n_\ell \leq N_\ell$.
The contribution of the sum of these units to the representation cost of $\vh$ is less than or equal to $C = \sum_{\ell \in \gR}|(\vu_{n_\ell}^{(\ell)})^\T\va||(\vu_{n_\ell}^{(\ell)})^\T\vw|$. By assumption (A1), we know $(\vu_m^{(\ell)})^\T \vu_n^{(p)} < 0$ for all $p\neq \ell$ and for all $m,n$. Therefore, by Lemma \ref{lem:splitcost} we know $C < \|\va\|$. This shows $\vf^*$ cannot have any units active over balls centered on different rays, since otherwise $\vh$ has strictly smaller representation cost. Therefore, we have shown $\vf^*$ as $\vf^* = \sum_{\ell=1}^L \sum_{n=1}^{N_\ell} \vf_n^{(\ell)}$. Additionally, we see that $\vh_n^{(\ell)} = \vf_n^{(\ell)}$, i.e., all inner- and outer-layer weight vectors of units in $\vf_n^{(\ell)}$ are aligned with $\vu_n^{(\ell)}$, since otherwise the representation cost of $\vf^*$ could be reduced. Therefore, each  $\vf_n^{(\ell)}$ must have the form $\vf_n^{(\ell)}(\vy) = \vu_n^{(\ell)}\psi_n^{(\ell)}((\vu_n^{(\ell)})^\T\vy)$. Minimizing the $\psi_n^{(\ell)}$ separately under interpolation constraints, we arrive at the unique solution given in \eqref{eq:perturbed_solution}.
\end{proof}

\subsection{Simplex Data}
\subsubsection{Simplex with one obtuse vertex}
\label{app:obtusesimplex}
\begin{proof}[Proof of Proposition \ref{prop:obtuse_triangle}]
Consider training samples $\vx_1,\vx_2, ..., \vx_{N} \in \R^d$ whose convex hull is a $(N-1)$-simplex such that $\vx_1$ makes an obtuse angle with all other vertices, i.e., $(\vx_n-\vx_1)^\T\vx_1 < 0$ for all $n=2,...,N$. By Lemma \ref{lem:translate}, $\vf^*$ is a norm-ball interpolating min-cost solution if and only if $\vg^*(\vy) = \vf^*(\vy-\vx_1) +\vx_1$ is a norm-ball interpolating min-cost solution for the translated points $\bm 0,\vx_2-\vx_1,...,\vx_{N}-\vx_1 \in \R^d$. The latter configuration satisfies the hypotheses of Theorem \ref{thm:rays} with a single training sample per ray. Therefore, the min-cost solution $\vg^*$ of the translated points has units whose inner- and outer-layer weight vetors are aligned with $\vx_n-\vx_1$, $n=2,...,{N}$, and likewise for $\vf^*$, since it is translation of $\vg^*$.
\end{proof}

\subsubsection{Simplex with all acute vertices}\label{app:acutesimplex}
\begin{proof}[Justification of Conjecture \ref{conj:acute_triangle}]
For concreteness we focus on the case of three points $\vx_1,\vx_2,\vx_3 \in \R^d$ whose convex hull is an acute triangle, and let $B_1,B_2,B_3$ be open balls of radius $\rho$ centered at $\vx_1,\vx_2,\vx_3$ (respectively).

Let $\vf$ be any norm-ball interpolating min-cost solution, and let $\vf(\vy) = \sum_k \va_k[\vw_k^\T\vy +b_k] + \mV \vx +\vc$ be any minimal representative of $\vf$.

First, by properties of minimal representatives, none of the ReLU boundary sets $H_k = \{\vy \in \R^d : \vw_k^\T\vy + b_k = 0\}$ intersect any of the balls centered at the training samples.
Also, the active set of each unit in $\vf$ must contain either one or two norm-balls, since otherwise the unit is either inactive over all balls or active over all balls, in which cases the unit can be removed or absorbed into unregularized linear part while strictly reducing the representation cost. By ``reversing units'', as in the proof of Theorem \ref{thm:rays}, we may transform the parameterization in such a way that the active set of every unit contains exactly one ball, and do this without changing the representation cost. 

After this transformation, we may write
\[
\vf = \vf_1 + \vf_2 + \vf_3 + \bm \ell
\]
where $\vf_i$ is a sum of units active only on the ball $B_i$ and no others, and where $\bm \ell(\vy) = \mA \vy + \vb$ is an affine function. Then we have
\begin{align}
\forall \vy_1 \in B_1,\quad \vf(\vy_1) & = \vf_1(\vy_1) + \mA\vy_1 + \vb = \vx_1\\
\forall \vy_2 \in B_2,\quad \vf(\vy_2) & = \vf_2(\vy_2) + \mA\vy_2 + \vb = \vx_2\\
\forall \vy_3 \in B_3,\quad \vf(\vy_3) & = \vf_3(\vy_3) + \mA\vy_3 + \vb = \vx_3
\end{align}
and so,
\begin{align}
\forall \vy_1 \in B_1,\quad \vf_1(\vy_1) & = \vx_1 - \vb -\mA\vy_1\\
\forall \vy_2 \in B_2,\quad \vf_2(\vy_2) & = \vx_2 - \vb -\mA\vy_2 \\
\forall \vy_3 \in B_3,\quad \vf_3(\vy_3) & = \vx_3 - \vb -\mA\vy_3 
\end{align}

Finding the min-cost solution then amounts to minimizing of the representation cost of $\vf_1$, $\vf_2$, $\vf_3$ subject to the above constraints. This is made challenging by the fact that the constraints are coupled together by the parameters $\mA$ and $\vb$ of the affine part.
However, \emph{under the assumption $\mA = \bm 0$}, we prove below that the min-cost solution must have the conjectured form $\vf^*$ as given in \eqref{eq:acutefit}. However, since we cannot \emph{a priori} assume $\bm A = \bm 0$, this does not constitute a full proof.

Before proceeding, we give a lemma that will be used to lower bound $R(\vf)$ (assuming $\mA = \bm 0$).

\begin{lemma}\label{lem:new_lem}
Suppose $\vg$ is a sum of ReLU units such that $\vg = \bm 0$ on a closed convex region $C \subset \R^d$ and $\vg$ is constant and equal to $\vc$ on a closed ball $B \subset \R^d$, and $\vg$ has a minimal representative where all its units are active on $B$. Then 
\[
R(\vg) \geq \frac{2\|\vc\|}{\text{dist}(B,C)}
\]
where $\text{dist}(B,C) = \min_{\vy \in B, \vx\in C} \|\vy-\vx\|$.
\end{lemma}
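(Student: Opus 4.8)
The plan is to reduce the bound to a one-dimensional estimate along a carefully chosen line segment joining the region $C$ to the ball $B$. The key point is that $\vg$ is \emph{flat} — constant, hence with vanishing directional derivative — at \emph{both} ends of such a segment (it equals $\bm 0$ throughout $C$ and $\vc$ throughout $B$); this ``double flatness'' is exactly what upgrades a naive factor of $1$ into the claimed factor of $2$.

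Concretely, fix the minimal representative $\vg(\vy) = \sum_{k=1}^K \va_k[\vw_k^\T\vy + b_k]_+$ with $\|\vw_k\| = 1$ for all $k$ and $\sum_k\|\va_k\| = R(\vg)$, and write $D = \mathrm{dist}(B,C)$ (we may assume $D>0$, else there is nothing to prove). Given any $\delta>0$, I will pick $\vx_0 \in \mathrm{int}(C)$, a unit vector $\vu$, and $T \in (0,D+\delta)$ so that $\vy_0 := \vx_0 + T\vu$ lies in $\mathrm{int}(B)$. Such a choice exists: letting $\vy^*\in\partial B$ be the point of $B$ nearest $C$ and $\vx^*\in C$ a nearest point of $C$ to $\vy^*$, the ray emanating from $\vx^*$ toward the center of $B$ passes through $\vy^*$ and continues into $\mathrm{int}(B)$ (the segment from an exterior point to the center is a secant of the ball); after replacing $\vx^*$ by a nearby point of $\mathrm{int}(C)$ if necessary and stopping just past the entry point, we obtain $\vx_0,\vu,\vy_0,T$ as required with $T$ as close to $D$ as we like. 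Set $\vz(t) = \vx_0 + t\vu$ and $\psi(t) = \vg(\vz(t))$ on a slightly enlarged interval $[a,b]\supset[0,T]$ for which $\vz(t)\in C$ near $t=a$ and $\vz(t)\in B$ near $t=b$ (possible since $\vx_0\in\mathrm{int}(C)$ and $\vy_0\in\mathrm{int}(B)$). Then $\psi(a)=\bm 0$, $\psi(b)=\vc$, and $\psi'(a)=\psi'(b)=\bm 0$.

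Next I analyze the curvature of $\psi$. Each summand $\va_k[\vw_k^\T\vz(t)+b_k]_+$ is an affine function of $t$ composed with $[\,\cdot\,]_+$, so it contributes at most one kink, lying strictly inside $(a,b)$ (as $\psi$ is flat near both endpoints). Hence the distributional second derivative is the finite vector measure $\psi'' = \sum_{k\in\mathcal K}\va_k\,|\vw_k^\T\vu|\,\delta_{t_k}$, where $\mathcal K$ indexes the units that switch on the segment and $t_k$ is the switch location; its total variation obeys $\|\psi''\| = \sum_{k\in\mathcal K}\|\va_k\|\,|\vw_k^\T\vu| \le \sum_k\|\va_k\| = R(\vg)$, using $|\vw_k^\T\vu|\le\|\vw_k\|\,\|\vu\|=1$. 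Flatness at the endpoints gives $\int_{(a,b)}\psi'' = \psi'(b)-\psi'(a)=\bm 0$, while integrating twice (Fubini) gives $\vc = \psi(b)-\psi(a) = \int_{(a,b)}(b-\sigma)\,\psi''(d\sigma)$. Subtracting $\tfrac{a+b}{2}\int_{(a,b)}\psi''=\bm 0$ recenters the weight: $\vc = \int_{(a,b)}\big(\tfrac{a+b}{2}-\sigma\big)\psi''(d\sigma)$, so $\|\vc\| \le \tfrac{b-a}{2}\,\|\psi''\| \le \tfrac{b-a}{2}R(\vg)$. Since $b-a$ can be taken arbitrarily close to $D$, this yields $R(\vg)\ge 2\|\vc\|/D$.

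The step I expect to be the main obstacle is the geometric one: producing $\vx_0\in\mathrm{int}(C)$ and a direction whose ray enters $\mathrm{int}(B)$ at distance approaching $\mathrm{dist}(B,C)$, with attention to the case where the nearest point of $C$ lies on $\partial C$. This step also quietly relies on $C$ being full-dimensional — if $C$ were lower-dimensional, the hypothesis ``$\vg\equiv\bm 0$ on $C$'' would not constrain $\psi'$ at the left endpoint, and only the weaker bound $R(\vg)\ge\|\vc\|/D$ would follow. The remaining work — checking that no kink of $\psi$ sits on $\{a,b\}$ and the elementary manipulation of the atomic measure $\psi''$ — is routine.
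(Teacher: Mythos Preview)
Your argument is correct (modulo a harmless slip: to land on $\int(\tfrac{a+b}{2}-\sigma)\,\psi''$ you should subtract $\tfrac{b-a}{2}\int\psi''$, not $\tfrac{a+b}{2}\int\psi''$; both are zero and your final inequality $\|\vc\|\le\tfrac{b-a}{2}\|\psi''\|$ is right). But your route is genuinely different from the paper's.

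The paper never restricts to a line. Instead it exploits the extra hypothesis you ignored --- that in a minimal representative \emph{all} units are active on $B$ --- to conclude $\sum_k\va_k\vw_k^\T=\bm 0$ globally. At any point $\vy_0$ this lets one pair the active and inactive index sets so that $\|\sum_{k\in I_0}\va_k\vw_k^\T\|=\|\sum_{k\in I_1}\va_k\vw_k^\T\|$, whence $2\,\mathrm{Lip}(\vg)\le\sum_k\|\va_k\|=R(\vg)$; combining with $\mathrm{Lip}(\vg)\ge\|\vc\|/\mathrm{dist}(B,C)$ gives the result. So the factor of $2$ in the paper comes from the algebraic identity forced by constancy on $B$ alone, and the argument needs nothing about the dimension of $C$.

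Your factor of $2$ comes instead from the ``double flatness'' of the one-dimensional restriction $\psi$, which requires $\vg$ to be constant on an \emph{open} neighbourhood at \emph{both} ends of the segment. That is why you need $C$ full-dimensional --- a point you correctly flag --- while in exchange you never use the ``all units active on $B$'' hypothesis. In the paper's application $C$ is an intersection of closed half-spaces containing two balls, hence has nonempty interior, so your version suffices there. Your approach is more elementary (pure one-variable calculus on the restriction) and in fact proves a variant of the lemma with a different hypothesis trade-off; the paper's approach is shorter and uses the stated hypothesis more directly.
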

\begin{proof}
Let $\vg(\vy) = \sum_k \va_k[\vw_k^\T\vy+b_k]_+$ be a minimal representative where each unit is active on $B$.
For $\vg$ to be constant on $B$ it must be the case that $\sum_{k}\va_k\vw_k^\T = \bm 0$. Let $\vy = \vy_0$ be any value for which $\|\bm \partial \vg(\vy_0)\|$, the operator norm of the Jacobian of $\vg$, is maximized. Since $\vg$ is piecewise linear, we see that its Lipschitz constant $\text{Lip}(\vg) = \|\bm \partial \vg(\vy_0)\|$. Let $I_0$ be the set of indices of active units at $\vy_0$ so that $\bm \partial \vg(\vy_0) = \sum_{k\in I_0}\va_k\vw_k^\T$, and let $I_1$ be the complementary index set. Then because $\sum_{k}\va_k\vw_k^\T = \bm 0$, we have $\sum_{k\in I_0}\va_k\vw_k^\T = -\sum_{k\in I_1}\va_k\vw_k^\T$, and so $\|\sum_{k\in I_0}\va_k\vw_k^\T\| = \|\sum_{k\in I_1}\va_k\vw_k^\T\|$. Therefore,
\begin{multline*}
2\, \text{Lip}(\vg) = 2\|\bm\partial \vg(\vy_0)\| = 2\left\|\sum_{k\in I_0}\va_k\vw_k^\T\right\| = \left\|\sum_{k\in I_0}\va_k\vw_k^\T\right\| + \left\|\sum_{k\in I_1}\va_k\vw_k^\T\right\|\\ \leq \sum_k\|\va_k\|\|\vw_k\| = R(\vg).
\end{multline*}
Finally, 
\[
\text{Lip}(\vg) \geq \max_{\vx \in C, \vy\in B}\frac{\|\vg(\vy)-\vg(\vx)\|}{\|\vy-\vx\|} = \frac{\|\vc\|}{\min_{\vx \in C, \vy\in B} \|\vy-\vx\|} = \frac{\|\vc\|}{\text{dist}(B,C)}.
\]
Combining this with the previous inequality gives the claim.
\end{proof}

Now, returning to the proof of the main claim, for $n=1,2,3$, let $C_n$ be the closed convex region given by the intersection of all closed half-planes containing the balls $B_{j}$ and $B_{k}$, $j\neq n$ , and $k\neq n$, $j\neq k$. By assumption, $\vf_n$ vanishes on $C_n$ and $\vf_n$ is constant and equal to $\vx_n-\vb$ on the closed ball $B_n$. So, by Lemma \ref{lem:new_lem}, we see that
\[
R(\vf_n) \geq \frac{2\|\vx_n-\vb\|}{\delta_n}
\]
where $\delta_n = \text{dist}(B_n,C_n)$.
Therefore, for all $\vb \in \R^d$ we have
\[
R(\vf) = R(\vf_1) + R(\vf_2) + R(\vf_3) \geq \sum_{n=1}^3 \frac{2\|\vx_n-\vb\|}{\delta_n}
\]
Let $\overline{\vx} \in \R^d$ be the minimizer of 
\[
\min_{\vb \in \R^d} \sum_{n=1}^3 \frac{2\|\vx_n-\vb\|}{\delta_n}.
\]
Then plugging in $\vb = \overline{\vx}$ above, we have the lower bound
\[
R(\vf) \geq \sum_{n=1}^3 \frac{2\|\vx_n-\overline{\vx}\|}{\delta_n},
\]
which is independent of $\vb$.

Finally, simple calculations show that the conjectured min-cost solution $\vf^*$ specified in \eqref{eq:acutefit} has representation cost achieving this lower bound. Therefore, under the assumption $\mA = \bm 0$, $\vf^*$ is a min-cost solution.

\end{proof}

Now we prove that, in the special case where the convex hull of the training points is an equilateral triangle, the norm-ball interpolator $\vf^*$ identified in Conjecture \ref{conj:acute_triangle} is a min-cost solution (though not necessarily the  unique min-cost solution). In this case, we may also give $\vf^*$ a more explicit form, as detailed below.
\begin{proposition}\label{prop:acute_triangle}
Suppose the convex hull of the training points $\vx_1,\vx_2,\vx_3 \in \R^d$ is an equilateral triangle.
Assume the norm-balls $B_n := B(\vx_n, \rho)$ centered at each training point have radius $\rho < \|\vx_n-\vx_0\|/2$, $n=1,2,3$, where $\vx_0 = \frac{1}{3}(\vx_1+\vx_2+\vx_3)$ is the centroid of the triangle. 
Then a minimizer  $\vf^*$ of \eqref{eq:fitonballs} is given by\begin{equation}\label{eq:acutefit}
\vf^*(\vy) = \vu_1 \phi_1(\vu_1^\T(\vy-\vx_0)) + \vu_2 \phi_2(\vu_2^\T(\vy-\vx_0)) + \vu_3 \phi_3(\vu_3^\T(\vy-\vx_0)) + \vx_0,
\end{equation}
where $\phi_n(t) = s_n([t-a_n]_+-[t-b_n]_+)$ with $\vu_n = \frac{\vx_n-\vx_0}{\|\vx_n-\vx_0\|}$, $a_n = -\frac{1}{2}\|\vx_n-\vx_0\| + \rho$, $b_n = \|\vx_n-\vx_0\|-\rho$, and $s_n = \|\vx_n-\vx_0\|/(b_n-a_n)$.
\end{proposition}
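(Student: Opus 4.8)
The plan is to reduce to the $\mA = \bm 0$ case analyzed in the justification of Conjecture \ref{conj:acute_triangle}, then verify that the explicit $\vf^*$ in \eqref{eq:acutefit} both satisfies the norm-ball interpolation constraints and attains the lower bound $R(\vf) \geq \sum_{n=1}^3 2\|\vx_n-\overline{\vx}\|/\delta_n$ established there, where $\overline{\vx}$ is the minimizer of $\vb \mapsto \sum_n 2\|\vx_n-\vb\|/\delta_n$. The key simplification in the equilateral case is that symmetry forces $\overline{\vx} = \vx_0$, the centroid: by equilaterality the three quantities $\delta_n = \mathrm{dist}(B_n, C_n)$ are all equal (each $C_n$ is the intersection of half-planes on the far side of the opposite edge), so the weighted geometric median $\overline{\vx}$ coincides with the ordinary geometric median, which for an equilateral triangle is the centroid $\vx_0$. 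This also makes $\vz_n$ (the foot of the perpendicular from $\vx_n$ to the opposite edge) collinear with $\vx_n$ and $\vx_0$, so $\vu_n = \vv_n$ in the notation of Conjecture \ref{conj:acute_triangle}, which is why \eqref{eq:acutefit} here has the cleaner single-direction form; one should compute $\delta_n$ explicitly and check it equals $b_n - a_n = \tfrac{3}{2}\|\vx_n-\vx_0\| - 2\rho$ so that $s_n = \|\vx_n-\vx_0\|/\delta_n$ matches.

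First I would verify interpolation. After translating $\vx_0$ to the origin (using Lemma \ref{lem:translate}), the function is $\sum_{n=1}^3 \vu_n \phi_n(\vu_n^\T \vy)$ with the $\vu_n$ at mutual angles of $120^\circ$, hence $\vu_i^\T \vu_j = -\tfrac12$ for $i \neq j$. Fix the ball $B_1$ centered at $\vx_1 - \vx_0 = r_1 \vu_1$ with $r_1 = \|\vx_1 - \vx_0\|$. For $\vy \in B_1$: the term $\phi_1(\vu_1^\T\vy)$ should evaluate to $r_1$ because $\vu_1^\T\vy \in [r_1 - \rho, r_1 + \rho] \subseteq [b_1, \infty)$ wait — one must check the breakpoints: $\phi_1$ has slope-1-scaled kinks at $a_1 = -\tfrac12 r_1 + \rho$ and $b_1 = r_1 - \rho$, and $\phi_1(t) = r_1$ for $t \geq b_1$; since $r_1 - \rho \geq$ the left endpoint of $\vu_1^\T(B_1)$... actually $\vu_1^\T \vy$ ranges over $[r_1 - \rho, r_1+\rho]$, and $b_1 = r_1 - \rho$, so indeed $\phi_1 = r_1$ there (using $\phi_1$ saturates past $b_1$). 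For $n = 2,3$: $\vu_n^\T \vy$ for $\vy \in B_1$ ranges over $[-\tfrac12 r_1 - \rho, -\tfrac12 r_1 + \rho]$, and $a_n = -\tfrac12 r_n + \rho = -\tfrac12 r_1 + \rho$ is the right endpoint, so $\phi_n = 0$ there. Hence $\vf^*(\vy) = r_1 \vu_1 = \vx_1 - \vx_0$, i.e. $\vf^*$ maps $B_1$ to $\vx_1$ after untranslating; the condition $\rho < r_n/2$ is exactly what keeps the balls inside the relevant linear pieces and keeps the three balls separated. The origin ball $B(\bm 0, \rho)$ maps to $\bm 0$ since $|\vu_n^\T\vy| \leq \rho < a_n$... one must check $\rho < a_n = -\tfrac12 r_n + \rho$ is false — so instead note that on $[-a_n, a_n] \ni \vu_n^\T\vy$ we need $\phi_n = 0$, which requires $\rho \le a_n$; this fails, so I must recheck: actually for the centered problem the origin is a data point with its own ball and $\phi_n$ is an odd-type function $s_n([t-a_n]_+ - [t-b_n]_+)$ that is zero for $t \le a_n$; since $a_n = -\tfrac12 r_n + \rho$ could be negative when $\rho < \tfrac12 r_n$, and $\vu_n^\T\vy \in [-\rho,\rho]$, we need $\rho \le a_n = -\tfrac12 r_n+\rho$, impossible. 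So the origin ball does NOT map to zero under this formula — meaning the correct reading is that \eqref{eq:acutefit} omits a constraint at the origin, consistent with the conjecture statement which has $N$ points all on faces, no centroid sample. I would therefore present this Proposition as: the three balls $B_1, B_2, B_3$ only, with no ball at $\vx_0$, matching the hypotheses as literally written (which mention only $B_n$ for $n=1,2,3$).

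Second, for optimality I would decompose $\vf^* = \vf_1^* + \vf_2^* + \vf_3^*$ with $\vf_n^*(\vy) = \vu_n \phi_n(\vu_n^\T(\vy - \vx_0))$, each a rank-one piecewise-linear unit active only on $B_n$ and vanishing on the convex region $C_n$ (this needs checking: $\vf_n^*$ vanishes where $\vu_n^\T(\vy - \vx_0) \le a_n$, and $C_n$ lies on the far side of the edge opposite $\vx_n$, so one verifies $\sup_{\vy \in C_n} \vu_n^\T(\vy - \vx_0) \le a_n$ using equilateral geometry). Each $\vf_n^*$ has representation cost $2 s_n (b_n - a_n) / $ — actually $R(\vf_n^*) = 2\|\va\|$ where the unit $\va[\vw^\T\vy + b]_+$ form gives $\|\va\| = s_n$ for each of the two ReLU pieces, so $R(\vf_n^*) = 2 s_n = 2\|\vx_n - \vx_0\|/\delta_n$, matching the lower bound term-by-term. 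Summing gives $R(\vf^*) = \sum_n 2\|\vx_n - \vx_0\|/\delta_n$, which equals the lower bound from Lemma \ref{lem:new_lem} evaluated at $\vb = \vx_0 = \overline{\vx}$, so $\vf^*$ is a minimizer (over all $\vf$, not just those with $\mA = \bm 0$, because the lower bound $R(\vf) \ge \sum_n 2\|\vx_n - \vb\|/\delta_n \ge \sum_n 2\|\vx_n-\overline{\vx}\|/\delta_n$ holds for every admissible $\vf$ regardless of its affine part — this is the content established in the Conjecture's justification). The main obstacle is the geometric bookkeeping: correctly computing $\delta_n = \mathrm{dist}(B_n, C_n)$ in terms of $r_n = \|\vx_n - \vx_0\|$ and $\rho$, confirming it equals $b_n - a_n$, and confirming $\overline{\vx} = \vx_0$ — all of which rely on the equilateral symmetry and the assumption $\rho < r_n/2$; I would carry these out with an explicit coordinate placement of the equilateral triangle.
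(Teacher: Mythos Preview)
Your proposal has a genuine gap at the optimality step. You claim that the lower bound $R(\vf) \geq \sum_n 2\|\vx_n - \vb\|/\delta_n$ from Lemma~\ref{lem:new_lem} ``holds for every admissible $\vf$ regardless of its affine part --- this is the content established in the Conjecture's justification.'' But that is precisely what is \emph{not} established there: the justification of Conjecture~\ref{conj:acute_triangle} explicitly states that the argument only goes through \emph{under the assumption $\mA = \bm 0$}, and that ``since we cannot \emph{a priori} assume $\mA = \bm 0$, this does not constitute a full proof.'' Concretely, Lemma~\ref{lem:new_lem} requires $\vg$ to be \emph{constant} on the ball $B$; when $\mA \neq \bm 0$ the pieces $\vf_n$ satisfy $\vf_n(\vy) = \vx_n - \vb - \mA\vy$ on $B_n$, which is affine but not constant, so the lemma does not apply. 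Your plan therefore only re-proves the $\mA = \bm 0$ case already covered in the conjecture's justification, and does not establish the proposition.

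The paper's proof closes this gap by exploiting the equilateral symmetry in a way you have not used. Starting from an arbitrary min-cost solution $\vf$, it first symmetrizes with respect to a reflection $\mQ$ across the axis through $\vx_1$, and then with respect to the $120^\circ$ rotation $\mU$, producing a min-cost solution $\vh$ whose affine part $\mC$ commutes with both $\mQ$ and $\mU$. A short linear-algebra argument then forces $\mC = \lambda\mI$ for some scalar $\lambda$ (and zero constant part). This does \emph{not} yield $\mA = \bm 0$, so the paper cannot invoke Lemma~\ref{lem:new_lem} either; instead it projects each $\vh_n$ onto the line through $\vx_n$ and onto the perpendicular line, obtaining two univariate problems whose $R_0$-costs can be lower-bounded explicitly as functions of $\lambda$. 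Summing and minimizing over $\lambda$ gives $R(\vf) = R(\vh) \geq 6/(\beta-\alpha)$, which the explicit $\vf^*$ attains. The symmetrization step and the treatment of the residual scalar $\lambda$ are the missing ingredients in your proposal.
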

\begin{proof}
By translation and scale invariance of min-cost solutions, without loss of generality we may assume $\vx_1,\vx_2,\vx_3 \in \R^d$ are unit-norm vectors and mean-zero (i.e., the triangle centroid $\vx_0 = \bm 0$). In this case, the assumption on $\rho$ translates to $\rho < 1/2$. Additionally, it suffices to prove the claim in the case $d=2$. This is because, if $\vx_1,\vx_2,\vx_3\in \R^d$ are the vertices of an equilateral triangle whose centroid is at the origin, then these points are contained in a two-dimensional subspace $\gS \subset\R^d$. And if we let $\mP \in \R^{d\times 2}$ be a matrix whose columns are an orthonormal basis for $\gS$, then by Theorem \ref{thm:data_on_subspace}, $\vf$ is a min-cost solution if and only if $\vf(\vy) = \mP \vf_0(\mP^\T\vy)$,
where $\vf_0:\R^2\rightarrow\R^2$ is a min-cost solution under the constraints $\vf_0(B(\mP^\T\vx_n,\rho)) = \{\mP^\T\vx_n\}$ for all $n=1,2,3$. Therefore, the problem reduces to finding a min-cost solution of the projected points $\mP^\T\vx_1,\mP^\T\vx_2,\mP^\T\vx_3 \in \R^2$ whose convex hull is an equilateral triangle in $\R^2$.

So now let $\vf:\R^2\rightarrow\R^2$ be any min-cost solution under the assumption $\vx_1,\vx_2,\vx_3 \in \R^2$ are unit-norm, have zero mean, and $\rho < 1/2$. By reversing units as necessary, there exists a minimal representative of $\vf$ that can be put in the form $\vf(\vy) = \vf_1(\vy) + \vf_2(\vy) + \vf_3(\vy) + \mA\vy + \vc$, such that $\vf_n(\vy)$ is a sum of ReLU units, all of which are active on $B_n$, and all of which are inactive on $B_j$ for $j\neq n$.

Let $\mQ$ be the reflection matrix $\mQ = 2\vx_1\vx_1^\T-\mI$, which reflects points across the line spanned by $\vx_1$. In particular, $\vy \in B_1$ then $\mQ\vy \in B_1$ while if $\vy \in B_2$ then $\mQ \vy \in B_3$ (and vice versa). Also, $\mQ\vx_1 = \vx_1$, $\mQ\vx_2 = \vx_3$, and $\mQ\vx_3 = \vx_2$. Define $\vg(\vy) = \frac{1}{2}\left(\vf(\vy) + \mQ^{-1}\vf(\mQ \vy)\right)$. Then it is easy to check that $\vg$ satisfies interpolation constraints, and since $\mQ$ is unitary, we have $R(\vg) \leq \frac{1}{2}R(\vf) +  \frac{1}{2}R(\mQ\circ \vf\circ \mQ^{-1}) = \frac{1}{2}R(\vf) +  \frac{1}{2}R(\vf) = R(\vf)$. Furthermore, since $\vf$ is a min-cost solution, we must have $R(\vg) = R(\vf)$. Additionally, since none of the units making up $\vf$ are active over more than one ball, neither are the units making up $\vg$, which implies no pair of units belonging to $\vg$ combine to form an affine function. Therefore, we may write  $\vg(\vy) = \vg_1(\vy) + \vg_2(\vy) + \vg_3(\vy) + \mB\vy + \vv$, where $\mB = \mA + \mQ^{-1}\mA\mQ$, such that each $\vg_n$ is a sum of ReLU units, all of which are active on $B_n$, and all of which are inactive on $B_j$ for $j\neq n$.

Let $\mU$ be a rotation matrix by $120$ degrees such that $\vx_2 = \mU\vx_1$, $\vx_3 = \mU\vx_2$, and $\vx_1 = \mU\vx_3$.  Consider the symmetrized version of $\vg$ given by $\vh(\vy) := \frac{1}{3}(\vg(\vy) + \mU^{-1}\vg(\mU\vy) + \mU^{-2}\vg(\mU^2 \vy)$).  Since for all $\vy \in B_n$ we have $\mU\vy \in B_{n+1}$ (with indices understood modulo $3$), it is easy to verify that $\vh$ satisfies interpolation constraints. Also, since $\mU$ is unitary, we have $R(\vh) \leq R(\vg) \leq  R(\vf)$, which implies $R(\vh) = R(\vf)$ since $\vf$ is a min-cost solution. Again, since none of the units making up $\vg$ are active over more than one ball, neither are the units making up $\vh$, which implies no pair of units belonging to $\vh$ combine to form an affine function. And so, we may write $\vh(\vy) = \vh_1(\vy) + \vh_2(\vy) + \vh_3(\vy) + \mC + \vu$, such that each $\vh_n$ is a sum of ReLU units, all of which are active on $B_n$, and all of which are inactive on $B_j$ for $j\neq n$.

Observe that $\vu = \frac{1}{3}(\mI + \mU^{-1} + \mU^{-2})\vv = \bm 0$. Also, we have $\mC = \frac{1}{3}(\mB + \mU^{-1}\mB\mU + \mU^{-2}\mB\mU^2)$. This implies $\mC$ commutes with $\mU$, because it is easy to see that $\mC = \mU^{-1}\mC\mU$. Additionally,  $\mC$ commutes with $\mQ$, because it is easy to see $\mB = \mQ^{-1}\mB\mQ$, and by properties of rotations/reflections we have $\mU\mQ = \mQ\mU^{-1}$, which implies 
\begin{align*}
\mQ^{-1} \mC \mQ & = \frac{1}{3}(\mQ^{-1}\mB\mQ  + \mQ^{-1} \mU^{-1}\mB\mU\mQ + \mQ^{-1}\mU^{-2}\mB\mU^2\mQ)\\
& = \frac{1}{3}(\mQ^{-1}\mB\mQ  + \mQ^{-1} \mU^{-1}\mB\mU\mQ + \mQ^{-1}\mU^{-2}\mB\mU^2\mQ)\\
& = \frac{1}{3}(\mQ^{-1}\mB\mQ  + (\mU\mQ)^{-1}\mB(\mU\mQ) + (\mU\mQ)^{-1}\mU^{-1}\mB\mU(\mU\mQ))\\
& = \frac{1}{3}(\mQ^{-1}\mB\mQ   + \mU\mQ^{-1}\mB \mQ\mU^{-1} + \mU\mQ^{-1}\mU^{-1}\mB\mU \mQ\mU^{-1})\\
& = \frac{1}{3}(\mQ^{-1}\mB\mQ  +  \mU\mQ^{-1}\mB \mQ\mU^{-1} + \mU^2\mQ^{-1}\mB\mQ\mU^{-2})\\
& = \frac{1}{3}(\mB +  \mU\mB\mU^{-1} + \mU^2\mB\mU^{-2})\\
& = \frac{1}{3}(\mB + \mU^{-2}\mB\mU^2 + \mU^{-1}\mB\mU)\\
& = \mC,
\end{align*}
which shows $\mC$ commutes with $\mQ$.
Now  we show that any matrix $\mC$ that commutes with both $\mU$ and $\mQ$ is a scaled identity matrix. Let $\vx_1^\perp$ denote a unit-vector perpendicular to $\vx_1$, such that $\{\vx_1,\vx_1^\perp\}$ form an orthonormal basis for $\R^2$. First, we know $\mQ$ has eigenvectors $\vx_1$ and $\vx_1^\perp$ with eigenvalues $1$ and $-1$, respectively. And so $\mQ\mC\vx_1 = \mC\mQ\vx_1 = \mC\vx_1$, while $\mQ\mC\vx_1^\perp = \mC\mQ\vx_1^\perp = -\mC\vx_1$, which implies $\mC\vx_1 = a\vx_1$ and $\mC\vx_1^\perp = b\vx_1^\perp$ for some $a,b \in \R$, i.e., $\vx_1$ and $\vx_1^\perp$ are eigenvectors of $\mC$ with eigenvalues $a$ and $b$, respectively. Furthermore, we have $\mC\mU\vx_1 = \mU\mC\vx_1 = a\mU\vx_1$ and $\mC\mU\vx_1^\perp = \mU\mC\vx_1^\perp = b\mU\vx_1$. This implies $\mU\vx_1$ and $\mU\vx_1^\perp$ are also eigenvectors of $\mC$  with eigenvalues $a$ and $b$, respectively. But since $\vx_1$ and $\mU\vx_1$ are linearly independent, and likewise so are $\vx_1^\perp$ and $\mU\vx_1^\perp$, the only way this could be true is if $a = b$, i.e., every vector in $\R^2$ is an eigenvector of $\mC$ for the same eigenvalue $\lambda \in \R$, which implies $\mC = \lambda \mI$ for some $\lambda \in \R$.

\begin{figure}[ht!]
    \centering
    \includegraphics[width=0.3\textwidth]{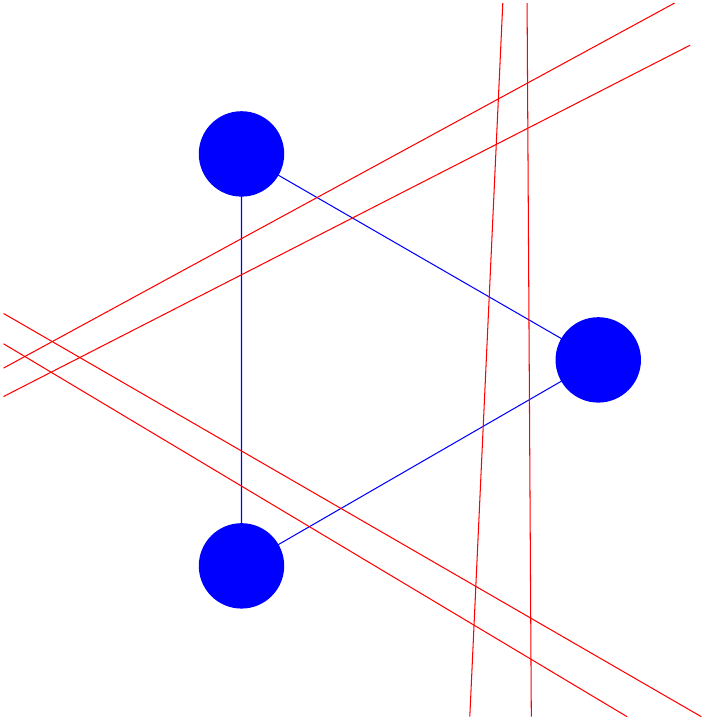}~
    \includegraphics[width=0.3\textwidth]{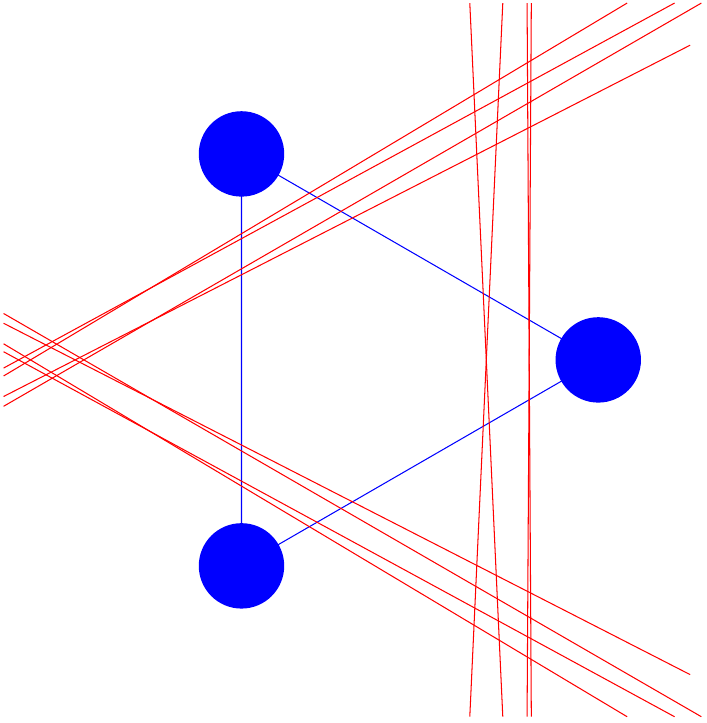}
    \includegraphics[width=0.3\textwidth]{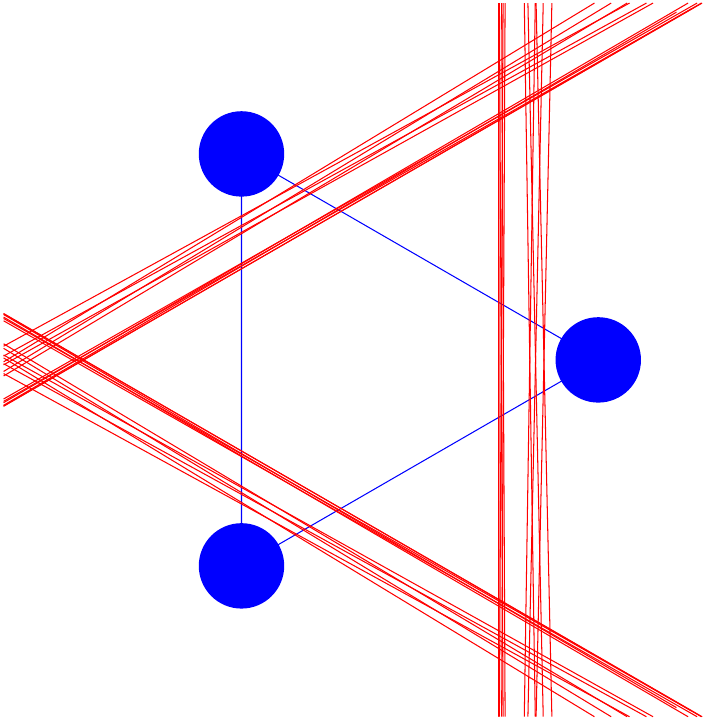}
    \caption{Illustration of ReLU boundaries (in red) of original interpolant $\vf$ (left), the function $\vg$ (middle) obtained by enforcing reflection symmetry, and the function $\vh$ (right) obtained by enforcing both reflection and rotation symmetry.}
    \label{fig:enter-label}
\end{figure}

Therefore, we have shown that every min-cost solution $\vf$ maps to a min-cost solution $\vh$ of the form
\[
\vh(\vy) = \vh_1(\vy) + \vh_2(\vy) + \vh_3(\vy) + \lambda \mI  
\]
for some $\lambda \in \R$, where each $\vh_n$ is a sum of ReLU units, all of which are active on $B_n$ and all of which are inactive on $B_j$, $j\neq n$. In particular, $\vh_n(\vy) = \vx_n -\lambda \vy$ for all $\vy \in B_n$ and $\vh_n(\vy) = \bm 0$ for all $\vy \in C_n$, where $C_n$ is the intersection of all half-planes containing the balls $B_j$, $j\neq n$.

This means we can write $\vh(\vy) = \sum_{k=1}^K \va_k [\vw_k^\T\vy + b_k]_+ + \lambda \mI$ and $\vh_n(\vy) = \sum_{k\in \gA_n} \va_k [\vw_k^\T\vy + b_k]_+$ for some index sets $\gA_1,\gA_2,\gA_3$ partitioning $\{1,...,K\}$ so that $R(\vh) = \sum_{k=1}^K \|\va_k\| = \sum_{n=1}^3 \sum_{k \in \gA_n} \|\va_k\|$. Let $R_0(\cdot)$ denote the representation cost of a function computed without an unregularized linear part, i.e., define $R_0$ analogously to $R$ except where the model class $\vh_{\bm\theta}$ in \eqref{eq:htheta} is constrained to have $\mV = \bm 0$. Then, since the realizations of the $\vh_n$ functions considered above do not have a linear part, we see that $\sum_{k\in \gA_n} \|\va_k\| \geq R_0(\vh_n)$ and so $R(\vh) = \sum_{n=1}^3 \sum_{k\in \gA_n} \|\va_k\| \geq R_0(\vh_1)+R_0(\vh_2)+R_0(\vh_3)$.

Now we show how lower bound $R_0(\vh_n)$ for all $n=1,2,3$. Let $\vx_n^\perp$ denote a unit-vector perpendicular to $\vx_n$, such that $\{\vx_n,\vx_n^\perp\}$ form an orthonormal basis for $\R^2$.
Consider the univariate functions $h_n^{\parallel}(t) := \vx_n^\T \vh_n(\vx_n t)$ and $h_n^\perp(t) := (\vx_n^\perp)^\T \vh_n(\vx_n^\perp t + \vx_n)$. Here $h_n^{\parallel}$ is the projection of $\vh$ onto the line spanned by $\vx_n$, and $h_n^{\perp}$ is a projection onto the line perpendicular to $\vx_n$ passing through the point $\vx_n$. In particular, by the constraints on $\vh_n$, we see that $h_n^{\parallel}$ and $h_n^{\perp}$ satisfy the constraints
\begin{equation}\label{eq:h_para_constraints}
h_n^{\parallel}(t) = 
\begin{cases}
0 & \text{if}~t < -1/2 + \rho\\
1-\lambda t & \text{if}~t > 1-\rho
\end{cases}
\end{equation}
and $h_n^{\perp}(t) = 
-\lambda t~\text{if}~|t|\leq \rho$.

\textbf{Claim 1:} For all $n=1,2,3$, $R_0(\vh_n) \geq R_0(h_n^{\parallel}) + R_0(h_n^\perp)$. \emph{Proof:} Let $\vh_n(\vy) = \sum_k \va_k[\vw_k^\T\vy + b_k]_+ + \vc$ be any realization of $\vh_n$, whose representation cost is $C = \sum_{k}\frac{1}{2}\left(\|\va_k\|^2 + \|\vw_k\|^2\right)$. Then realizations of $h_n^{\parallel}$ and $h_n^{\perp}$ are given by
\begin{align}
    h_n^{\parallel}(t) & = \vx_n^\T \vh_i(\vx_i t) = \sum_{k} (\vx_n^\T\va_k) [(\vx_n^\T\vw_k) t + b_k]_+ + \vx_n^\T\vc.\\
    h_n^\perp(t) & =  (\vx_n^\perp)^\T \vh_n(\vx_n^\perp t + \vx_n) = \sum_{k} ((\vx_n^\perp)^\T\va_k) [((\vx_n^\perp)^\T\vw_k) t + b_k + \vw_k^\T\vx_n]_+ + (\vx_n^\perp)^\T\vc,
\end{align}
whose representation costs $C^\parallel$ and $C^\perp$, respectively, are given by
\begin{align}
    C^\parallel & = \sum_k \frac{1}{2}\left((\vx_n^\T\va_k)^2 + (\vx_n^\T\vw_k)^2\right)\\
    C^\perp & = \sum_k \frac{1}{2}\left(((\vx_n^\perp)^\T\va_k)^2 + ((\vx_n^\perp)^\T\vw_k)^2\right)
\end{align}
and by the Pythagorean Theorem we see that $
C = C^\parallel + C^\perp$.
Therefore, $C \geq R_0(h_n^{\parallel}) + R_0(h_n^{\perp})$ and finally minimizing over all realizations of $\vh_n$ gives the claim.

\textbf{Claim 2:} For all $n=1,2,3$, $R_0(h_n^\parallel) \geq \left|\frac{1-\lambda \beta}{\beta-\alpha}\right| + \left|\frac{1-\lambda \alpha}{\beta-\alpha}\right|$, where $\beta = 1-\rho$ and $\alpha = -1/2 + \rho$. \emph{Proof:} By results in \cite{savarese2019infinite},  $R_0(h_n^\parallel) \geq R_0(p)$ where $p(t)$ is the function satisfying the same constraints as $h_n^\parallel$ given in \eqref{eq:h_para_constraints} while linearly interpolating over the interval $t\in[\alpha,\beta]$. From the formula $R_0(p) = \max\{\int |p''(t)|dt, |p'(\infty)+p'(-\infty)|\}$ as established in \cite{savarese2019infinite}, we can show directly that $R_0(p) = \left|\frac{1-\lambda \beta}{\beta-\alpha}\right| + \left|\frac{1-\lambda \alpha}{\beta-\alpha}\right|$, which gives the claimed bound.

\textbf{Claim 3:} For all $n=1,2,3$, $R_0(h_n^\perp) \geq |\lambda|$. \emph{Proof}: The function $q(t)$ with minimal $R_0$-cost satisfying the constraint $q(t) = -\lambda t$ for $|t|\leq \rho$ is a single ReLU unit plus a constant: $q(t) =  -\lambda[t+\rho]_++\lambda\rho$, which has $R_0$-cost $|\lambda|$.

Putting the above claims together, we see that
\begin{align}
R_0(\vh_n) & \geq \left|\frac{1-\lambda \beta}{\beta-\alpha}\right| + \left|\frac{1-\lambda \alpha}{\beta-\alpha}\right| + |\lambda|\\
& \geq \left|\frac{2-\lambda (\beta + \alpha)}{\beta-\alpha}\right|+ |\lambda|\\
& \geq \frac{2-|\lambda| (\beta + \alpha)}{\beta-\alpha}+ |\lambda|\\
& = \frac{2}{\beta-\alpha}+ \frac{-2\alpha}{\beta-\alpha}|\lambda|\\
& \geq \frac{2}{\beta-\alpha}
\end{align}
where in the last inequality we used the fact that $\frac{-2\alpha}{\beta-\alpha} > 0$ since $\alpha = -1/2 + \rho < 0$ and $\beta-\alpha = 3/2 -2\rho > 0$.

Therefore, $R(\vf) = R(\vh) \geq R_0(\vh_1) + R_0(\vh_2) + R_0(\vh_3) \geq \frac{6}{\beta-\alpha}$. Also, the function $\vf^*$ given by
\[
\vf^* = \vf_1^* + \vf_2^*+ \vf_3^*
\]
where
\[
\vf_n^*(\vy) = \frac{\vx_n}{\beta-\alpha}([\vx_n^\T\vy-\alpha]_+ - [\vx_n^\T\vy-\beta]_+)
\]
satisfies norm-ball interpolation constraints and $R(\vf^*) = \frac{6}{\beta-\alpha}$. Hence, $\vf^*$ is a min-cost solution.
\end{proof}
\newpage
\section{Additional simulations}
\label{appndix:additional simulations}
We train a one-hidden-layer ReLU network \textit{without} a skip connection using the setting we used in Figure \ref{Figure:neural_network_denoiser_online_offline}. As can be seen from Figure \ref{Figure:neural_network_denoiser_online_offline_without_skip} we get a similar result for the NN denoiser with and without the skip connection. Therefore in Appendix \ref{appndix:mnist simulations} we use one-hidden-layer ReLU network \textit{without} a skip connection.
\begin{figure}[ht] 
	\centering
	\includegraphics[height=5.5cm]{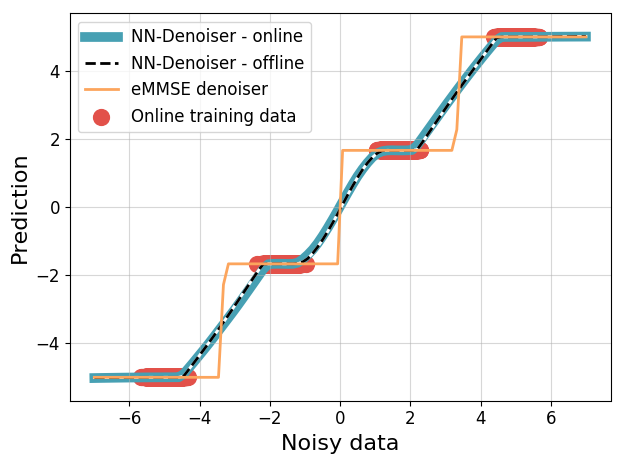}
	\caption{\label{Figure:neural_network_denoiser_online_offline_without_skip} \small{\textbf{NN denoiser vs eMMSE denoiser.} We trained a one-hidden-layer ReLU network on a denoising task. The clean dataset has four points equally spaced in the interval \([-5,5]\), and the noisy samples are generated by adding zero-mean Gaussian noise with \(\sigma = 1.5\). We use $\lambda=10^{-5}$ in both setting. The figure shows the denoiser output as a function of its input for: (1) NN denoiser trained online using \eqref{eq:pratical_update_rule} for $100K$ iterations, (2) NN denoiser trained offline using \eqref{eq:offline loss} with $M=9000$ and $20K$ epochs, and (3) the eMMSE denoiser \eqref{eq:eMMSE}.}}
\end{figure}
\subsection{MNIST} \label{appndix:mnist simulations}
We use the MNIST dataset to verify various properties. First, the offline and online solutions achieve approximately the same test MSE when trained on a subset of the MNIST dataset (Figure \ref{Figure:online setting vs offline setting}). Second, to show that the fact that NN denoiser does not converge to the eMMSE denoiser is not due to approximation error (Figure \ref{Figure:test loss vs layer width}). Lastly, to present the critical noise level in which representation cost minimizer $f^{*}_{\mathrm{1D}}$ has strictly lower MSE than the eMMSE, for all smaller noise levels  (Figure \ref{Figure:mse vs noise std}). 
\begin{figure}[ht]
	\centering
	\includegraphics[height=5.5cm]{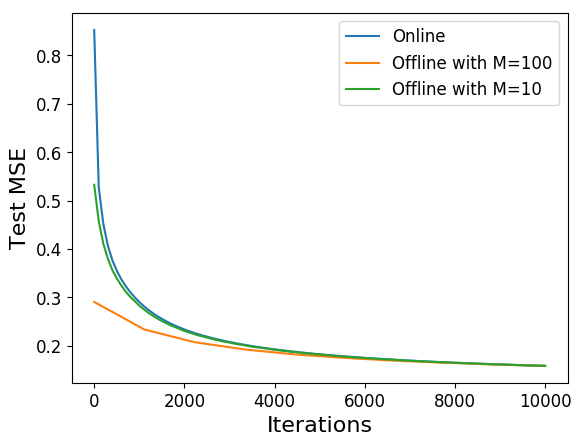}
	\caption{\textbf{Online setting vs offline setting for MNIST denoiser.} We train a one-hidden layer ReLU network on a subset of $N=100$ MNIST images for 10K iterations. We use a Gaussian noise with zero mean and \(\sigma = 0.1\).
    \label{Figure:online setting vs offline setting}}
\end{figure}
\begin{figure}[ht]
	\centering
	\includegraphics[height=5.5cm]{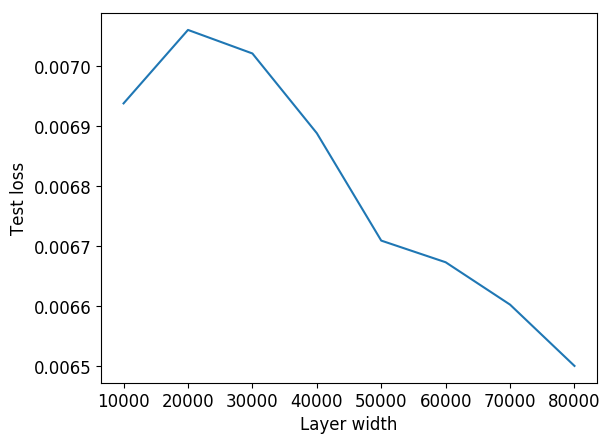}
	\caption{\textbf{Test loss vs. layer width for MNIST denoiser.} We train a one hidden layer ReLU network on MNIST denoiser task using \ref{eq:pratical_update_rule} for $93K$ iterations with a fixed learning rate. We use a Gaussian noise with zero mean and \(\sigma = 0.1\). The figure shows the test loss vs. layer width. \label{Figure:test loss vs layer width}}
\end{figure}
\begin{figure}[ht]
	\centering
	\includegraphics[height=5.5cm]{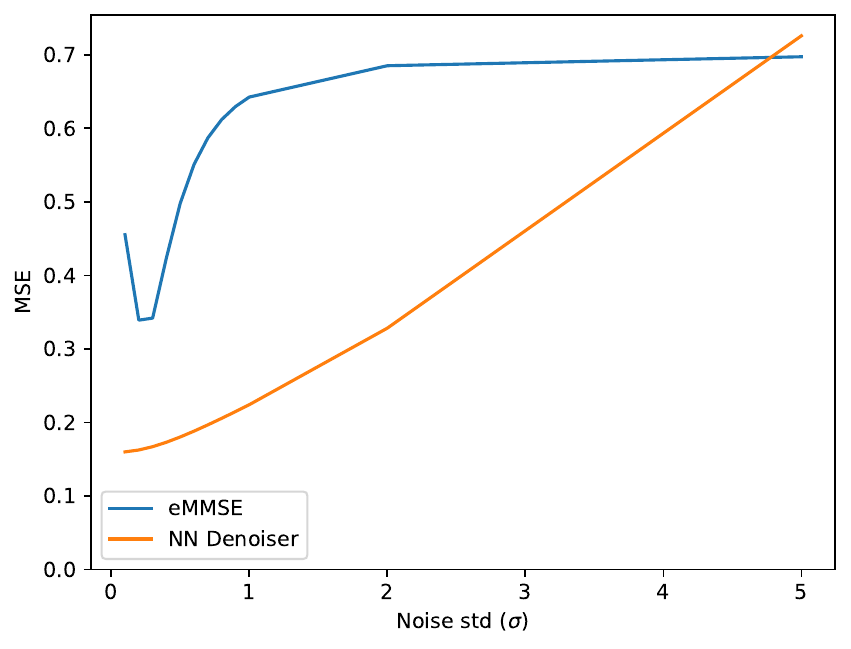}
	\caption{\textbf{MSE vs. Noise std.} We train a one-hidden layer ReLU
    network on a subset of $N = 100$ MNIST images (the range of each pixel is $[0,1]$) for 10K iterations with fixed LR. We use a Gaussian noise with zero mean. The figure shows the MSE vs noise std ($\sigma$) for NN denoiser (orange line) and for eMMSE denoiser (blue line).
    Note that the eMMSE is dependent on $\sigma$ \eqref{eq:eMMSE}. For low noise levels, the eMMSE output is one of the training set images. For moderate noise levels, the eMMSE output is a weighted sum of the training set images. For high noise levels, the eMMSE output is the mean of the training set images.}\label{Figure:mse vs noise std}
\end{figure}

\subsection{Three non-colinear training samples}
We show in Figures \ref{Figure:acute_mnist_triangle} and \ref{Figure:obtuse_mnist_triangle} that for $N = 3$ training points from the MNIST dataset that forming a triangle in $d = 2$ dimensions the empirical minimizer obtained using noisy samples and weight decay regularization agrees well with the form of the exact representation cost minimizer predicted by Proposition \ref{prop:obtuse_triangle} and Conjecture \ref{prop:acute_triangle}.
\begin{figure}[ht]
	\centering
	\includegraphics[height=8cm]{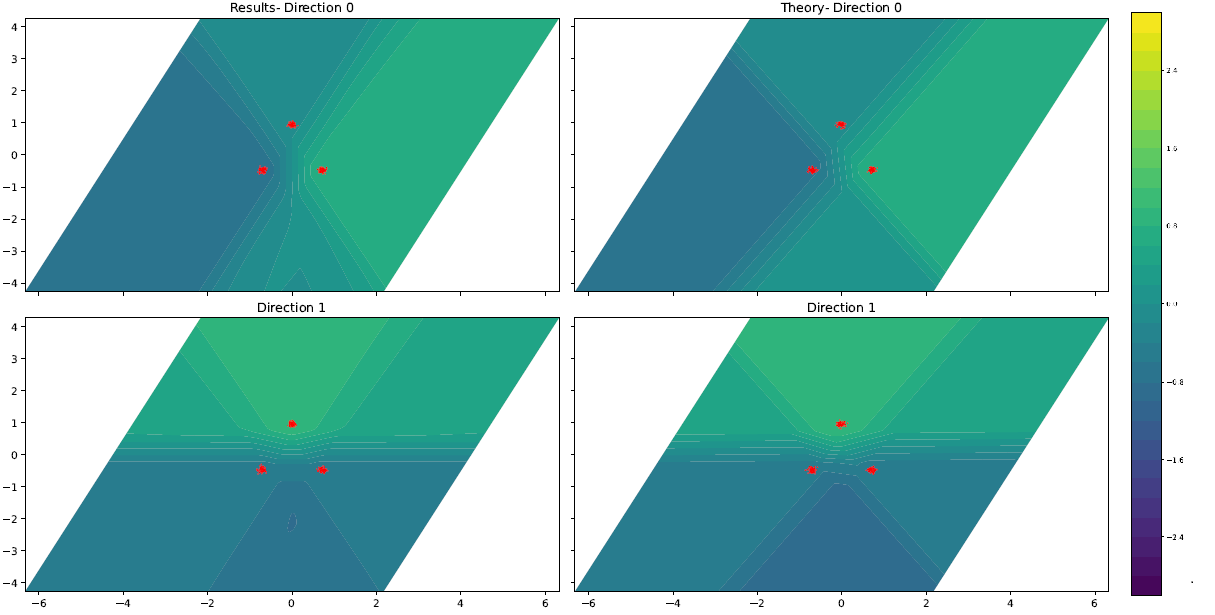}
	\caption{\textbf{Function space view of a denoiser, trained for $3$ MNIST data points (Acute Angle).} Here, we compare empirical results (left) with our theoretical results (right). We compare the function-space view for inputs from the data plane, with respect to the model output in each of the data directions. For the empirical results, we choose $3$ random MNIST data points under the same label, and under the condition that they form an acute triangle ($64^{\circ}$, for this figure). We trained a single-layer FC ReLU network with linear residual connection for 1M epochs, with weight decay of $1E-8$ (as described in our model), and ADAM optimizer with learning rate $1E-5$.
    \label{Figure:acute_mnist_triangle}}
\end{figure}
\begin{figure}[ht]
	\centering
	\includegraphics[height=8cm]{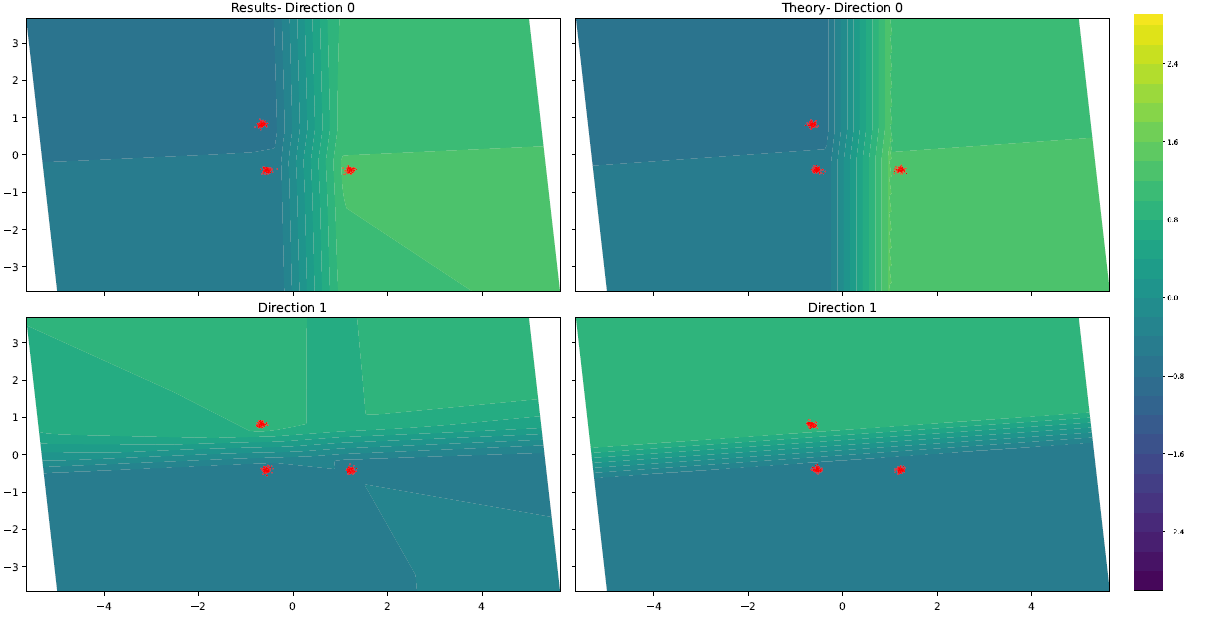}
	\caption{\textbf{Function space view of a denoiser, trained for 3 MNIST data points (Obtuse Angle).} Here, we compare empirical results (left) with our theoretical results (right). We compare the function-space view for inputs from the data plane, with respect to the model output in each of the data directions. For the empirical results, we added a single data point to the two previously chosen for Figure \ref{Figure:acute_mnist_triangle} under the condition that the three points form an obtuse triangle ($95^{\circ}$, for this figure). We trained a single-layer FC as described in Figure \ref{Figure:acute_mnist_triangle}. As predicted, the function we have converged to for data forming an obtuse triangle is noticeably different form the function we converged to when the data was forming an acute triangle.  \label{Figure:obtuse_mnist_triangle}}
\end{figure}
\subsection{Empirical validation of the
subspace assumption}
We validated that the following image datasets are (approximately) low rank: 
\begin{itemize}
    \item CIFAR10
    \item CINIC10
    \item Tiny ImageNet (a lower resolution version of ImageNet, enabling us to use SVD)
    \item BSD (a denoising benchmark composed from $128X1600$ patches of size $40X40$ cropped from $400$ images \citep{zhang2017beyond})
\end{itemize}
As can be seen from Table \ref{table:percentile of the energy} all the datasets that we used are (approximately) low rank.
\begin{table}
  \caption{We applied a Singular Value Decomposition (SVD) for each of the above datasets, and calculated the relative number of Singular Values (SV) needed to achieve a given percentile of the energy (for the average vector).}
  \label{table:percentile of the energy}
  \centering
  \begin{tabular}{llll}
    \toprule
    \cmidrule(r){1-2}
    Dataset     & 95\%     & 99\% \ & 99.9\% \\
    \midrule
    CIFAR10 & 0.8\%  & 7.5\% & 30\%     \\
     \midrule
    CINIC10 & 1\% & 23\% & 41\%      \\
     \midrule
    Tiny ImageNet  &  1.6\% & 20\% & 36\%   \\
    \midrule
    BSD & 0.1\% & 1.6\% & 4.5\%      \\
    \bottomrule
  \end{tabular}
\end{table}
\end{document}